\newcommand{\sH}{\mathcal{H}}
\newcommand{\cY}{\mathcal{Y}}
\newcommand{\bR}{\mathbb{R}}
\newcommand{\bE}{\mathbb{E}}
\DeclareMathOperator{\Cov}{Cov}
\newcommand{\oo}{O}
\renewcommand{\epsilon}{\varepsilon}
\theoremstyle{definition}
\newtheorem{assumption}[theorem]{Assumption}
\newtheorem{claim}[theorem]{Claim}
\newcommand{\xCen}{{\bf l2}}
\newcommand{\noDef}{{\bf noDefense}}
\newcommand{\Loss}{{\bf loss}}
\newcommand{\gUnc}{{\bf gradient}}
\newcommand{\gCen}{{\bf gradientCentered}}
\newcommand{\ransac}{{\bf RANSAC}}
\newcommand{\sever}{{\textsc{Sever}}}
\newcommand{\filter}{{\textsc{Filter}}}
\newcommand{\badfrac}{\epsilon}
\newcommand{\strconv}{\xi}
\newcommand{\strsmooth}{\beta}
\newcommand{\lip}{L}
\newcommand{\Ef}{\bar{f}}
\newcommand{\fbar}{\Ef}
\newcommand{\Efunc}{\Ef}
\newcommand{\fhat}{\hat{f}}
\newcommand{\hatf}{\fhat}
\newcommand{\goodset}{I_{\mathrm{good}}}
\newcommand{\Sgood}{S_{\mathrm{good}}}
\newcommand{\Sbad}{S_{\mathrm{bad}}}
\newcommand{\mugood}{\mu_{\mathrm{good}}}
\newcommand{\mubad}{\mu_{\mathrm{bad}}}
\newcommand{\dom}{\mathcal{H}}
\newcommand{\what}{\widehat{w}}
\newcommand{\wstar}{w^*}
\newcommand{\radius}{r}
\newcommand{\citep}{\cite}
\newcommand{\citet}{\cite}
\newcommand{\eqdef}{\stackrel{\text{def}}{=}}
\newcommand{\sL}{\mathcal{L}}
\def\colorful{0}
\newcommand{\new}[1]{{\color{red} #1}}
\newcommand{\new}[1]{{#1}}
\title{\sever: A Robust Meta-Algorithm for Stochastic Optimization}
\author {
Ilias Diakonikolas \thanks{Supported by NSF Award CCF-1652862 (CAREER) and a Sloan Research Fellowship.}\\
CS, USC \\
\tt{diakonik@usc.edu}
\and
Gautam Kamath \thanks{Work done while a graduate student at MIT and a Microsoft Research Fellow, as part of the Simons-Berkeley Research Fellowship program, and supported by NSF Award CCF-1617730, CCF-1650733, CCF-1741137, and ONR N00014-12-1-0999.} \\
CS, University of Waterloo \\ 
\tt{g@csail.mit.edu}
\and
Daniel M. Kane \thanks{Supported by NSF Award CCF-1553288 (CAREER) and a Sloan Research Fellowship.} \\
CSE \& Math, UCSD \\
\tt{dakane@cs.ucsd.edu}
\and
Jerry Li \thanks{Work done while a graduate student at MIT and a VMware Research Fellow, as part of the Simons-Berkeley Research Fellowship program, and supported by NSF Award CCF-1453261 (CAREER), CCF-1565235, a Google Faculty Research Award, and an NSF Graduate Research Fellowship.}\\
Microsoft Research AI \\
\tt{jerryzli@mit.edu}
\and
Jacob Steinhardt \thanks{Work done while a graduate student at Stanford University, and supported by a Fannie \& John Hertz Foundation Fellowship, a NSF Graduate Research Fellowship, and a Future of Life Institute grant.}\\
Statistics, UC Berkeley \\
\tt{jsteinha@stanford.edu}
\and
Alistair Stewart \thanks{Work done while a postdoc at USC, and supported by a USC startup grant.}\\
Web3 Foundation \\
\tt{stewart.al@gmail.com}
}
\tikzset{%
            base/.style = {rectangle, rounded corners, draw=black,
                           minimum width=1.8cm, minimum height=1.15cm,
                           text centered, font=\sffamily},
}
\begin{document}
\maketitle \footnotetext{Authors are in alphabetical order.}
\footnotetext{Code is available at~\url{https://github.com/hoonose/sever}.}

\begin{abstract}
In high dimensions, most machine learning methods are brittle to even a small 
fraction of structured outliers. To address this, we introduce a new 
meta-algorithm that can take in a \emph{base learner} such as least squares or stochastic 
gradient descent, and harden the learner to be resistant to outliers.
Our method, \sever, possesses strong theoretical guarantees yet is also highly scalable---beyond 
running the base learner itself, it only requires computing the top singular vector of a certain
$n \times d$ matrix. 
We apply \sever{} on a drug design dataset and a spam classification dataset, and 
find that in both cases it has substantially greater robustness than several baselines.
On the spam dataset, with $1\%$ corruptions, we achieved $7.4\%$ test error, compared to $13.4\%-20.5\%$ for the baselines, and $3\%$ error on the uncorrupted dataset.
Similarly, on the drug design dataset, with $10\%$ corruptions, we achieved $1.42$ mean-squared error test error, compared to $1.51-2.33$ for the baselines, and $1.23$ error on the uncorrupted dataset. 
\end{abstract}


\section{Introduction}

Learning in the presence of outliers is a ubiquitous challenge in machine learning; 
nevertheless, most machine learning methods are very sensitive to outliers in 
high dimensions. The focus of this work is on designing algorithms that are 
outlier robust while remaining competitive in terms of accuracy and running time.

We highlight two motivating applications. The first is biological data (such as gene expression 
data), where mislabeling or measurement errors can create systematic outliers \citep{RP-Gen02,Li-Science08} that require 
painstaking manual effort to remove \citep{Pas-MG10}. Detecting outliers in such settings is often important
either because the outlier observations are of interest themselves 
or because they might contaminate the downstream statistical analysis.
The second motivation is machine learning security, where outliers can be introduced through 
\emph{data poisoning} attacks \citep{barreno2010security} in which an adversary inserts fake data into the training set 
(e.g., by creating a fake user account).
Recent work has shown that for high-dimensional datasets, even a small fraction 
of outliers can substantially degrade the learned model \citep{biggio2012poisoning,
newell2014practicality,koh2017understanding,steinhardt2017certified,koh2018stronger}.

Crucially, in both the biological and security settings above, the outliers are not ``random'' 
but are instead highly correlated, and could have a complex internal structure that is difficult 
to model. This leads us to the following conceptual question underlying the present work:
\emph{Can we design training algorithms that are robust to the presence of an 
$\epsilon$-fraction of arbitrary (and potentially adversarial) outliers?}


Estimation in the presence of outliers is a prototypical goal in robust 
statistics and has been systematically studied since the pioneering work of Tukey \citet{tukey1960survey}.
Popular methods include RANSAC \citep{fischler1981random}, minimum covariance determinant \citep{rousseeuw1999fast}, 
removal based on $k$-nearest neighbors \citep{breunig2000lof}, and Huberizing the loss \citep{owen2007robust} 
(see \citet{hodge2004survey} for a comprehensive survey).
However, these classical methods either break down in high dimensions, or only handle ``benign'' 
outliers that are obviously different from the rest of the data (see Section~\ref{sec:related-work} 
for futher discussion of these points).

Motivated by this, 
recent work in theoretical computer science has developed efficient robust estimators
for classical problems such as linear classification \citep{klivans2009learning,awasthi2014power}, 
mean and covariance estimation \citep{DKKLMS16, LaiRV16}, clustering \citep{CSV17}, 
and regression \citep{BhatiaJK15, BhatiaJKK17, BDLS17}. 
%
Nevertheless, 
the promise of practical high-dimensional 
robust estimation is yet to be realized; indeed, the aforementioned results generally suffer from one of 
two shortcomings--either they use sophisticated convex optimization algorithms 
that do not scale to large datasets, or they are tailored to specific problems of interest 
or specific distributional assumptions on the data, and hence do not have good accuracy 
on real data.

In this work, we address these shortcomings. We propose an algorithm, \sever, that is:
\begin{itemize}
\item {\bf Robust:} it can handle arbitrary outliers with only a small increase in error, even in high dimensions.
\item {\bf General:} it can be applied to most common learning problems including regression and classification, 
and handles non-convex models such as neural networks.
\item {\bf Practical:} the algorithm can be implemented with standard machine learning libraries.
\end{itemize}
\begin{figure*}[t!]
\centering
\begin{tikzpicture}[node distance=2.7cm,
    every node/.style={fill=white, font=\sffamily}, align=center]
  \node (Data)            [base]                          {Data \\ $(X,Y)$ };
  \node (FitModel)        [base, right of=Data]          {Fit Model};
  \node (SVD)             [base, right=2.45cm of FitModel]   {SVD};
  \node (Histogram)       [base, right=2.4cm of SVD]   { \includegraphics[height=1.1cm]{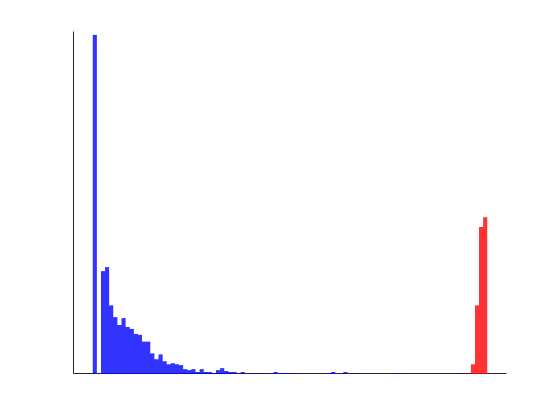} };
  \node (Params)          [base, right of=Histogram] {$\theta$};
  \draw[->]             (Data) -- (FitModel);
  \draw[-]     (FitModel) edge[->] node {\footnotesize extract \\ \footnotesize gradients} (SVD);
  \draw[->]      (SVD) edge[->] node {\footnotesize compute \\ \footnotesize scores} (Histogram);
  \draw[->]      (Histogram) -- (Params);
  \draw[-]      (Histogram.north) edge[bend right,->] node {\footnotesize remove outliers \\ \footnotesize and re-run} (FitModel.north);
  \end{tikzpicture}
  \caption{Illustration of the \sever{} pipeline. We first use any machine learning algorithm to fit a model to the data. Then, we extract gradients for each data point at the learned parameters, and take the singular value decomposition of the gradients. We use this to compute an outlier score for each data point. If we detect outliers, we remove them and re-run the learning algorithm; otherwise, we output the learned parameters.} 
  \label{fig:pipeline}
\end{figure*}

At a high level, our algorithm (depicted in Figure~\ref{fig:pipeline} and 
described in detail in Section~\ref{sec:alg}) is a simple ``plug-in'' outlier detector--first, 
run whatever learning procedure would be run normally (e.g., least squares in the case of linear regression). 
Then, consider the matrix of gradients at the optimal parameters, and compute the top singular 
vector of this matrix. Finally, remove any points whose projection onto this singular vector 
is too large (and re-train if necessary). 

Despite its simplicity, our algorithm possesses strong theoretical guarantees:
As long as the real (non-outlying) data is not too heavy-tailed, 
\sever{} is provably robust to outliers--see Section~\ref{sec:theory} for detailed 
statements of the theory. At the same time, we show that our algorithm 
works very well in practice and outperforms a number of natural baseline outlier detectors. 
In line with our original motivating biological and security applications, 
we implement our method on two tasks--a linear regression task 
for predicting protein activity levels 
\citep{olier2018qsar}, 
and a spam classification task based on emails from the Enron corporation 
\citep{metsis2006spam}. Even with a small fraction of outliers, baseline methods perform 
poorly on these datasets; for instance, on the Enron spam dataset with 
a $1\%$ fraction of outliers, baseline errors range from $13.4\%$ to $20.5\%$, 
while \sever{} incurs only $7.3\%$ error (in comparison, the error is $3\%$ in the absence 
of outliers).
Similarly, on the drug design dataset, with $10\%$ corruptions, \sever{} achieved $1.42$ mean-squared error test error, compared to $1.51$-$2.33$ for the baselines, and $1.23$ error on the uncorrupted dataset.



\subsection{Comparison to Prior Work}
\label{sec:related-work}

As mentioned above, the myriad classical approaches to robust estimation perform 
poorly in high dimensions or in the presence of worst-case outliers. For instance, 
RANSAC \citep{fischler1981random} works by removing enough points at random that no outliers remain with 
decent probability; since we need at least $d$ points to fit a $d$-dimensional model, this requires the 
number of outliers to be $\oo(1/d)$. 
$k$-nearest neighbors \citep{breunig2000lof} similarly 
suffers from the curse of dimensionality when $d$ is large. 
The minimum covariance determinant estimator \citep{rousseeuw1999fast} 
only applies when the number of data points $n$ exceeds $2d$, which does not hold for 
the datasets we consider (it also has other issues such as computational intractability). 
A final natural approach is to limit the effect of points with large loss (via e.g.~Huberization 
\citep{owen2007robust}), but as \citet{koh2018stronger} show (and we confirm in our experiments), 
correlated outliers often have \emph{lower} loss than the real data under the learned model.

These issues have motivated work on high-dimensional robust statistics going back to 
Tukey \citep{Tukey75}. However, it was not until much later that efficient algorithms with 
favorable properties were first proposed. 
\citep{klivans2009learning} gave the first efficient algorithms
for robustly classification 
under the assumption that the distribution of the good data is isotropic and log-concave. 
Subsequently,~\cite{awasthi2014power} obtained an improved and nearly optimal robust algorithm 
for this problem.
Two concurrent works~\citep{DKKLMS16, LaiRV16} gave the first efficient robust 
estimators for several other tasks including mean and covariance estimation.
There has since been considerable study of 
algorithmic robust estimation in high dimensions,
including learning graphical models~\citep{DiakonikolasKS16b}, 
understanding computation-robustness tradeoffs~\citep{DKS17-sq, DKKLMS17}, 
establishing connections to PAC learning~\citep{DKS17-nasty}, 
tolerating more noise by outputting a list of hypotheses~\citep{CSV17, meister2017data, DKS17-mixtures}, 
robust estimation of discrete structures~\citep{steinhardt2017clique,qiao2017learning,steinhardt2018resilience},
and robust estimation via sum-of-squares~\citep{KS17, HL17, KStein17}. 

Despite this progress, these recent theoretical papers typically focus on designing specialized algorithms for
specific settings (such as mean estimation or linear classification for specific families of distributions) 
rather than on designing general algorithms. 
The only exception is \citep{CSV17}, which provides a robust meta-algorithm for stochastic convex optimization in a 
similar setting to ours. However, that algorithm (i) requires solving a large semidefinite program
and (ii) incurs a significant loss in performance relative to standard training {\em even in the absence of outliers}.
On the other hand, \citep{DKK+17} provide a practical implementation of 
the robust mean and covariance estimation algorithms of \citep{DKKLMS16}, 
but do not consider more general learning tasks.

A number of papers~\citep{nasrabadi2011robust, nguyen2013exact, BhatiaJK15, BhatiaJKK17} 
have proposed efficient algorithms for a type of robust linear regression.  
However, these works consider a restrictive corruption model 
that only allows adversarial corruptions to the responses (but not the covariates).
On the other hand, \citep{BDLS17} studies (sparse) linear regression and, more broadly, 
generalized linear models (GLMs) under a robustness model very similar 
to the one considered here. The main issues with this algorithm are that (i) 
it requires running the ellipsoid method (hence does not scale) and (ii) it crucially assumes 
Gaussianity of the covariates, which is unlikely to hold in practice.

In a related direction, \citet{steinhardt2017certified} provide a method for analyzing outlier 
detectors in the context of linear classification, either certifying robustness or 
generating an attack if the learner is not robust.
The outlier detector they analyze is brittle in high dimensions, 
motivating the need for the robust algorithms presented in the current work. 
Later work by the same authors showed how to bypass a number of common outlier detection methods 
\citep{koh2018stronger}. We use these recent strong attacks as part of our evaluation and 
show that our algorithm is more robust.

\paragraph{Concurrent Works.} \cite{PSBR18} independently obtained a robust algorithm
for stochastic convex optimization by combining gradient descent with robust mean estimation. 
This algorithm is similar to the one we present in Appendix~\ref{sec:general-algo}, and in that section we discuss in more detail the comparison between these two techniques.
For the case of linear
regression,~\cite{DKoS18} provide efficient robust algorithms with near-optimal error guarantees under various distributional
assumptions and establish matching computational-robustness tradeoffs.


\section{Framework and Algorithm}
\label{sec:theory}

In this section, we describe our formal framework as well as the \sever{} algorithm.

\subsection{Formal Setting}

We will consider stochastic optimization tasks, where there is some true
distribution $p^{\ast}$ over functions $f : \sH \to \bR$, and our goal is to
find a parameter vector $w^{\ast} \in \sH$ minimizing $\overline{f}(w) \eqdef \bE_{f \sim p^{\ast}}[f(w)]$.
Here we assume $\sH \subseteq \bR^d$ is a space of possible parameters.
As an example, we consider linear regression with squared loss, where $f(w) = \frac{1}{2}(w \cdot x - y)^2$
for $(x,y)$ drawn from the data distribution; or support vector machines with hinge loss, where
$f(w) = \max\{ 0, 1 - y (w \cdot x) \}$.
We will use the former as a running example for the theory part of the body of this paper.

To help us learn the parameter vector $w^{\ast}$, we have access to a \emph{training set}
of $n$ functions $f_{1:n} \eqdef \{f_1, \ldots, f_n\}$. (For linear regression, we would have
$f_i(w) = \frac{1}{2}(w \cdot x_i - y_i)^2$, where $(x_i, y_i)$ is an observed data point.)
However, unlike the classical (uncorrupted) setting where we assume that
$f_1, \ldots, f_n \sim p^{\ast}$, we allow for an $\epsilon$-fraction of the points to be
arbitrary outliers:

\begin{definition}[$\epsilon$-contamination model] \label{def:eps-contam}
Given $\eps > 0$ and a distribution $p^{\ast}$ over functions $f : \sH \to \bR$, data is generated as follows:
first, $n$ clean samples $f_1, \ldots, f_{n}$ are drawn from $p^{\ast}$.
Then, an \emph{adversary} is allowed to inspect the samples and replace
any $\epsilon n$ of them with arbitrary samples.
The resulting set of points is then given to the algorithm. 
\new{We will call such a set of samples {\em $\eps$-corrupted (with respect to $p^{\ast}$)}.}
\end{definition}

In the $\epsilon$-contamination model, the adversary is allowed to both add and remove points.
Our theoretical results hold in this strong robustness model.
Our experimental evaluation uses corrupted instances
in which the adversary is only allowed to add corrupted points. Additive corruptions
essentially correspond to Huber's contamination model~\citet{Huber64}
in robust statistics.



Finally, we will often assume access to a black-box learner, which we denote
by $\sL$, which takes in functions $f_1, \ldots, f_n$ and outputs a
parameter vector $w \in \sH$. We want to stipulate that $\sL$ approximately
minimizes $\frac{1}{n} \sum_{i=1}^n f_i(w)$. For this purpose, we introduce the
following definition:

\begin{restatable}[$\gamma$-approximate critical point]{definition}{approxcrit}
\label{def:approx-crit}
Given a function $f:\dom\rightarrow \R$, a $\gamma$-approximate critical point of $f$,
is a point $w\in \dom$ so that for all unit vectors $v$ where $w+\delta v\in \dom$
for arbitrarily small positive $\delta$, we have that $v\cdot \nabla f(w) \geq -\gamma$.
\end{restatable}

Essentially, the above definition means that the value of $f$ cannot be decreased
much by changing the input $w$ locally, while staying within the domain.
The condition enforces that moving in any direction $v$ either causes us
to leave $\dom$ or causes $f$ to decrease at a rate at most $\gamma$.
\new{It should be noted that when $\dom = \R^d$, our above notion of approximate
critical point reduces to the standard notion of approximate stationary point
(i.e., a point where the magnitude of the gradient is small).}

We are now ready to define the notion of a \emph{$\gamma$-approximate} learner:

\begin{restatable}[$\gamma$-approximate learner]{definition}{approxlearner}
\label{def:approx-learner}
A learning algorithm $\sL$ is called \emph{$\gamma$-approximate} if, for any
functions $f_1, \ldots, f_n : \dom \to \bR$ each bounded below on a closed domain $\dom$, the output ${w} = \sL(f_{1:n})$ of
$\sL$ is a $\gamma$-approximate critical point of $f(x):=\frac{1}{n}\sum_{i=1}^n f_i(x)$.
\end{restatable}
In other words, $\sL$ always finds an approximate critical point of the empirical
learning objective. We note that most common learning algorithms (such as stochastic gradient
descent) satisfy the $\gamma$-approximate learner property.
For our example of linear regression, gradient descent could be performed using the gradient $\frac1n\sum_{i=1}^n x_i(w \cdot x_i - y_i)$.
However, in some cases, a more efficient and direct method is to set the gradient equal to 0 and solve for $w$.
In our linear regression example, this gives us a closed form solution for the optimal parameter vector.

\subsection{Algorithm and Theory}
\label{sec:alg}

As outlined in Figure~\ref{fig:pipeline}, our algorithm works by post-processing
the gradients of a black-box learning algorithm.
The basic intuition is as follows: we want to ensure that the outliers do not have a
large effect on the learned parameters. Intuitively, for the outliers to have such
an effect, their corresponding gradients should be (i) large in magnitude and (ii)
systematically pointing in a specific direction. We can detect this via singular value
decomposition--if both (i) and (ii) hold then the outliers should be responsible for a
large singular value in the matrix of gradients, which allows us to detect and remove them.


This is shown more formally via the pseudocode in Algorithm~\ref{alg:sever}.


\begin{algorithm}[ht]
   \caption{\sever{}$(f_{1:n}, \sL, \sigma)$}
   \label{alg:sever}
\begin{algorithmic}[1]
\STATE {\bfseries Input:} Sample functions $f_1, \ldots, f_n : \dom \to \bR$, bounded below on a closed domain $\dom$, $\gamma$-approximate learner $\sL$, and parameter $\sigma \in \R_+$.
\STATE Initialize $S \gets \{1,\ldots,n\}$.
\REPEAT
  \STATE ${w} \gets \sL(\{f_i\}_{i \in S})$. $\triangleright$ Run approximate learner on points in $S$.
  \STATE Let $\widehat{\nabla} = \frac{1}{|S|} \sum_{i\in S} \nabla f_i(w)$.
  \STATE Let $G = [\nabla f_i({w}) - \widehat{\nabla}]_{i \in S}$ be the $|S| \times d$ matrix of centered gradients.
  \STATE Let $v$ be the top right singular vector of $G$.
  \STATE Compute the vector $\tau$ of \emph{outlier scores} defined via
  $\tau_i = \left((\nabla f_i({w}) - \widehat{\nabla}) \cdot v\right)^2$.
  \STATE $S' \gets S$
  \STATE \label{filter-step} $S \gets \filter(S', \tau, \sigma)$ $\triangleright$ Remove some $i$'s
  with the largest scores $\tau_i$ from $S$; see Algorithm~\ref{alg:filter}.

\UNTIL{$S = S'$.} \label{until-step}
\STATE Return $w$.
\end{algorithmic}
\end{algorithm}

\begin{algorithm}[ht]
   \caption{\filter$(S, \tau, \sigma)$}
   \label{alg:filter}
\begin{algorithmic}[1]
\STATE {\bfseries Input:} Set $S \subseteq [n]$, vector $\tau$ of outlier scores, and parameter $\sigma \in \R_+$.
  \STATE If $\sum_i \tau_i \leq c \cdot \sigma$, for some constant $c>1$, 
  return $S$ $\triangleright$ We only filter out points if the variance is larger than an appropriately chosen threshold.

  \STATE Draw $T$ from the uniform distribution on $[0,\max_i \tau_i]$.
  \STATE Return $\{i \in S: \tau_i < T \}$.
\end{algorithmic}
\end{algorithm}


For concreteness, we describe how the algorithm would work for our running example of linear regression.
First, we would solve for the optimal parameter vector on the dataset, disregarding issues of robustness. 
Specifically, we let $\hat w$ be the solution to $\sum_{i=1}^n x_i (\hat w \cdot x_i - y_i) = 0$: setting the gradient equal to $0$ will give us a critical point, as desired.
We compute the average gradient, $\frac1n\sum_{i=1}^n x_i (\hat w \cdot x_i - y_i),$ and use this to compute the matrix of centered gradients $G$.
That is, the $j$th row of $G$, $G_j$, is the vector $x_j (\hat w \cdot x_j - y_j) - \frac1n\sum_{i=1}^n x_i (\hat w \cdot x_i - y_i)$.
We compute the top right singular vector of $G$, project the data into this direction, and square the resulting magnitudes to derive a score for each point: $\tau_j = (G_j\cdot v)^2$.
With these scores in place, we run Algorithm~\ref{alg:filter}, to (randomly) remove some of the points with the largest scores.
We re-run the entire procedure on this subset of points, until Algorithm~\ref{alg:filter} does not remove any points, at which point we terminate.

\paragraph{Theoretical Guarantees.}
Our first theoretical result says that as long as the data is not too heavy-tailed,
\sever{} will find an approximate critical point of the true function $\overline{f}$,
even in the presence of outliers.

\begin{theorem} \label{thm:stationary-point-inf}
Suppose that functions $f_1,\ldots,f_n,\Ef:\dom\rightarrow \R$ are bounded below on a closed domain $\dom$,
and suppose that they satisfy the following deterministic regularity conditions: There exists a set $\goodset \subseteq [n]$
with $|\goodset| \geq (1-\eps)n$ and $\sigma>0$ such that
\begin{itemize}
\item[(i)] $\Cov_{\goodset}[\nabla f_i(w)] \preceq \sigma^2 I$, $w \in \dom$,
\item[(ii)] $\|\nabla \hatf(w) - \nabla \Ef(w)\|_2 \leq \sigma \sqrt{\badfrac}$,
$w \in \dom$, where $\hatf \eqdef (1/|\goodset|) \sum_{i \in \goodset} f_i$.
\end{itemize}
Then our algorithm \sever{} applied to $f_1,\ldots,f_n, \sigma$ returns a point $w \in \dom$
that, with probability at least $9/10$, is a $(\gamma+O(\sigma \sqrt{\eps}))$-approximate critical point of $\Ef$.
\end{theorem}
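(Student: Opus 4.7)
The plan is to show that the output $\hat{w}$ of \sever{} satisfies $\|\nabla \tilde f(\hat w) - \nabla \bar f(\hat w)\|_2 = O(\sigma \sqrt{\epsilon})$, where $\tilde f \eqdef \tfrac{1}{|S|}\sum_{i \in S} f_i$ is the empirical objective on the surviving sample $S$ at termination; combined with the $\gamma$-approximate learner guarantee for $\tilde f$, this yields the theorem immediately by plugging the gradient bound into the directional derivative characterization of an approximate critical point.

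First I would establish a \emph{safety invariant}: at every iteration, $|S \cap \goodset|/|S| \ge 1 - O(\epsilon)$. This is the standard analysis of the randomized filter (Algorithm~\ref{alg:filter}): because points are removed with probability proportional to their score $\tau_i = (G_i \cdot v)^2$, and the spectral norm of $\Cov_{S \cap \goodset}[\nabla f_i(w)]$ is bounded by~(i), whenever the top singular value of $G$ exceeds the filter threshold the excess spectral mass in direction $v$ must come disproportionately from points outside $\goodset$. A supermartingale bound over the at most $n$ filtering rounds preserves the invariant with probability $\ge 9/10$, furnishing all of the failure probability in the theorem.

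Second, upon termination the filter threshold is not exceeded, so $\|\Cov_S[\nabla f_i(\hat w)]\|_\op = O(\sigma^2)$. Combined with~(i) restricted to $S \cap \goodset \subseteq \goodset$, and the safety invariant $|S \setminus \goodset|/|S| = O(\epsilon)$, the standard robust mean-estimation lemma (as in~\cite{DKKLMS16}) yields
\[ \Bigl\| \tfrac{1}{|S|} \sum_{i \in S} \nabla f_i(\hat w) - \tfrac{1}{|S\cap\goodset|} \sum_{i \in S \cap \goodset} \nabla f_i(\hat w) \Bigr\|_2 = O(\sigma \sqrt{\epsilon}). \]
A second application of~(i), now swapping $S \cap \goodset$ for $\goodset$, changes the mean by at most $O(\sigma\sqrt{\epsilon})$ (since $|\goodset \setminus (S \cap \goodset)|/|\goodset| = O(\epsilon)$ and $\Cov_\goodset[\nabla f_i] \preceq \sigma^2 I$). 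Condition~(ii) bounds $\|\nabla \hatf(\hat w) - \nabla \bar f(\hat w)\|_2 \le \sigma \sqrt{\epsilon}$, and chaining the three estimates via the triangle inequality gives $\|\nabla \tilde f(\hat w) - \nabla \bar f(\hat w)\|_2 = O(\sigma \sqrt{\epsilon})$. To close, the $\gamma$-approximate learner guarantee applied to $\tilde f$ says $v \cdot \nabla \tilde f(\hat w) \ge -\gamma$ for every admissible unit direction $v$, and the gradient bound turns this into $v \cdot \nabla \bar f(\hat w) \ge -\gamma - O(\sigma \sqrt{\epsilon})$, as required.

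The main obstacle is the safety invariant: one must verify that the squared-projection scores $\tau_i$ are in expectation larger on bad than on good points, which hinges on the observation that~(i) caps the contribution of $S \cap \goodset$ to $\|G\|_\op^2/|S|$ by $O(\sigma^2)$, so any excess above the filter threshold must be attributable to outliers in $S \setminus \goodset$. The accompanying induction/supermartingale argument is standard but must be executed carefully so that a single $9/10$ probability bound covers all $O(n)$ filtering rounds; the remaining steps are routine algebra using robust mean estimation together with the fact that conditions (i)--(ii) hold uniformly in $w \in \dom$, hence in particular at the data-dependent point $\hat w$.
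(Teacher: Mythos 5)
Your plan follows the same high-level route as the paper's own proof: establish via the filter's guarantees and a supermartingale argument that the surviving set $S$ is ``clean enough'' at termination, use the termination condition to bound $\|\Cov_S[\nabla f_i(\hat w)]\|_{\mathrm{op}}$, chain a robust-mean-estimation bound through condition (ii) to conclude $\|\nabla\tilde f(\hat w)-\nabla\Ef(\hat w)\|_2 = O(\sigma\sqrt\eps)$, and finish by the $\gamma$-approximate learner property. The paper's Lemma \ref{lem:bad-elts} is your ``filter removes more bad than good in expectation'' claim, and Lemma \ref{lem:final-set} packages your three triangle-inequality pieces into a single self-referential bound on $\delta = \|\sum_{i\in S}(\nabla f_i-\nabla\Ef)\|_2$.

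There is, however, a real gap in the invariant you propose to maintain. You state the safety invariant as $|S\cap\goodset|/|S| \ge 1 - O(\eps)$, i.e.\ only that few \emph{bad} points survive in $S$. But in the second part you invoke $|\goodset\setminus(S\cap\goodset)|/|\goodset| = O(\eps)$ (needed both to bound the $S\cap\goodset \to \goodset$ mean shift and, implicitly, to keep $\Cov_{S\cap\goodset}$ comparable to $\Cov_{\goodset}$). This does \emph{not} follow from your stated invariant: for instance if $S$ is half of $[n]$ and $S\subseteq\goodset$, then $|S\cap\goodset|/|S|=1$ yet $|\goodset\setminus S| = \Theta(n)$. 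What you actually need is a bound on the number of \emph{good points removed}, not merely the fraction of bad points remaining. The paper tracks the supermartingale $|([n]\setminus\goodset)\cap S| + |\goodset\setminus S|$ (Lemma \ref{lem:bad-elts} shows each filter round decreases the second summand by no more, in expectation, than the first), which gives $n-|S| \le 11\eps n$ at termination with probability $9/10$, and hence bounds \emph{both} $|S\setminus\goodset|$ and $|\goodset\setminus S|$. Your supermartingale sketch, executed correctly, yields exactly this two-sided bound — so the fix is simply to state the invariant as a bound on the number of points removed (equivalently on $n-|S|$), not on the purity ratio of $S$. One further minor point worth confirming in a careful write-up: ``condition (i) restricted to $S\cap\goodset$'' does not by itself control $\Cov_{S\cap\goodset}$; you need the ratio $|\goodset|/|S\cap\goodset| = 1+O(\eps)$ (again, the two-sided bound) so that restriction inflates the covariance only by a constant factor, which is the step spelled out at the start of the paper's proof of Lemma \ref{lem:bad-elts}.
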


The key take-away from Theorem~\ref{thm:stationary-point-inf}
is that the error guarantee has no dependence on
the underlying dimension $d$. In contrast, most natural algorithms incur an
error that grows with $d$, and hence have poor robustness in high dimensions.

\new{We show that under some niceness assumptions on $p^{\ast}$,
the deterministic regularity conditions are satisfied with high probability
with polynomially many samples:

\begin{proposition}[Informal] \label{prop:sample-bound-inf}
Let $\dom \subset \R^d$ be a closed bounded set with diameter at most $r$.
Let $p^{\ast}$ be a distribution over functions $f:\dom\rightarrow \R$ and $\Ef=\E_{f \sim p^{\ast}}[f]$.
Suppose that for each $w\in \dom$ and unit vector $v$ we have
$\E_{f \sim p^{\ast}}[(v\cdot (\nabla f(w)-\Ef(w)))^2] \leq \sigma^2.$
Under appropriate Lipschitz and smoothness assumptions,
for $n = \Omega( d\log(r/(\sigma^2\eps))/(\sigma^2 \eps))$,
an $\eps$-corrupted set of functions drawn i.i.d. from $p^\ast$, $f_1,\ldots,f_n$  with high probability
satisfy conditions (i) and (ii).
\end{proposition}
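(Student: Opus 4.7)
The plan is to establish conditions (i) and (ii) pointwise using standard matrix/vector concentration, extend from pointwise to uniform-in-$w$ via a covering argument that exploits the Lipschitz/smoothness hypotheses, and finally argue that removing up to $\eps n$ clean samples (the worst-case adversarial choice) only weakens the bounds by constant factors. The sample size $n = \Omega(d\log(r/(\sigma^2\eps))/(\sigma^2\eps))$ will emerge from combining the per-point sample complexity $\widetilde{O}(d/\eps)$ with the $\log|N_\eta| = O(d\log(r/\eta))$ penalty from the net.

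\textbf{Step 1 (Pointwise concentration).} Fix $w \in \dom$ and let $Z_i = \nabla f_i(w) - \nabla \Ef(w)$ for $i$ in the clean sample $S$ of size $(1-\eps)n$. The hypothesis $\E[(v \cdot Z_i)^2] \leq \sigma^2$ for every unit $v$ is equivalent to $\E[Z_i Z_i^\top] \preceq \sigma^2 I$. Combining a matrix Bernstein inequality (using the Lipschitz bound to truncate $\|Z_i\|_2$) with a vector Bernstein/Chebyshev bound on the empirical mean, for $n \geq C d/\eps$ with probability at least $1-\delta$ the clean sample satisfies $\Cov_S[\nabla f_i(w)] \preceq 2\sigma^2 I$ and $\|\nabla \fhat_S(w) - \nabla \Ef(w)\|_2 \leq \tfrac12 \sigma\sqrt{\eps}$, where $\fhat_S$ denotes the mean over $S$. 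The failure probability will be at most $\delta = 1/\mathrm{poly}(n)$.

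\textbf{Step 2 (Uniform in $w$ via an $\eta$-net).} Build an $\eta$-net $N_\eta \subset \dom$ of size at most $(3r/\eta)^d$. Union-bounding Step 1 over $N_\eta$ blows up the sample requirement by $\log|N_\eta| = O(d\log(r/\eta))$. To transfer the bounds from $N_\eta$ to all of $\dom$, use the smoothness/Lipschitz assumption on $\nabla f$: for any $w$ and its nearest net point $w_0$, both the empirical mean and the empirical second moment of the gradients change by at most $O(\mathrm{Lip}_{\nabla f} \cdot \eta)$ in the appropriate norms. Choosing $\eta = \Theta(\sigma^2 \eps / \mathrm{Lip}_{\nabla f})$ makes this slack smaller than the target $\sigma^2$ and $\sigma\sqrt{\eps}$ thresholds, and produces exactly the $\log(r/(\sigma^2\eps))$ dependence claimed.

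\textbf{Step 3 (Absorbing adversarial removals).} The set $I_{\mathrm{good}} \subseteq S$ of surviving clean samples has $|I_{\mathrm{good}}| \geq (1-\eps)n$, but the adversary may pick which $\eps n$ of $S$ to drop. To handle this, I would prove a resilience lemma: if $S$ satisfies $\Cov_S[\nabla f_i(w)] \preceq 2\sigma^2 I$, then for every subset $T \subseteq S$ with $|T|/|S| \geq 1-2\eps$ one has $\Cov_T[\nabla f_i(w)] \preceq O(\sigma^2) I$ and $\|\mathrm{mean}_T - \mathrm{mean}_S\|_2 = O(\sigma\sqrt{\eps})$. The mean bound follows from Cauchy--Schwarz in the form $\|\mu_T - \mu_S\| \le \sqrt{2\eps/(1-2\eps)} \cdot \sqrt{\lambda_{\max}(\Cov_S)}$; the covariance bound from $\Cov_T \preceq \frac{|S|}{|T|}\Cov_S$. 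Combining with Step 2's bound on $\|\mathrm{mean}_S - \nabla \Ef(w)\|_2$ yields conditions (i) and (ii) for $I_{\mathrm{good}}$ uniformly in $w$.

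The main obstacle is Step 2: making the covering argument tight enough that the $\log$ factor depends on $r/(\sigma^2\eps)$ rather than some cruder polynomial quantity, which requires carefully tracking the Lipschitz constants of $\nabla f$ (and of the map $w \mapsto \Cov_{p^\ast}[\nabla f(w)]$ in operator norm) and matching them against the target scales $\sigma^2$ and $\sigma\sqrt{\eps}$. A secondary subtlety is using the right truncation level in matrix Bernstein so that the concentration holds with subgaussian-style tails under only the second-moment assumption on gradients; this is where the unstated ``appropriate Lipschitz and smoothness assumptions'' in the proposition play their role.
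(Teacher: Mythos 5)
Your proposal takes essentially the same route as the paper: prove the two regularity conditions pointwise for fixed $w$, extend to all of $\dom$ via an $\eta$-net whose radius is tuned to the Lipschitz/smoothness constants of $\nabla f$ (giving the $\log(r/(\sigma^2\eps))$ factor), and then absorb the adversary's removal of up to $\eps n$ clean samples by arguing that deleting an $\eps$-fraction of a bounded-covariance sample moves the mean by at most $O(\sigma\sqrt{\eps})$ and inflates the covariance by only a constant factor. The only cosmetic difference is in Step 1: you invoke a matrix Bernstein inequality directly for the empirical second moment, while the paper instead takes a $2^{O(d)}$-size cover of the unit sphere and applies a scalar Chernoff bound to $(v\cdot(\nabla f_i(w)-\nabla\Ef(w)))^2$ direction-by-direction; once you union over the $w$-net, the two give the same $n=\Omega(dL^2\log(r\beta L/(\sigma^2\eps))/(\sigma^2\eps))$ sample complexity. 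Your ``resilience'' lemma in Step 3 is exactly the observation the paper makes inline (``removing an $\eps$-fraction of these $f$'s cannot change this value by more than $O(\sigma\sqrt{\eps})$'', justified by Cauchy--Schwarz against the already-established variance bound), so the decomposition into three steps is a presentational choice rather than a different argument.
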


\noindent The reader is referred to Proposition~\ref{prop:sample-bound} in the appendix for a detailed formal statement.

}

While Theorem~\ref{thm:stationary-point-inf} is very general and holds even for non-convex
loss functions, we might in general hope for more than an approximate critical point.
In particular, for convex problems, we can guarantee that we find an approximate global minimum.
This follows as a corollary of Theorem~\ref{thm:stationary-point-inf}:

\begin{corollary}\label{cor:convex-sever-inf}
Suppose that $f_1, \ldots, f_n: \dom \to \R$ satisfy the regularity conditions (i) and (ii),
and that $\dom$ is convex with $\ell_2$-radius r.
Then, with probability at least $9/10$, the output of \sever{} satisfies the following:
\begin{enumerate}
\item[(i)] If $\Efunc$ is convex, the algorithm finds a $w \in \dom$ such that
       $\Ef(w) - \Ef(\wstar) = O((\sigma \sqrt{\badfrac} + \gamma) r)$.
\item[(ii)] If $\Efunc$ is $\strconv$-strongly convex,  the algorithm finds a $w \in \dom$ such that
$       \Ef(w) - \Ef(\wstar) = O\left((\eps\sigma^2 + \gamma^2)/{\strconv} \right)$.
\end{enumerate}
\end{corollary}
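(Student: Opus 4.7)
The plan is to derive both bounds as direct consequences of Theorem~\ref{thm:stationary-point-inf}, which guarantees that the output $\what$ of \sever{} is a $\gamma'$-approximate critical point of $\Ef$ with $\gamma' = \gamma + O(\sigma\sqrt{\badfrac})$. The key observation that makes the corollary almost immediate is that since $\dom$ is convex, the unit vector $v \eqdef (\wstar - \what)/\|\wstar - \what\|_2$ points into the domain at $\what$, so $\what + \delta v \in \dom$ for all sufficiently small $\delta>0$. By Definition~\ref{def:approx-crit} applied to this particular $v$, we obtain the one-sided inequality
\[
\nabla \Ef(\what) \cdot (\wstar - \what) \;\geq\; -\gamma' \|\wstar - \what\|_2.
\]

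For part (i), I would combine this with the convexity inequality $\Ef(\wstar) \geq \Ef(\what) + \nabla \Ef(\what) \cdot (\wstar - \what)$, which rearranges to
\[
\Ef(\what) - \Ef(\wstar) \;\leq\; -\nabla \Ef(\what)\cdot(\wstar - \what) \;\leq\; \gamma'\|\wstar - \what\|_2.
\]
Since $\dom$ has $\ell_2$-radius $\radius$, the distance $\|\wstar - \what\|_2$ is at most $2\radius$, yielding the desired bound $\Ef(\what) - \Ef(\wstar) = O((\sigma\sqrt{\badfrac} + \gamma)\radius)$.

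For part (ii), I would repeat the same argument but use $\strconv$-strong convexity, which gives
\[
\Ef(\wstar) \;\geq\; \Ef(\what) + \nabla \Ef(\what) \cdot (\wstar - \what) + \tfrac{\strconv}{2}\|\wstar - \what\|_2^2.
\]
Applying the approximate-critical-point inequality yields $\Ef(\what) - \Ef(\wstar) \leq \gamma' s - (\strconv/2) s^2$ where $s \eqdef \|\wstar - \what\|_2$. Maximizing the right-hand side over $s \geq 0$ gives $(\gamma')^2/(2\strconv)$, and expanding $(\gamma')^2 = O(\gamma^2 + \badfrac \sigma^2)$ delivers the claimed $O((\badfrac\sigma^2 + \gamma^2)/\strconv)$ bound.

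There is no serious obstacle here: the only subtlety is making sure the direction $v$ toward $\wstar$ is admissible in Definition~\ref{def:approx-crit}, which is automatic from convexity of $\dom$. Everything else is a two-line manipulation of the convexity/strong-convexity inequality together with the approximate stationarity bound inherited from Theorem~\ref{thm:stationary-point-inf}.
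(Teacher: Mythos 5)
Your proof is correct, and the high-level strategy matches the paper's: invoke Theorem~\ref{thm:stationary-point-inf} to get a $(\gamma+O(\sigma\sqrt{\badfrac}))$-approximate critical point, then translate approximate criticality into a suboptimality bound via convexity or strong convexity. The one stylistic difference is the intermediate step: the paper routes through Lemma~\ref{lem:derivatives-to-sizes}, which restricts $\Ef$ to the segment from $\what$ to $\wstar$, requires one-sided derivative bounds at \emph{both} endpoints (the one at $\wstar$ coming tacitly from $\wstar$ being a constrained minimizer), and integrates the 1D derivative. Your argument skips the 1D reduction and never needs any condition at $\wstar$: you simply combine the approximate-criticality bound $\nabla\Ef(\what)\cdot(\wstar-\what)\geq -\gamma'\|\wstar-\what\|_2$ with the first-order (strong) convexity lower bound at $\what$, and in the strongly convex case maximize $\gamma' s - \tfrac{\strconv}{2}s^2$ over $s\geq 0$. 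This is a genuine, if minor, simplification---it removes the need to verify admissibility of the direction $\what-\wstar$ at $\wstar$---and yields the same constants.
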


\paragraph{Practical Considerations.}
For our theory to hold, we need to use the randomized filtering algorithm
shown in Algorithm~\ref{alg:filter} (which is essentially the robust mean estimation
algorithm of~\cite{DKK+17}), and filter until the stopping condition
in line \ref{until-step} of Algorithm~\ref{alg:sever} is satisfied. However,
in practice we found that the following simpler algorithm worked well: in
each iteration simply remove the top $p$ fraction of outliers according to the scores
$\tau_i$, and instead of using a specific stopping condition, simply repeat
the filter for $r$ iterations in total. This is the version of \sever{} that
we use in our experiments in Section~\ref{sec:experiments}.

\paragraph{Concrete Applications}
We also provide several concrete applications of our general theorem, 
particularly involved with optimization problems related to learning generalized linear models. 
In this setting, we are given a set of pairs $(X,Y)$ where $X\in \R^d$ and $Y$ is in some (usually discrete) set. 
One then tries to find some vector $w$ that minimizes some appropriate loss function $L(w,(X,Y)) = \sigma_Y(w\cdot X)$. 
For example, the standard hinge-loss has $Y\in \{\pm 1\}$ and $L = \max(0,1-Y(w\cdot X))$. 
Similarly, the logistic loss function is $-\log(1+\exp(-Y(w\cdot X)))$. In both cases, we show that an approximate 
minimizer to the empirical loss function can be found with a near-optimal number of samples, 
even under $\eps$-corruptions (for exact theorem statements see Theorems \ref{thm:result-svm} and \ref{thm:logreg}).
\begin{theorem}[Informal Statement]
Let $D_{X,Y}$ be a distribution over $\R^d\times\{\pm 1\}$ so that $\E[XX^T]\preceq I$ 
and so that not too many $X$ values lie near any hyperplane. 
Let $(X_1,Y_1),\ldots,(X_n,Y_n)$ be $n=\tilde O(d/\eps)$ $\eps$-corrupted samples from $D_{X,Y}$. 
Let $L$ be either the hinge loss or logistic loss function. 
Then there exists a polynomial time algorithm that with probability $9/10$ returns a vector $w$ that minimizes
$
\E_{(X,Y)\sim D_{X,Y}}(L(w,(X,Y)))
$ up to an additive $\tilde O(\eps^{1/4})$ error.
\end{theorem}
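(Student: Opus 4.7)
The plan is to apply the main guarantee of \sever{} (Theorem~\ref{thm:stationary-point-inf}) to empirical losses built from the per-sample loss $L$, verify the regularity conditions using the hypothesis $\bE[XX^T]\preceq I$, and then convert the resulting approximate-critical-point guarantee into an approximate-minimizer guarantee using convexity and the anti-concentration hypothesis on $X$. To get around the non-smoothness of the hinge loss I would work with a mollified version $L_\delta$ of $L$ (convolve the kink of the hinge with a bump of width $\delta$; for the logistic loss $L$ is already smooth and I can take $L_\delta=L$). Set $\dom$ to be an $\ell_2$-ball of radius $r$ containing $w^\ast$, define $f_i(w)=L_\delta(w,(X_i,Y_i))$, and use any standard projected gradient-descent base learner on $\frac1n\sum_i f_i$. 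The smoothing parameter $\delta$ will be tuned at the very end of the argument.

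To verify the deterministic regularity conditions I would first note that $\nabla_w L(w,(X,Y))=-Y\,X\cdot\phi(Y(w\cdot X))$ for some bounded scalar $\phi\in[0,1]$ (the indicator $\mathbf{1}[Y(w\cdot X)<1]$ in the hinge case, the logistic link in the logistic case), and the same holds with $\phi$ replaced by a smoother bounded function for $L_\delta$. Hence for every unit vector $v$ and every $w\in\dom$,
\[
\bE_{(X,Y)\sim D_{X,Y}}\!\big[(v\cdot\nabla f(w))^2\big]\;\le\;\bE[(v\cdot X)^2]\;\le\;1,
\]
using $\bE[XX^T]\preceq I$, so condition (i) of Theorem~\ref{thm:stationary-point-inf} holds with $\sigma=O(1)$. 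The gradient-Lipschitz constant of $L_\delta$ is $O(\|X\|^2/\delta)$, which enters the smoothness hypothesis of Proposition~\ref{prop:sample-bound-inf} and, combined with a standard covering/chaining bound, yields condition (ii) once $n=\tilde O(d/\eps)$.

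Theorem~\ref{thm:stationary-point-inf} then produces $\hat w\in\dom$ that is an $O(\sqrt\eps)$-approximate critical point of $\bar f_\delta(w)\eqdef\bE_{(X,Y)\sim D_{X,Y}}[L_\delta(w,(X,Y))]$. Since $L_\delta$ is convex in $w$, so is $\bar f_\delta$, and Corollary~\ref{cor:convex-sever-inf}(i) upgrades this to $\bar f_\delta(\hat w)-\bar f_\delta(w^\ast_\delta)=O(\sqrt\eps\, r)$. The anti-concentration hypothesis (no slab of width $\delta$ around a hyperplane carries too much mass of $X$) implies $\|\bar f_\delta-\bar f\|_\infty=O(\delta)$ on $\dom$, so
\[
\bar f(\hat w)-\bar f(w^\ast)\;\le\;O(\sqrt\eps\, r+\delta).
\]
Balancing $\delta$ against the $1/\delta$ smoothness penalty absorbed into the sample-complexity bound of Step 2 gives the claimed $\tilde O(\eps^{1/4})$ rate.

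The main obstacle is the non-smoothness of the hinge loss: without mollification the empirical gradient is discontinuous at $Y(w\cdot X)=1$ and need not concentrate uniformly around the population gradient at the rate required by Theorem~\ref{thm:stationary-point-inf}, so condition (ii) fails as stated. Mollifying trades a bias of $\delta$ against a smoothness penalty of $1/\delta$, and the anti-concentration assumption is exactly what makes this trade-off non-vacuous; propagating this trade-off carefully through the sample bound of Proposition~\ref{prop:sample-bound-inf} is the delicate part, and it is precisely what produces the $\eps^{1/4}$ exponent rather than the $\eps^{1/2}$ one would obtain for a uniformly smooth, strongly-convex problem.
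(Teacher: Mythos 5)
Your approach — mollifying the hinge, verifying conditions (i)--(ii) of Theorem~\ref{thm:stationary-point-inf} directly via Proposition~\ref{prop:sample-bound-inf}, and then invoking Corollary~\ref{cor:convex-sever-inf} — is not the route the paper takes, and it does not reach the stated $\tilde O(d/\eps)$ sample complexity. The obstruction is the $L^2$ factor in Proposition~\ref{prop:sample-bound-inf}: there $L$ is an almost-sure Lipschitz bound for $f-\Ef$, which for $f(w)=L_\delta(w,(x,y))$ is on the order of $\|x\|$. Since only $\E[XX^\top]\preceq I$ is assumed, one has $\E\|X\|_2^2\le d$ but no pointwise bound; to make the covering argument go through one must truncate to $\|X\|\lesssim\sqrt{d/\eps}$, and then $L^2\approx d/\eps$ forces $n=\tilde\Omega(d^2/\eps^2)$ rather than $\tilde O(d/\eps)$. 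The mollification does not help with this, because the penalty it fixes is the smoothness $\beta\approx\|X\|^2/\delta$, which enters only logarithmically. This is exactly the difficulty the appendix flags in Section~\ref{sec:glms}: ``it is unclear how to demonstrate that Assumption~\ref{ass:one-good-set} holds after taking polynomially many samples from a GLM.''

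The paper therefore uses a structurally different argument, encoded in Assumption~\ref{ass:one-good-set-glm} and Theorem~\ref{thm:glms-sever}. Rather than demanding $\|\nabla\hat f(w)-\nabla\bar f(w)\|_2\le\sigma\sqrt{\eps}$ uniformly over $w$, it applies Theorem~\ref{thm:stationary-point} with target $\hat f$ itself (so condition (ii) holds trivially, with left-hand side zero). It then needs only two much weaker statements: gradient concentration at the \emph{single} point $w^\ast$ (\eqref{eq:mean-at-min}), and a uniform bound on the \emph{scalar} loss gap $|\hat f(w)-\bar f(w)|$ (\eqref{eq:loss-close}). Proving these requires only $\E[XX^\top]\preceq I$, $|\sigma'_Y|\le 1$, $|\sigma_Y(0)|\le 1$, and a truncation to $\|X\|\lesssim\sqrt{d/\eps}$; the spectral bound becomes a single matrix statement $\E_i[(v\cdot X_i)^2]+1$ that is independent of $w$, and the scalar loss concentration is a one-dimensional Bernstein/covering argument (Proposition~\ref{prop:sample-GLM}). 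This is what gives $n=\tilde O(d/\eps)$. No mollification is used; the paper simply works with subgradients of the hinge.

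Finally, your reading of the anti-concentration hypothesis is off. The bound $\|\bar f_\delta-\bar f\|_\infty=O(\delta)$ follows automatically from the mollifier having width $\delta$ and needs no anti-concentration. The role of anti-concentration in the paper is different: Theorem~\ref{thm:glms-sever} only controls excess loss over a bounded domain of radius $r$, with a bound degrading like $r\sqrt\eps$. To get $\tilde O(\eps^{1/4})$ one takes $r=\eps^{-1/4}$, and anti-concentration is exactly what guarantees (Lemmas~\ref{lem:antigood-svm} and~\ref{lem:antigood-logit}) that the best $w$ inside that ball is within $O(\eps^{1/4})$ of the unconstrained $w^\ast$. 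Your proposal leaves the domain truncation unaddressed — you set $\dom$ to a ball of unspecified radius containing $w^\ast$, which is not assumed bounded — so the argument is missing precisely the step the anti-concentration hypothesis is there to supply.
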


Another application allows us to use the least-squares loss function ($L(w,(X,Y))=(Y-w\cdot X)^2$) 
to perform linear regression under somewhat more restrictive assumptions (see Theorem \ref{thm:linreg} for the full statement):
\begin{theorem}[Informal Statement]
Let $D_{X,Y}$ be a distribution over $\R^d\times \R$ where $Y=w^{\ast}\cdot X+e$ for some independent $e$ 
with mean $0$ and variance $1$. Assume furthermore, that $\E[XX^T]\preceq I$ and that $X$ has bounded fourth moments. 
Then there exists an algorithm that given $O(d^5/\eps^2)$ $\eps$-corrupted samples from $D$, 
computes a value $w\in \R^d$ so that with high probability $\|w-w^{\ast}\|_2 = O(\sqrt{\eps})$.
\end{theorem}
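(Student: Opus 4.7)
The plan is to derive the linear regression result as a direct application of Theorem~\ref{thm:stationary-point-inf} (combined with Proposition~\ref{prop:sample-bound-inf}) to the least-squares loss $f_i(w) = \tfrac12 (w\cdot X_i - Y_i)^2$, whose gradient is
\[
\nabla f_i(w) = X_i(X_i\cdot w - Y_i) = X_i\bigl(X_i\cdot(w-\wstar) - e_i\bigr).
\]
Writing $u = w-\wstar$ and $\Ef(w) = \tfrac12\bE[(w\cdot X - Y)^2]$, we have $\nabla\Ef(w) = \bE[XX^T]\,u$ since $e$ is mean-zero and independent of $X$; and since the model is linear-Gaussian-like, $\Ef$ is a positive-semidefinite quadratic in $u$ (we will use the hypothesis $\bE[XX^T]\preceq I$, together with a matching lower bound that is implicit in the fourth-moment/nondegeneracy assumption, to get strong convexity).

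The first key step is to \emph{bound the population covariance of the gradient} at a point $w$ with $\|w-\wstar\|\le r$. For any unit vector $v$, Cauchy--Schwarz and the bounded fourth moment of $X$ give
\[
\bE\bigl[(v\cdot \nabla f(w))^2\bigr]
= \bE\bigl[(v\cdot X)^2 (X\cdot u - e)^2\bigr]
\le 2\bE[(v\cdot X)^2 (X\cdot u)^2] + 2\bE[(v\cdot X)^2]
\le C(\|u\|^2+1),
\]
using independence of $e$ from $X$ and $\bE[e^2]=1$. Therefore the population bound on the gradient covariance is $\sigma_{\mathrm{pop}}^2 = O(r^2+1)$ on any radius-$r$ ball containing $\wstar$.

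The main technical obstacle is lifting this population bound to the \emph{deterministic} conditions~(i) and~(ii) of Theorem~\ref{thm:stationary-point-inf} simultaneously for \emph{all} $w\in\dom$, starting from $\eps$-corrupted i.i.d.\ samples. Applying Proposition~\ref{prop:sample-bound-inf} pointwise gives pointwise guarantees; to get a uniform-in-$w$ version I would combine (a) a matrix-Bernstein concentration for $\Cov_{\Igood}[\nabla f_i(w)]$ (whose entries are degree-four polynomials in $X$, hence concentrate under the fourth-moment hypothesis), (b) a similar concentration for $\|\nabla \hatf(w)-\nabla\Ef(w)\|_2$, and (c) an $\epsilon$-net argument over the radius-$r$ ball $\dom$, using the fact that $w\mapsto\nabla f_i(w)$ is linear in $w$ with Lipschitz constant controlled by $\|X_i\|^2$. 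The net argument loses a factor of $d\log(r/\eps)$, and the polynomial-of-degree-four concentration bleeds out the remaining factors of $d$, which is the source of the $d^5/\eps^2$ sample complexity. This part is where the bulk of the calculation sits and where the stated bound may well be suboptimal.

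With these regularity conditions in place for $\sigma^2 = O(r^2+1)$, Theorem~\ref{thm:stationary-point-inf} yields a point $\hatw\in\dom$ with $\|\nabla\Ef(\hatw)\|_2 \le \gamma + O(\sigma\sqrt{\eps})$. Since $\nabla\Ef(w) = \bE[XX^T](w-\wstar)$ and $\bE[XX^T] \succeq cI$ for some $c = \Omega(1)$ under the assumptions, this rearranges to
\[
\|\hatw-\wstar\|_2 \;\le\; \tfrac{1}{c}\bigl(\gamma + O(\sqrt{\eps})\cdot(\|\hatw-\wstar\|_2+1)\bigr),
\]
and for $\eps$ sufficiently small this closes to $\|\hatw-\wstar\|_2 = O(\sqrt{\eps})$. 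To ensure $\dom$ actually contains $\wstar$ with radius $r = \mathrm{poly}(d)$, I would either initialize $\dom$ using a crude robust estimate (e.g., coordinate-wise median of responses against each feature) or simply take $r$ as large as the problem permits; because $r$ enters Proposition~\ref{prop:sample-bound-inf} only logarithmically, this costs nothing in the sample complexity up to log factors. The remaining niceness hypotheses of Proposition~\ref{prop:sample-bound-inf} (Lipschitzness and smoothness of $f$) follow from boundedness of $X$ on a set of high probability combined with truncation, which is standard.
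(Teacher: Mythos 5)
Your high-level plan---bound the population covariance of the gradient via Cauchy--Schwarz and the fourth-moment hypothesis, lift this to the deterministic regularity conditions by truncating $X$ and doing a cover argument, invoke the main stationary-point theorem, and then convert the critical-point guarantee into a parameter-error bound using strong convexity of $\Ef$---tracks the paper's route quite closely. The covariance-bound step is essentially Lemma~\ref{lem:lin-reg-cov-bound}, and the recognition that unbounded $X$ forces a truncation (the paper truncates at $\|X\|_2 \lesssim \sqrt{d}/\eps^{1/4}$) is the right instinct. The sample-complexity machinery differs in flavor: the paper's Lemma~\ref{lem:linreg-sample-complexity} proceeds via truncation, VC dimension for halfspaces, and Chebyshev/Markov estimates, rather than matrix Bernstein with a net, but both are cover-plus-concentration arguments and neither is obviously preferable.

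The genuine gap is in the final closure step. Theorem~\ref{thm:stationary-point-inf} asks for a \emph{uniform} covariance bound $\Cov_{\goodset}[\nabla f_i(w)] \preceq \sigma^2 I$ over all $w \in \dom$, with a single parameter $\sigma$, and returns a $(\gamma + O(\sigma\sqrt{\eps}))$-approximate critical point in terms of that same uniform $\sigma$. For linear regression the covariance at $w$ scales like $(1 + \|w - \wstar\|_2)^2$, so if $\dom$ has radius $r$, the uniform $\sigma$ is $\Theta(1 + r)$, and the conclusion is only $\|\nabla\Ef(\hatw)\|_2 \le \gamma + O((1+r)\sqrt{\eps})$. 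In your inequality you silently replace this with $\gamma + O((1 + \|\hatw - \wstar\|_2)\sqrt{\eps})$---i.e., you evaluate $\sigma$ locally at the output rather than uniformly over $\dom$---and this substitution is not licensed by the theorem. Since you then propose to take ``$r$ as large as the problem permits,'' the bound you actually obtain is $\|\hatw - \wstar\|_2 = O(r\sqrt{\eps})$, which is far from $O(\sqrt{\eps})$. The paper closes exactly this gap with Corollary~\ref{cor:strongly-convex-sever}: under strong convexity, iterate \sever{} with geometrically shrinking radii $r_1 = r, r_2, \ldots$, each time recentering $\dom$ around the current estimate, so that the $\sigma_1 r_i$ contribution to the error decays until it stabilizes at $O(\sigma_0 \sqrt{\eps}/\strconv + \gamma/\strconv)$ after $O(\log(r\strconv/(\gamma + \sigma_0\sqrt{\eps})))$ calls. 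Your remark about warm-starting with a crude robust estimate is the seed of this idea, but a single warm start followed by one pass does not suffice to remove the $r$-dependence; you need the full iterative radius reduction (or an equivalent argument) to make the theorem's conclusion actually scale with $\|\hatw - \wstar\|_2$ rather than with the ambient domain radius.
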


\subsection{Overview of \sever{} and its Analysis}
For simplicity of the exposition, we restrict ourselves
to the important special case where the functions involved are convex.
We have a probability distribution $p^{\ast}$ over convex functions
on some convex domain $\mathcal{H} \subseteq \R^d$
and we wish to minimize the function $\bar{f} = \mathbb{E}_{f \sim p^{\ast}}[f]$.
This problem is well-understood in the absence of corruptions:
Under mild assumptions, if we take sufficiently many samples from $p^{\ast}$,
their average $\hat{f}$ approximates $\bar{f}$ pointwise with high probability. Hence,
we can use standard methods from convex optimization
to find an approximate minimizer for $\hat{f}$, which will in turn serve as an approximate
minimizer for $\bar{f}$.

In the robust setting, stochastic optimization becomes quite challenging:
Even for the most basic special cases of this problem (e.g., mean estimation, linear regression)
a {\em single} adversarially corrupted sample can substantially
change the location of the minimum for $\hat{f}$. Moreover, naive outlier removal
methods can only tolerate a negligible fraction $\eps$ of corruptions (corresponding to $\eps = O(d^{-1/2})$).

A first idea to get around this obstacle is the following: We consider the standard
(projected) gradient descent method used to find the minimum of $\hat{f}$.
This algorithm would proceed by repeatedly computing the gradient of $\hat{f}$
at appropriate points and using it to update the current location.
The issue is that adversarial corruptions can completely compromise
this algorithm's behavior, since they can substantially
change the gradient of $\hat{f}$ at the chosen points.
The key observation is that approximating the gradient of $\bar{f}$ at a given point,
given access to an $\eps$-corrupted set of samples,
can be viewed as a robust mean estimation problem.
We can thus use the robust mean estimation algorithm of~\cite{DKK+17},
which succeeds under fairly mild assumptions about the good samples.
Assuming that the covariance matrix of $\nabla f(w)$, $f \sim p^{\ast}$,
is bounded, we can thus ``simulate'' gradient descent and compute an
approximate minimum for $\bar{f}$.

In summary, the first algorithmic idea is to use a robust mean estimation routine as a
black-box in order to robustly estimate the gradient at {\em each} iteration of (projected) gradient descent.
This yields a simple robust method for stochastic optimization
with polynomial sample complexity and running time in a very general setting
(See Appendix~\ref{sec:general-algo} for details.)

We are now ready to describe \sever{} (Algorithm~\ref{alg:sever}) and the main insight behind it.
Roughly speaking, \sever{}  only calls our robust mean estimation routine 
(which is essentially the filtering method of~\cite{DKK+17} for outlier removal)
each time the algorithm reaches an approximate critical point of $\hat{f}$.
There are two main motivations for this approach:
First, we empirically observed that if we iteratively filter samples,
keeping the subset with the samples removed, then few iterations of the filter remove points.
Second, an iteration of the filter subroutine (Algorithm~\ref{alg:filter})
is more expensive than an iteration of gradient descent. Therefore, it is advantageous
to run many steps of gradient descent on the current set of corrupted samples
between consecutive filtering steps. This idea is further improved by
using stochastic gradient descent, rather than computing the average at each step.

\new{
An important feature of our analysis is that \sever{} does not use a robust mean estimation routine
as a black box. In contrast, we take advantage of the performance guarantees of
our filtering algorithm. The main idea for the analysis is as follows:
Suppose that we have reached an approximate critical point $w$ of $\hat{f}$ and
at this step we apply our filtering algorithm. By the performance guarantees of the latter algorithm
we are in one of two cases: either the filtering algorithm removes a set of corrupted functions
or it certifies that the gradient of $\hat{f}$ is ``close'' to the gradient of $\Ef$ at $w$.
In the first case, we make progress as we produce a ``cleaner'' set of functions.
In the second case, our certification implies that the point $w$ is also an approximate critical point of $\Ef$
and we are done.}



\section{Experiments}
\label{sec:experiments}
In this section we apply \sever{} to regression and classification problems. 
Our code is available at~\url{https://github.com/hoonose/sever}.
As our base learners, we used ridge regression and an SVM, respectively. We 
implemented the latter as a quadratic program, using Gurobi~\citep{gurobi2016} 
as a backend solver and YALMIP~\citep{lofberg2004} as the modeling language.

In both cases, we ran the base learner and then extracted gradients for each data point 
at the learned parameters. We then centered the gradients and ran MATLAB's \texttt{svds} method 
to compute the top singular vector $v$, and removed the top $p$ fraction of points $i$ with the 
largest \emph{outlier score} $\tau_i$, 
computed as the squared magnitude of the projection onto $v$ 
(see Algorithm~\ref{alg:sever}). 
We repeated this for $r$ iterations in total. For classification, 
we centered the gradients separately (and removed points separately) for each class, 
which improved performance.

We compared our method to six baseline methods. All but one of these all have the same high-level form 
as \sever{} (run the base learner then filter top $p$ fraction of points with the largest score), 
but use a different definition of the score $\tau_i$ for deciding which points to filter:
\begin{itemize}
\item \noDef: no points are removed.
\item \xCen: remove points where the covariate $x$ has large $\ell_2$ distance from the mean.
\item \Loss: remove points with large loss (measured at the parameters output by the base learner).
\item \gUnc: remove points with large gradient (in $\ell_2$-norm).
\item \gCen: remove points whose gradients are far from the mean gradient in $\ell_2$-norm.
\item \ransac: repeatedly subsample points uniformly at random, and find the best fit with the subsample. Then, choose the best fit amongst this set of learners. Note that this method is not ``filter-based''.\footnote{In practice, heuristics must often be applied to choose the best fit. In our experiments, we ``cheat'' slightly by in fact choosing the best fit post-hoc by reporting the best error achieved by any learner in this way. Despite strengthening \ransac{} in this way, we observe that it still has poor performance.}
\end{itemize}
Note that \gCen{} is similar to our method, except that it removes large gradients in terms of 
$\ell_2$-norm, rather than in terms of projection onto the top singular vector. 
As before, for classification we compute these metrics separately for each class.

Both ridge regression and SVM have a single hyperparameter (the regularization coefficient). 
We optimized this based on the uncorrupted data and then kept it fixed throughout our 
experiments. In addition, since the data do not already have outliers, we added varying 
amounts of outliers (ranging from $0.5\%$ to $10\%$ of the clean data); this process is
described in more detail below.

\subsection{Ridge Regression}
For ridge regression, we tested our method on a synthetic Gaussian dataset as well as a 
drug discovery dataset.
The synthetic dataset consists of observations $(x_i, y_i)$ where 
$x_i \in \mathbb{R}^{500}$ has independent standard Gaussian entries, 
and $y_i = \langle x_i,  w^* \rangle + 0.1 z_i$, where $z_i$ is also Gaussian. 
We generated $5000$ training points and $100$ test points.
The drug discovery dataset was obtained from the ChEMBL database and was originally curated by 
\citet{olier2018qsar}; it consists of $4084$ data points in $410$ dimensions; we split 
this into a training set of $3084$ points and a test set of $1000$ points.

\paragraph{Centering}
We found that centering the data points decreased error noticeably on the drug discovery 
dataset, while scaling each coordinate to have variance $1$ decreased error by a small 
amount on the synthetic data.
To center in the presence of outliers, we used the robust mean estimation algorithm 
from \citep{DKK+17}. 

\paragraph{Adding outliers.}
We devised a method of generating outliers that fools all of the baselines while still 
inducing high test error. At a high level, the outliers cause ridge regression to output 
$w = 0$ (so the model always predicts $y = 0$).

If $(X, y)$ are the true data points and responses, this can be achieved by 
setting each outlier point $(X_{\mathrm{bad}}, y_{\mathrm{bad}})$ as
\[ X_{\mathrm{bad}} = \frac{1}{\alpha \cdot n_{\mathrm{bad}}}y^\top X ~~\mbox{and}~~~y_{\mathrm{bad}} = -\beta \; , \] where $n_{\mathrm{bad}}$ is the number of outliers we add, and $\alpha$ and $\beta$ are hyperparameters.

If $\alpha = \beta$, one can check that $w = 0$ is the 
unique minimizer for ridge regression on the perturbed dataset. 
By tuning $\alpha$ and $\beta$, we can then obtain attacks that fool all the baselines while 
damaging the model (we tune $\alpha$ and $\beta$ separately to give an additional degree of 
freedom to the attack).
To increase the error, we also found it useful to perturb each individual 
$X_{\mathrm{bad}}$ by a small amount of Gaussian noise. 

In our experiments we found that this method generated successful attacks as long as 
the fraction of outliers was at least roughly $2\%$ for synthetic data, 
and roughly $5\%$ for the drug discovery data.

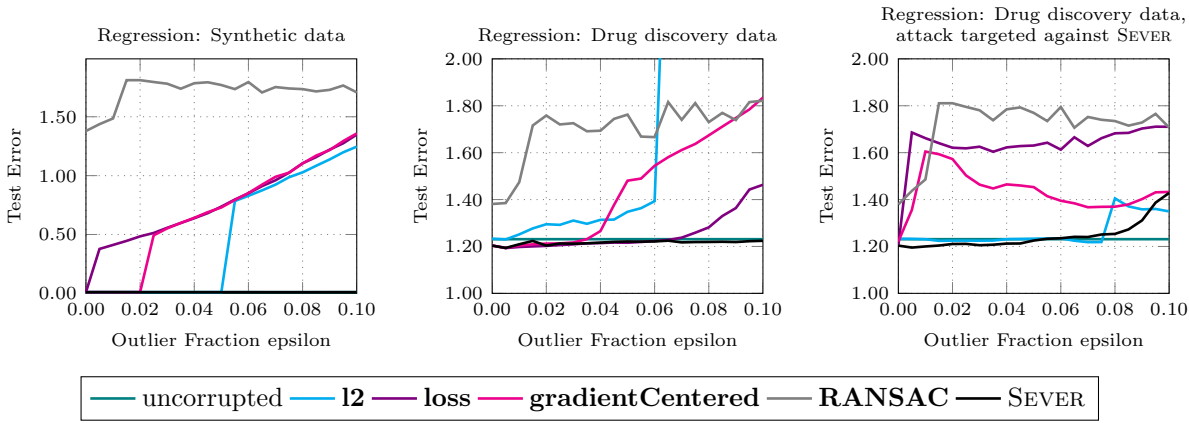
\begin{figure}[h!]
\centering
\begin{tikzpicture}
\begin{axis}[errplottriple,name=linreg_synth, align=center, title={Regression: Synthetic data}, xticklabel style={/pgf/number format/.cd, fixed, fixed zerofill, precision=2,/tikz/.cd}, legend columns= 5, legend style={anchor=north west, xshift=-0.6 \plotwidth, yshift=-0.8\plotheight}, legend columns=6]
\addplot[teal] table[x=eps, y=err] {figures/linreg_synth/uncorrupted.txt};
\addplot[cyan] table[x=eps, y=err] {figures/linreg_synth/l2.txt};
\addplot[violet] table[x=eps, y=err] {figures/linreg_synth/loss.txt};
\addplot[magenta] table[x=eps, y=err] {figures/linreg_synth/gradient.txt};
\addplot[gray] table[x=eps, y=err] {figures/linreg_synth/ransac.txt};
\addplot[black] table[x=eps, y=err] {figures/linreg_synth/sever.txt};
\legend{uncorrupted, \xCen{}, \Loss{}, \gCen{},\ransac{}, \sever{}}
\end{axis}

\begin{axis}[errplottriple,name=linreg_qsar, align=center, at=(linreg_synth.north east),anchor=north west, xticklabel style={/pgf/number format/.cd, fixed, fixed zerofill, precision=2,/tikz/.cd}, xshift=\plotxspacing,ignore legend, title={Regression: Drug discovery data}, ymin = 1, ymax = 2]
\addplot[teal] table[x=eps, y=err] {figures/linreg_qsar/uncorrupted.txt};
\addplot[cyan] table[x=eps, y=err] {figures/linreg_qsar/l2.txt};
\addplot[violet] table[x=eps, y=err] {figures/linreg_qsar/loss.txt};
\addplot[magenta] table[x=eps, y=err] {figures/linreg_qsar/gradient.txt};
\addplot[gray] table[x=eps, y=err] {figures/linreg_qsar/ransac.txt};
\addplot[black] table[x=eps, y=err] {figures/linreg_qsar/sever.txt};
\end{axis}

\begin{axis}[errplottriple,name=linreg_qsar_worst, align=center, at=(linreg_qsar.north east),anchor=north west, xticklabel style={/pgf/number format/.cd, fixed, fixed zerofill, precision=2,/tikz/.cd}, xshift=\plotxspacing,ignore legend, title={Regression: Drug discovery data, \\ attack targeted against \sever{}}, ymin = 1, ymax = 2]
\addplot[teal] table[x=eps, y=err] {figures/linreg_qsar_worst/uncorrupted.txt};
\addplot[cyan] table[x=eps, y=err] {figures/linreg_qsar_worst/l2.txt};
\addplot[violet] table[x=eps, y=err] {figures/linreg_qsar_worst/loss.txt};
\addplot[magenta] table[x=eps, y=err] {figures/linreg_qsar_worst/gradient.txt};
\addplot[gray] table[x=eps, y=err] {figures/linreg_qsar_worst/ransac.txt};
\addplot[black] table[x=eps, y=err] {figures/linreg_qsar_worst/sever.txt};
\end{axis}

\end{tikzpicture}
\caption{$\epsilon$ vs test error for baselines and \sever{} on synthetic data and the drug discovery dataset. The left and middle figures show that \sever{} continues to maintain statistical accuracy against our attacks which are able to defeat previous baselines. The right figure shows an attack with parameters chosen to increase the test error \sever{} on the drug discovery dataset as much as possible. Despite this, \sever{} still has relatively small test error.} 
\label{label:acc-vs-eps-linreg} 
\end{figure}

\begin{figure*}[h!]
\includegraphics[width=0.33\textwidth]{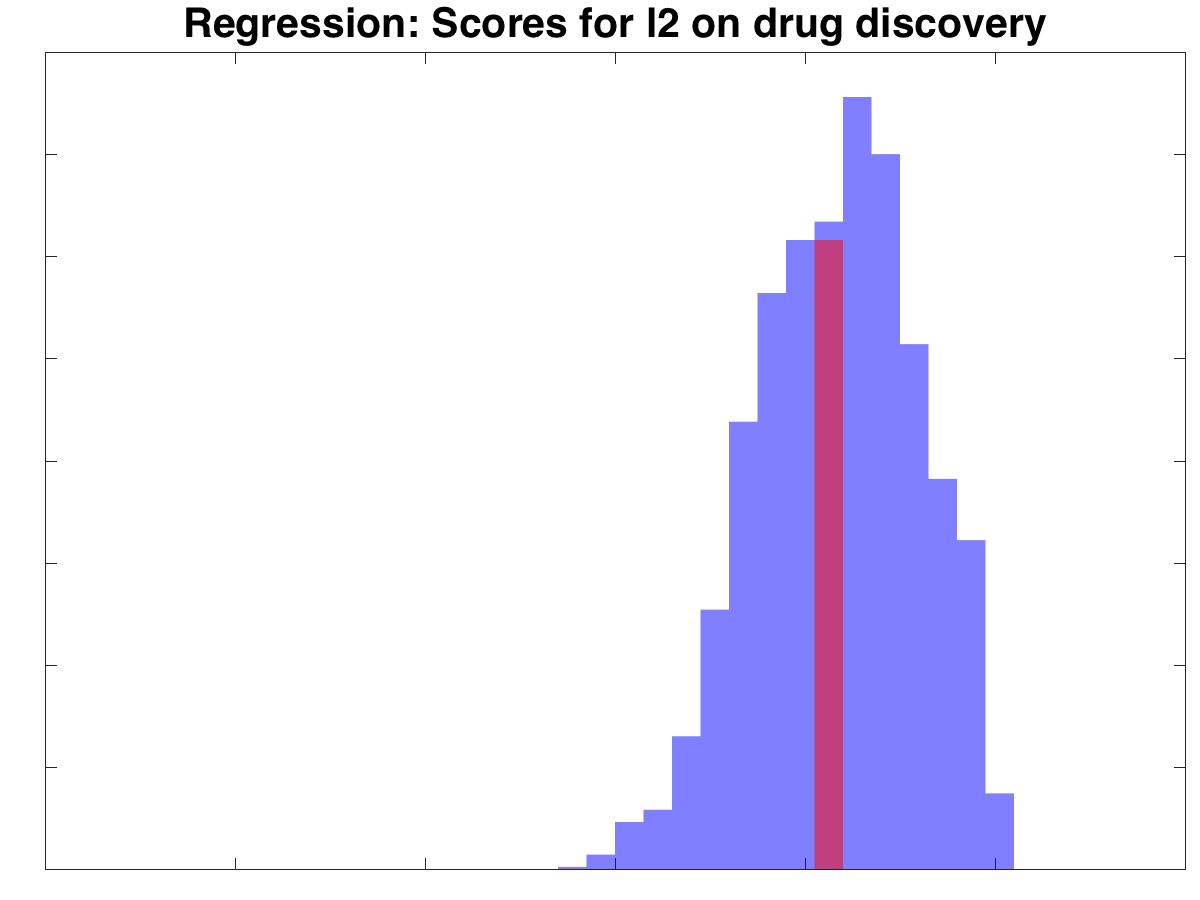}
\includegraphics[width=0.33\textwidth]{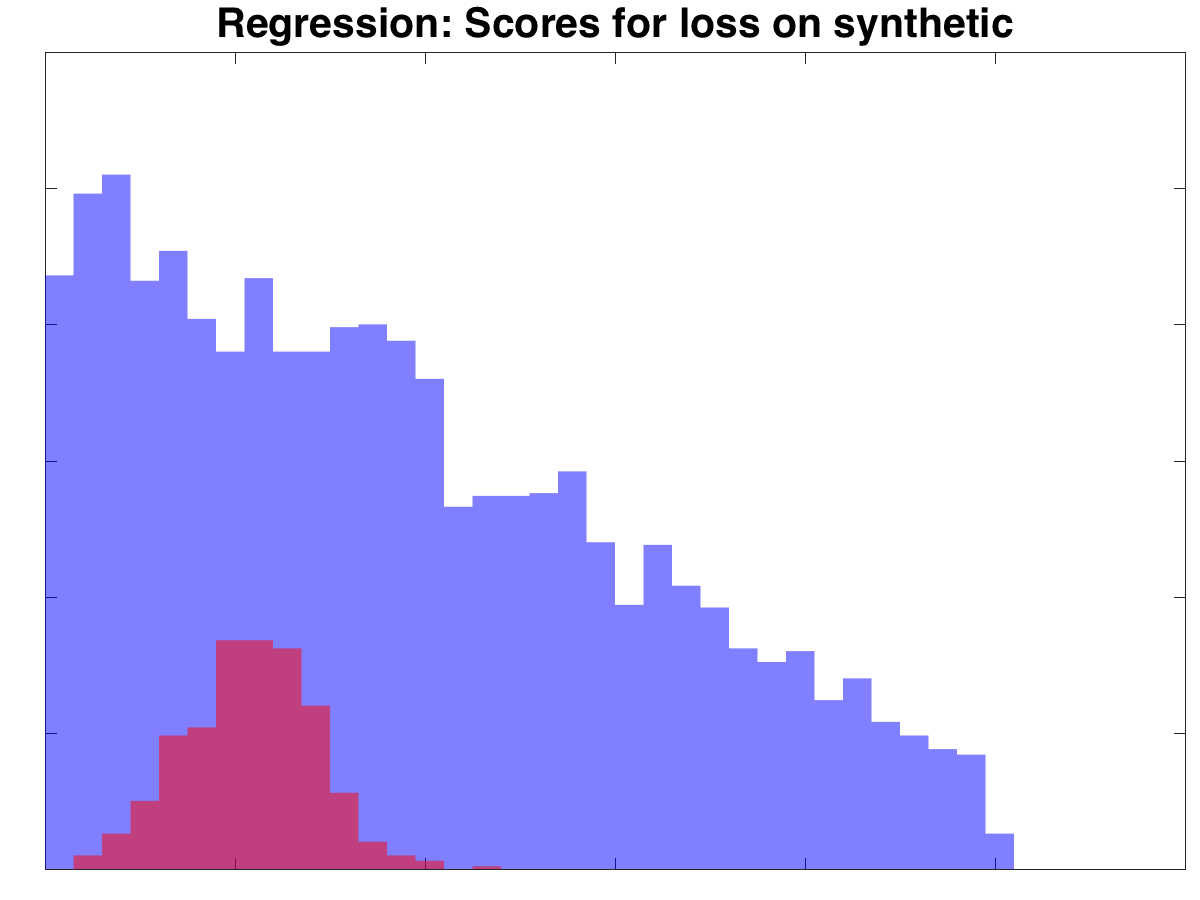}
\includegraphics[width=0.33\textwidth]{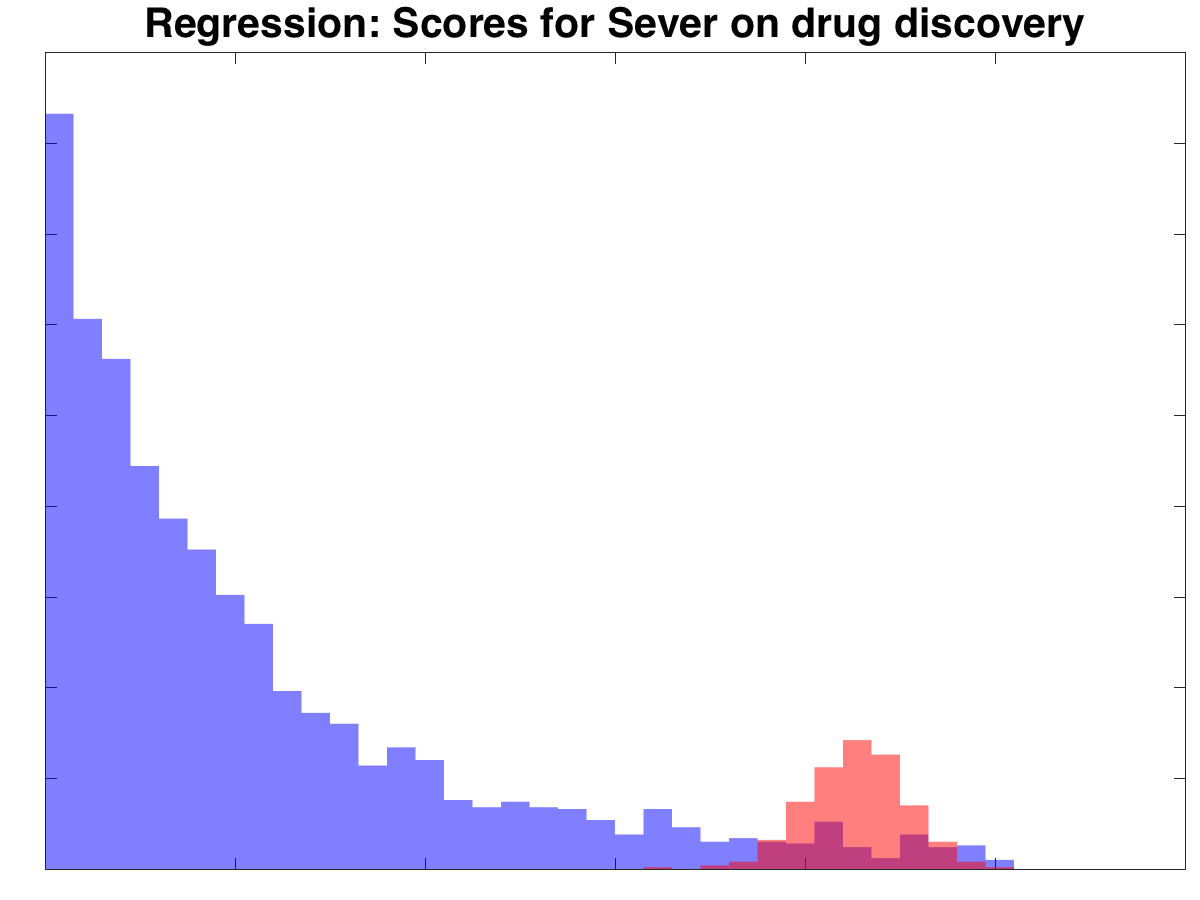}
\caption{A representative set of histograms of scores for baselines and \sever{} on synthetic data and a drug discovery dataset. From left to right: scores for the \xCen{} defense on the drug discovery dataset, scores for \Loss{} on synthetic data, and scores for \sever{} on the drug discovery dataset, all with the addition of 10\% outliers.
The scores for the true dataset are in blue, and the scores for the outliers are in red.
For the baselines, the scores for the outliers are inside the bulk of the distribution and thus hard to detect, whereas the scores for the outliers assigned by \sever{} are clearly within the tail of the distribution and easily detectable. } 
\label{label:hist-linreg} 
\end{figure*}

\paragraph{Results.}
In Figure \ref{label:acc-vs-eps-linreg} we compare the test error of our defense against the baselines 
as we increase the fraction $\epsilon$ of added outliers.
To avoid cluttering the figure, we only show the performance of \xCen, \Loss, \gCen, \ransac, 
and \sever{}; the performance of the remaining baselines is qualitatively similar to the 
baselines in Figure \ref{label:acc-vs-eps-linreg}.

For both the baselines and our algorithms, we iterate the defense $r=4$ times, 
each time removing the $p=\epsilon/2$ fraction of points with largest score.
For consistency of results, for each defense and each value of $\epsilon$ we ran the defense 
3 times on fresh attack points and display the median of the 3 test errors.

When the attack parameters $\alpha$ and $\beta$ are tuned to defeat the baselines 
(Figure~\ref{label:acc-vs-eps-linreg} left and center), 
our defense substantially outperforms the baselines as soon as we cross $\epsilon \approx 1.5\%$ 
for synthetic data, and $\epsilon \approx 5.5\%$ for the drug discovery data.
In fact, most of the baselines do worse than not removing any outliers at all
(this is because they end up mostly removing good data points, which causes the 
outliers to have a larger effect).
Even when $\alpha$ and $\beta$ are instead tuned to defeat \sever{}, its resulting error remains 
small (Figure~\ref{label:acc-vs-eps-linreg} right).


To understand why the baselines fail to detect the outliers,
in Figure \ref{label:hist-linreg} we show a representative sample of the histograms of scores of the uncorrupted points overlaid 
with the scores of the outliers, for both synthetic data and the drug discovery dataset with $\epsilon = 0.1$, 
after one run of the base learner.
The scores of the outliers lie well within the distribution of scores of the uncorrupted points.
Thus, it would be impossible for the baselines to remove them 
without also removing a large fraction of uncorrupted points.

Interestingly, for small $\epsilon$ all of the methods improve upon the uncorrupted test 
error for the drug discovery data; this appears to be due to the presence of a small number of
natural outliers in the data that all of the methods successfully remove.

\subsection{Support Vector Machines}
We next describe our experimental results for SVMs; we tested our method on 
a synthetic Gaussian dataset as well as a spam classification task.
Similarly to before, the synthetic data consists of observations $(x_i, y_i)$, where $x_i \in \mathbb{R}^{500}$ has independent standard Gaussian entries,
and $y_i = \operatorname{sign}(\langle x_i, w^* \rangle + 0.1z_i)$, where $z_i$ is also Gaussian 
and $w^*$ is the true parameters (drawn at random from the unit sphere).
The spam dataset comes from the Enron corpus~\citet{metsis2006spam}, and 
consists of $4137$ training points and $1035$ test points in $5116$ dimensions.
To generate attacks, we used the data poisoning algorithm presented in 
\citet{koh2018stronger}.


In contrast to ridge regression, we did not perform centering and rescaling for these 
datasets as it did not seem to have a large effect on results.

In all experiments for this section, each method removed the top $p=\frac{n_- + n_+}{\min\{n_+,n_-\}} \cdot \frac{\epsilon}{r}$ of highest-scoring 
points for each of $r = 2$ iterations, where $n_+$ and $n_-$ are the number of positive and negative training points respectively.
This expression for $p$ is chosen in order to account for class imbalance, which is extreme in the case of the Enron dataset -- if the attacker plants all the outliers in the smaller class, then a smaller value of $p$ would remove too few points, even with a perfect detection method.

\begin{figure}[h!]
\centering
\begin{tikzpicture}
\begin{axis}[errplot,name=svm_synth_loss, legend style={anchor=north west, xshift=-0.35 \plotwidth, yshift=-1.2\plotheight}, legend columns=4, title={SVM: Strongest attacks against \Loss{} on synthetic data}]
\addplot[teal, mark=|] table[x=eps, y=err] {figures/svm_synth_loss/uncorrupted.txt};
\addplot[violet, mark=|] table[x=eps, y=err] {figures/svm_synth_loss/loss.txt};
\addplot[black, mark=|] table[x=eps, y=err] {figures/svm_synth_loss/sever.txt};
\legend{uncorrupted, \Loss{}, \sever{}}
\end{axis}

\begin{axis}[errplot,name=svm_synth_sever, at=(svm_synth_loss.north east),anchor=north west, xshift=\plotxspacing,ignore legend, title={SVM: Strongest attacks against \sever{} on synthetic data}]
\addplot[teal, mark=|] table[x=eps, y=err] {figures/svm_synth_sever/uncorrupted.txt};
\addplot[violet, mark=|] table[x=eps, y=err] {figures/svm_synth_sever/loss.txt};
\addplot[black, mark=|] table[x=eps, y=err] {figures/svm_synth_sever/sever.txt};
\end{axis}

\end{tikzpicture}
\caption{$\epsilon$ versus test error for \Loss{} baseline and \sever{} on synthetic data. The left figure demonstrates that \sever{} is accurate when outliers manage to defeat \Loss{}.
The right figure shows the result of attacks which increased the test error the most against \sever{}. Even in this case, \sever{} performs much better than the baselines.}
\label{fig:svm-synthetic}
\end{figure}
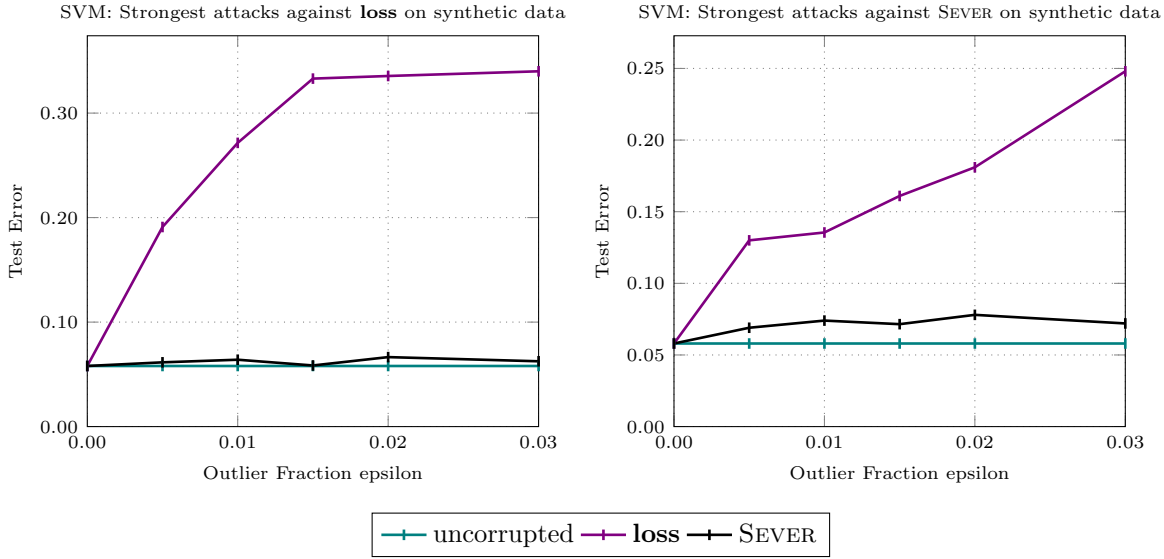

\paragraph{Synthetic results.}
We considered fractions of outliers ranging from $\epsilon = 0.005$ to $\epsilon = 0.03$. 
By performing a sweep across hyperparameters of the attack, we generated 
$56$ distinct sets of attacks for each value of $\epsilon$. 
In Figure~\ref{fig:svm-synthetic}, we show results for the attack where the \Loss{} baselines does the 
worst, as well as for the attack where our method does the worst. 
When attacks are most effective against \Loss{}, \sever{} substantially outperforms it, nearly matching the test accuracy of $5.8\%$ on the uncorrupted data, while \Loss{} performs worse than $30\%$ error at just a $1.5\%$ fraction of injected outliers.
Even when attacks are most effective against \sever{}, it still outperforms \Loss{}, achieving a test error of at most $9.05\%$.
We note that other baselines behaved qualitatively similarly to \Loss{}, and the results are displayed in Section~\ref{sec:experiments-app}.

\begin{figure}[h!]
\centering
\begin{tikzpicture}

\begin{axis}[errplottriple,name=svm_enron_gradientCentered, align=center, title={SVM: Strongest attacks against \\ \gCen{} on Enron}, legend columns= 3, legend style={anchor=north west, xshift=-0.15 \plotwidth, yshift=-0.8\plotheight}]
\addplot[teal, mark=|] table[x=eps, y=err] {figures/svm_enron_gradientCentered/uncorrupted.txt};
\addplot[violet, mark=|] table[x=eps, y=err] {figures/svm_enron_gradientCentered/loss.txt};
\addplot[magenta, mark=|] table[x=eps, y=err] {figures/svm_enron_gradientCentered/gradient.txt};
\addplot[orange, mark=|] table[x=eps, y=err] {figures/svm_enron_gradientCentered/gradientCentered.txt};
\addplot[black, mark=|] table[x=eps, y=err] {figures/svm_enron_gradientCentered/sever.txt};
\legend{uncorrupted, \Loss{}, \gUnc{}, \gCen{}, \sever{}}
\end{axis}

\begin{axis}[errplottriple,name=svm_enron_loss, align=center, at=(svm_enron_gradientCentered.north east),anchor=north west, xshift=\plotxspacing,ignore legend, title={SVM: Strongest attacks \\ against \Loss{} on Enron}]
\addplot[teal, mark=|] table[x=eps, y=err] {figures/svm_enron_loss/uncorrupted.txt};
\addplot[violet, mark=|] table[x=eps, y=err] {figures/svm_enron_loss/loss.txt};
\addplot[magenta, mark=|] table[x=eps, y=err] {figures/svm_enron_loss/gradient.txt};
\addplot[orange, mark=|] table[x=eps, y=err] {figures/svm_enron_loss/gradientCentered.txt};
\addplot[black, mark=|] table[x=eps, y=err] {figures/svm_enron_loss/sever.txt};
\end{axis}

\begin{axis}[errplottriple,name=svm_enron_sever, align=center, at=(svm_enron_loss.north east),anchor=north west, xshift=\plotxspacing,ignore legend, title={SVM: Strongest attacks \\ against \sever{} on Enron}]
\addplot[teal, mark=|] table[x=eps, y=err] {figures/svm_enron_sever/uncorrupted.txt};
\addplot[violet, mark=|] table[x=eps, y=err] {figures/svm_enron_sever/loss.txt};
\addplot[magenta, mark=|] table[x=eps, y=err] {figures/svm_enron_sever/gradient.txt};
\addplot[orange, mark=|] table[x=eps, y=err] {figures/svm_enron_sever/gradientCentered.txt};
\addplot[black, mark=|] table[x=eps, y=err] {figures/svm_enron_sever/sever.txt};
\end{axis}

\end{tikzpicture}
\caption{$\epsilon$ versus test error for baselines and \sever{} on the Enron spam corpus. 
The left and middle figures are the attacks which perform best against two baselines, while the right figure performs best against \sever{}. 
Though other baselines may perform well in certain cases, only \sever{} is consistently accurate. 
The exception is for certain attacks at $\epsilon = 0.03$, which, as shown in Figure~\ref{fig:spam-histogram}, require three rounds of outlier removal for any method to obtain reasonable test error -- in these plots, our defenses perform only two rounds.}
\label{fig:spam-results}
\end{figure}
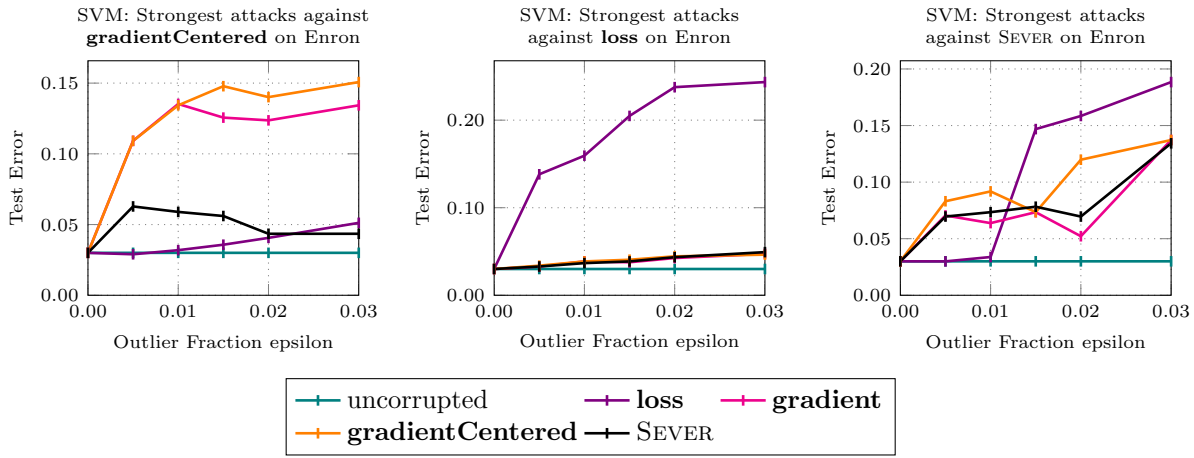

\begin{figure*}[h!]
\includegraphics[width=0.33\textwidth]{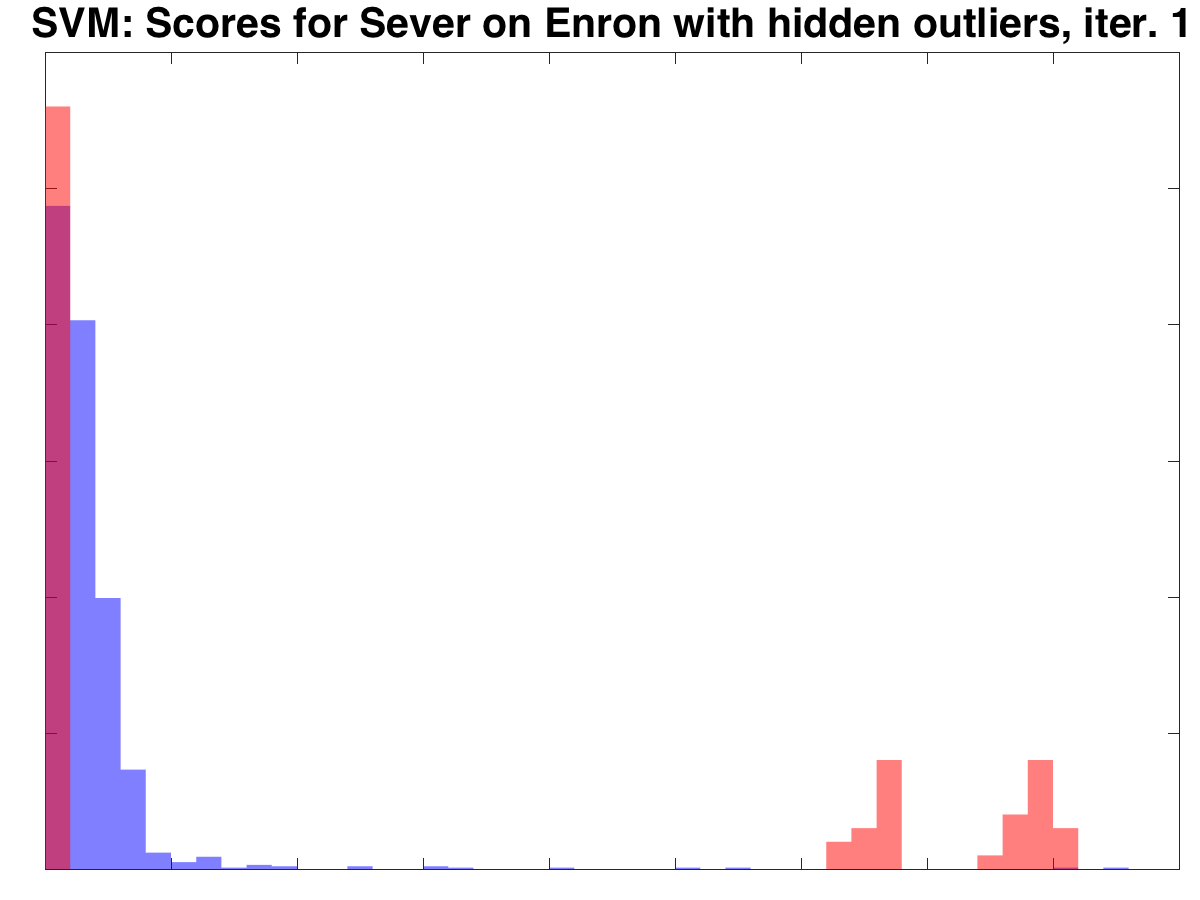} 
\includegraphics[width=0.33\textwidth]{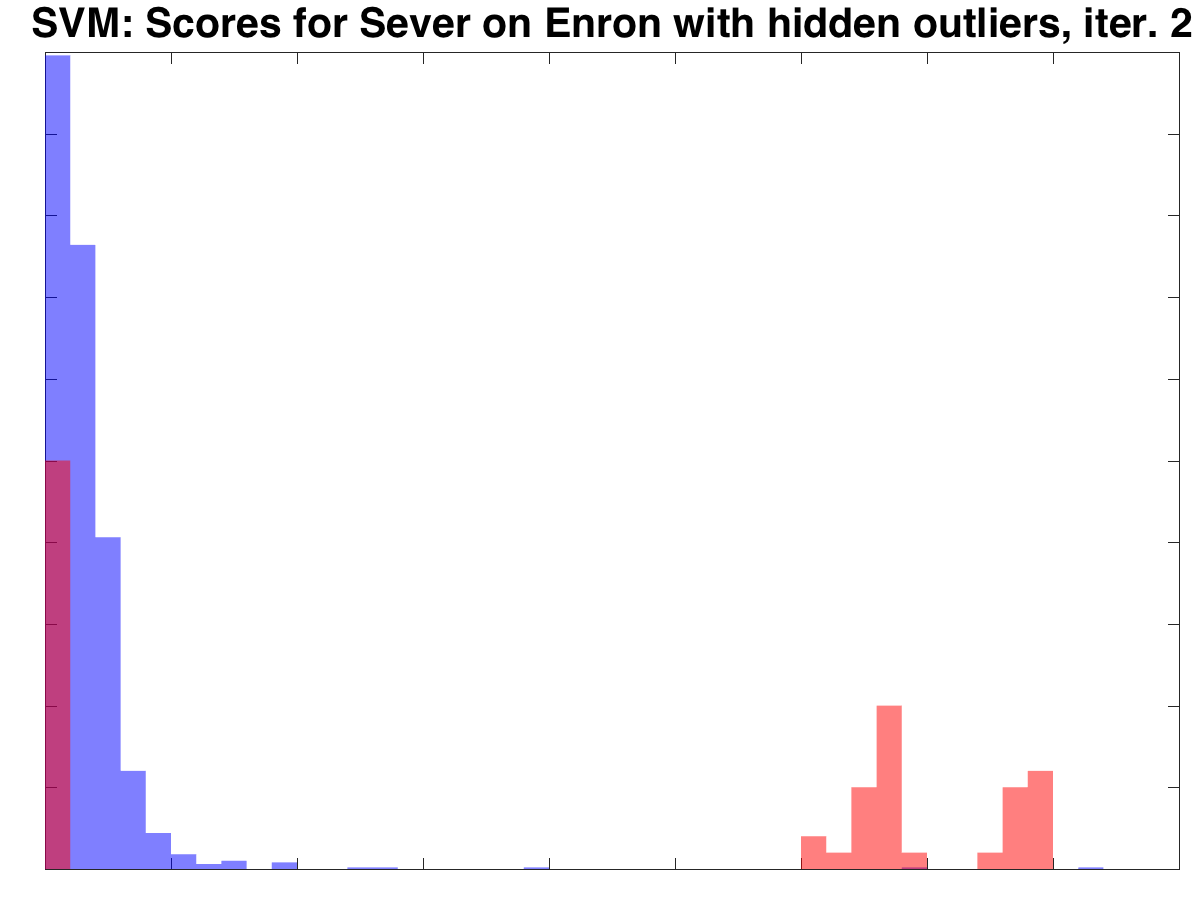} 
\includegraphics[width=0.33\textwidth]{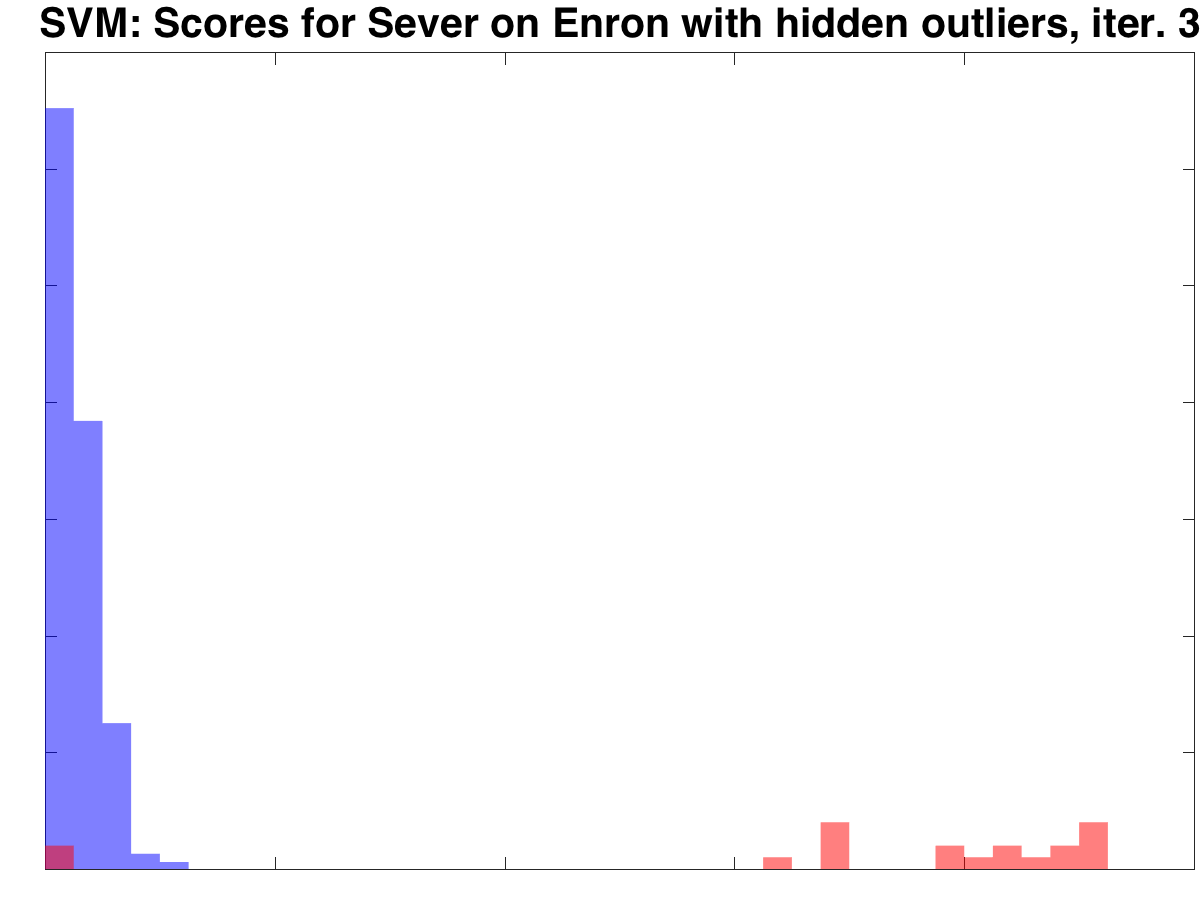} 
\caption{An illustration of why multiple rounds of filtering are necessary. Histograms of scores assigned by \sever{} in three subsequent iterations of outlier removal. Inliers are blue, and outliers are red (scaled up by a factor of 10). In early iterations, a significant fraction of outliers may be ``hidden'' (i.e. have 0 loss) by being correctly classified in one iteration. However, once previous outliers are removed, these points may become incorrectly classified, thus significantly degrading the quality of our solution but simultaneously becoming evident to \sever{}.}
\label{fig:spam-histogram}
\end{figure*}

\paragraph{Spam results.}
For results on Enron, we used the same values of $\epsilon$, and considered $96$ distinct 
hyperparameters for the attack. There was not a single attack that simultaneously defeated 
all of the baselines, so in Figure~\ref{fig:spam-results} we show two attacks that do well 
against different sets of baselines, as well as the attack that performs best against our method. 

At $\epsilon = 0.01$, the worst performance of our method against all attacks was $7.34\%$, 
in contrast to $13.43\%-20.48\%$ for the baselines (note that the accuracy is $3\%$ in the absence of outliers). However, at $\epsilon = 0.03$, while we still outperform the baselines, our error is 
relatively large---$13.53\%$.

To investigate this further, we looked at all $48$ attacks and found that while on 
$42$ out of $48$ attacks our error never exceeded $7\%$, on $6$ of the attacks (including 
the attack in Figure~\ref{fig:spam-results}) the error was substantially higher.
Figure~\ref{fig:spam-histogram} shows what is happening.
The leftmost figure displays the scores assigned by \sever{} after the first iteration, where red bars indicate outliers.
While some outliers are assigned extremely large scores and thus detected, several outliers are correctly classified and thus have 0 gradient.
However, once we remove the first set of outliers, some outliers which were previously correctly classified now have large score, as displayed in the middle figure.
Another iteration of this process produces the rightmost figure, where almost all the remaining outliers have large score and will thus be removed by \sever{}.
This demonstrates that some outliers may be hidden until other outliers are removed, necessitating multiple iterations.

Motivated by this, we re-ran our method against the $6$ attacks using $r = 3$ iterations instead of 
$2$ (and decreasing $p$ as per the expression above). After this 
change, all $6$ of the attacks had error at most $7.4\%$.


\section{Discussion}

In this paper we have presented an algorithm, \sever{}, that has both strong 
theoretical robustness properties in the presence of outliers, and performs well 
on real datasets. \sever{} is based on the idea that learning can often be cast as the 
problem of finding an approximate stationary point of the loss, which can in turn 
be cast as a robust mean estimation problem, allowing us to leverage existing 
techniques for efficient robust mean estimation.

There are a number of directions along which \sever{} could be improved: first, 
it could be extended to handle more general assumptions on the data; second, 
it could be strengthened to achieve better error bounds in terms of the fraction 
of outliers; finally, one could imagine \emph{automatically learning} a feature representation 
in which \sever{} performs well. We discuss each of these ideas in detail below.

\paragraph{More general assumptions.} 
The main underlying assumption on which \sever{} rests is that the top singular value 
of the gradients of the data is small. While this appeared to hold true on the datasets 
we considered, a common occurence in practice is for there to be \emph{a few} large singular 
values, together with \emph{many} small singular values. It would therefore be 
desirable to design a version of \sever{} that can take advantage of such phenomena. In addition, 
it would be worthwhile to do a more detailed empirical analysis across a wide variety of 
datasets investigating properties 
that can enable robust estimation (the notion of \emph{resilience} in~\cite{steinhardt2018resilience} could provide a template for finding 
such properties).

\paragraph{Stronger robustness to outliers.}
In theory, \sever{} has a $\oo(\sqrt{\epsilon})$ dependence in error 
on the fraction $\epsilon$ of outliers (see Theorem~\ref{thm:stationary-point-inf}). 
While without stronger assumptions this is 
likely not possible to improve, 
in practice we would prefer to have a dependence closer to $\oo(\epsilon)$. 
Therefore, it would also be useful to improve \sever{} to have such an $\oo(\epsilon)$-dependence 
under stronger but realistic assumptions. Unfortunately, all existing algorithms for robust mean 
estimation that achieve error better than $\oo(\sqrt{\epsilon})$ either rely on strong 
distributional assumptions such as Gaussianity \citep{DKKLMS16,LaiRV16}, 
or else require expensive computation involving e.g.~sum-of-squares optimization 
\citep{HL17,KStein17,KS17}. Improving the robustness of \sever{} thus requires improvements 
on the robust mean estimation algorithm that \sever{} uses as a primitive.

\paragraph{Learning a favorable representation.}
Finally, we note that \sever{} performs best when the features have small covariance and 
strong predictive power. One situation in particular where this holds is when there are 
many approximately independent features that are predictive of the true signal. 

It would be interesting to try to 
learn a representation with such a property. This could be done, for instance, by training a neural network with some cost function that encourages independent features (some ideas along these general lines are discussed in \citet{bengio2017consciousness}). An issue is how to learn such a representation robustly; one idea is learn a representation on a dataset that is known to be free of outliers, and hope that the representation is useful on other datasets in the same application domain.\\
\\
Beyond these specific questions, we view the general investigation of robust methods (both empirically and theoretically) as an important step as machine learning moves forwards. 
Indeed, as machine learning is applied in increasingly many situations and in 
increasingly automated ways, it is important to attend to robustness considerations 
so that machine learning systems behave reliably and 
avoid costly errors. While the bulk of recent work has highlighted the vulnerabilities 
of machine learning (e.g.~\citep{szegedy2014intriguing,li2016data,steinhardt2017certified,evtimov2017robust,chen2017targeted}), we are optimistic that practical algorithms backed by principled 
theory can finally patch these vulnerabilities and lead to truly reliable systems.

\bibliographystyle{alpha}
\bibliography{allrefs,refdb/all}

\appendix

\section*{Appendix}


The appendix is organized as follows.
In Section~\ref{ssec:notation}, we describe additional preliminaries required for our technical arguments.
In Section~\ref{sec:sever-general}, we analyze our main algorithm, \sever{}.
In Section~\ref{sec:glms}, we specialize our analysis to the important case of Generalized Linear Models (GLMs).
In Section~\ref{sec:general-algo}, we describe a variant of our algorithm which performs robust filtering on each iteration of projected gradient descent, and works under more general assumptions.
In Section~\ref{sec:app-general}, we describe concrete applications of \sever{} -- in particular, how it can be used to robustly optimize in the settings of linear regression, logistic regression, and support vector machines.
Finally, in Section~\ref{sec:experiments-app}, we provide additional plots from our experimental evaluations.

\section{Preliminaries} \label{ssec:notation}

In this section, we formally introduce our setting for robust stochastic optimization.

\paragraph{Notation.} For $n \in \Z_+$, we will denote $[n] \eqdef \{1, \ldots, n\}$.
For a vector $v$, we will let $\| v \|_2$ denote its Euclidean norm.
For any $r \geq 0$ and any $x \in \R^d$, let $B(x, r)$ be the $\ell_2$ ball of radius $r$ around $x$.
If $M$ is a matrix, we will let $\| M \|_2$ denote its spectral norm and $\| M \|_F$ denote its Frobenius norm.
We will write $X \sim_u S$ to denote that $X$ is drawn from the empirical distribution defined by $S$.
We will sometimes use the notation $\E_{S}$, instead of $\E_{X\sim S}$, for the corresponding expectation.
We will also use the same convention for the covariance, i.e. we let $\Cov_S$ denote the covariance over the empirical distribution.

\paragraph{Setting.}
We consider a stochastic optimization setting with outliers.
Let $\dom \subseteq \bR^d$ be a space of parameters. 
We observe $n$ functions $f_1, \ldots, f_n: \dom \to \bR$
and we are interested in (approximately) minimizing some target function 
$\Ef: \dom \to \bR$, \new{related to the $f_i$'s.}
We will assume for simplicity that the $f_i$'s are differentiable with gradient $\nabla f_i$. 
(Our results can be easily extended for the case that only a sub-gradient is available.)

In most concrete applications we will consider, there is some true underlying distribution $p^{\ast}$
over functions $f : \sH \to \bR$, and our goal is to find a parameter vector 
$w^{\ast} \in \sH$ minimizing $\overline{f}(w) \eqdef \bE_{f \sim p^{\ast}}[f(w)]$.
Unlike the classical \new{realizable} setting, where we assume that
$f_1, \ldots, f_n \sim p^{\ast}$, we allow for an $\epsilon$-fraction of the points to be
arbitrary outliers. This is captured in the following definition (Definition~\ref{def:eps-contam})
that we restate for convenience:

\begin{definition}[$\epsilon$-corruption model]
Given $\eps > 0$ and a distribution $p^{\ast}$ over functions $f : \sH \to \bR$, data is generated as follows:
first, $n$ clean samples $f_1, \ldots, f_{n}$ are drawn from $p^{\ast}$.
Then, an \emph{adversary} is allowed to inspect the samples and replace
any $\epsilon n$ of them with arbitrary samples.
The resulting set of points is then given to the algorithm.
\end{definition}

In addition, some of our bounds will make use of the following quantities:
\begin{itemize}
\item The $\ell_2$-radius $\radius$ of the domain $\dom$: $\radius = \max_{w \in \dom} \|w\|_2$.
\item The strong convexity parameter $\strconv$ of $\Ef$, if it exists.
      This is the maximal $\strconv$ such that $\Ef(w) \geq \Ef(w_0) + \langle w - w_0, \nabla \Ef(w_0) \rangle + \frac{\strconv}{2} \|w-w_0\|_2^2$ for all $w, w_0 \in \dom$.
\item The strong smoothness parameter $\strsmooth$ of $\Ef$, if it exists.
      This is the minimal $\strsmooth$ such that $\Ef(w) \leq \Ef(w_0) + \langle w - w_0, \nabla \Ef(w_0) \rangle + \frac{\strsmooth}{2} \|w-w_0\|_2^2$ for all $w, w_0 \in \dom$.
\item The Lipschitz constant $\lip$ of $\Ef$, if it exists.
      This is the minimal $\lip$ such that $\Ef(w) - \Ef(w_0) \leq \lip \|w-w_0\|_2$ for
      all $w, w_0 \in \dom$.
\end{itemize}


\section{General Analysis of \sever{}} \label{sec:sever-general}

This section is dedicated to the analysis of Algorithm~\ref{alg:sever}, where
we do not make convexity assumptions about the underlying functions $f_1, \ldots, f_n$.
In this case, we can show that our algorithm finds an approximate critical point of
$\Ef$.
When we specialize to convex functions, this immediately implies that we find an approximate minimal point of $\Ef$.

Our proof proceeds in two parts.
First, we define a set of deterministic conditions under which our algorithm finds an approximate minimal point of $\Ef$.
We then show that, under mild assumptions on our functions, this set of deterministic conditions holds with high probability after polynomially many samples.

For completeness, we recall the definitions of a $\gamma$-approximate critical
point and a $\gamma$-approximate learner:

\approxcrit*

\approxlearner*

\paragraph{Deterministic Regularity Conditions}

We first explicitly demonstrate a set of deterministic conditions on the (uncorrupted) data points.
Our deterministic regularity conditions are as follows:
\begin{assumption}
\label{ass:one-good-set}
Fix $0<\eps<1/2$. There exists an unknown set $\goodset \subseteq [n]$ with $|\goodset| \geq (1-\eps)n$
of ``good'' functions $\{f_i\}_{i \in \goodset}$ and parameters $\sigma_0, \sigma_1 \in \R_+$
such that:
\begin{equation} \label{eq:norm-bound}
\Big\|\E_{\goodset}\big[\big(\nabla f_i(w) - \nabla \Ef(w)\big)\big(\nabla f_i(w) - \nabla \Ef(w)\big)^T\big]\Big\|_2 \leq (\sigma_0 + \sigma_1 \|\wstar - w\|_2)^2, \textrm{ for all } w \in \dom  \;,
\end{equation}
and
\begin{equation} \label{eq:mean-everywhere}
\|\nabla \hatf(w) - \nabla \Ef(w)\|_2 \leq (\sigma_0 + \sigma_1 \|\wstar - w\|_2)\sqrt{\badfrac},  \textrm{ for all } w \in \dom,
\textrm{ where } \hatf \eqdef \frac{1}{|\goodset|} \sum_{i \in \goodset} f_i \;.
\end{equation}
\end{assumption}


In Section~\ref{sec:stationary-point}, we prove the following theorem, 
which shows that under Assumption~\ref{ass:one-good-set} our algorithm succeeds:

\begin{restatable}{theorem}{stationarypoint}
\label{thm:stationary-point}
Suppose that the functions $f_1,\ldots,f_n,\Ef:\dom\rightarrow \R$ are bounded below, and
that Assumption~\ref{ass:one-good-set} is satisfied, where $\sigma \eqdef \sigma_0 + \sigma_1 \|\wstar - w\|_2.$
Then \sever{} applied to $f_1,\ldots,f_n, \sigma$ returns a point $w \in \dom$
that, with probability at least $9/10$, 
is a $(\gamma+O(\sigma \sqrt{\eps}))$-approximate critical point of $\Ef$.
\end{restatable}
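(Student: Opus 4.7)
The plan is to carry out the two-case analysis sketched in the algorithm overview by tracking the evolution of the working set $S \subseteq [n]$ across iterations. At each round, the learner $\sL$ produces a $\gamma$-approximate critical point $w$ of $\tfrac{1}{|S|}\sum_{i \in S} f_i$, and I would then inspect the top singular direction of the centered gradient matrix at $w$. Either the filter removes a nonempty random subset of points (so we make progress toward a cleaner sample), or the loop terminates, certifying that the spectrum of $\Cov_S[\nabla f_i(w)]$ is $O(\sigma^2)$.

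For the progress case, I would invoke the filter guarantee from~\cite{DKK+17}: whenever the scores $\tau_i = (G_i \cdot v)^2$ witness a large top eigenvalue, the total score mass on $S \setminus \goodset$ dominates that on $S \cap \goodset$, so in expectation the randomized removal eliminates at least as many bad points as good ones. A standard supermartingale coupling then shows that, with probability at least $9/10$, across all iterations the algorithm removes at most $O(\eps n)$ good points in total and therefore terminates after polynomially many rounds. Throughout execution I would maintain the invariant $|S \cap \goodset|/|S| \geq 1 - O(\eps)$; a routine perturbation argument shows that conditions~\eqref{eq:norm-bound} and~\eqref{eq:mean-everywhere} transfer to any such large subset of $\goodset$ with only a constant-factor loss in $\sigma$.

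At termination, the certified covariance bound $\|\Cov_S[\nabla f_i(w)]\|_2 = O(\sigma^2)$ implies, via the standard robust mean estimation calculation, that
\[
\Big\|\tfrac{1}{|S|}\sum_{i \in S}\nabla f_i(w) \;-\; \tfrac{1}{|S \cap \goodset|}\sum_{i \in S \cap \goodset}\nabla f_i(w)\Big\|_2 \;=\; O(\sigma\sqrt{\eps}).
\]
Combining this with the transferred version of~\eqref{eq:mean-everywhere} (applied to $S \cap \goodset$) yields $\|\tfrac{1}{|S|}\sum_{i \in S}\nabla f_i(w) - \nabla \Ef(w)\|_2 = O(\sigma\sqrt{\eps})$. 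Since $w$ is a $\gamma$-approximate critical point of $\tfrac{1}{|S|}\sum_{i \in S} f_i$, for every unit vector $v$ with $w + \delta v \in \dom$ we would obtain $v\cdot \nabla\Ef(w) \geq -\gamma - O(\sigma\sqrt{\eps})$, which is precisely the stated conclusion.

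The main obstacle I anticipate is the probabilistic bookkeeping for the iterated randomized filter: one must couple the random removals across rounds to set up the supermartingale that bounds the total number of good points lost, and show the corresponding failure event has probability at most $1/10$. A smaller but related subtlety is that $w$ changes every iteration, so~\eqref{eq:norm-bound} must be invoked uniformly over $w \in \dom$, and the filter's rejection threshold must be calibrated to the input $\sigma$ rather than to a data-dependent estimate, so that the ``either remove many bad points or terminate cleanly'' dichotomy holds at every round.
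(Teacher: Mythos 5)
Your proposal is correct and follows essentially the same route as the paper: a filter-progress lemma showing that, in expectation, at least as much score mass (and hence as many removed points) comes from $S\setminus\goodset$ as from $S\cap\goodset$; a nonnegative supermartingale (the paper tracks $|([n]\setminus\goodset)\cap S| + |\goodset\setminus S|$) controlled via Ville's/Doob's maximal inequality to get the $9/10$ bound; and a termination lemma showing that when the filter stops, the certified spectral bound plus Cauchy--Schwarz yield $\|\tfrac{1}{|S|}\sum_{i\in S}\nabla f_i(w) - \nabla\Ef(w)\|_2 = O(\sigma\sqrt\eps)$, which combines with the $\gamma$-critical-point property of the learner's output to finish. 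The only cosmetic differences are that the paper proves termination deterministically in $\leq n$ rounds (each filter call strictly shrinks $S$) rather than via the supermartingale, and its final-set lemma bounds the deviation from $\nabla\Ef$ directly by splitting the sum over $\goodset$, $\goodset\setminus S$, and $S\setminus\goodset$ rather than explicitly transferring Assumption~\ref{ass:one-good-set} to $S\cap\goodset$, though both are harmless reformulations of the same calculation.
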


Observe that the above theorem holds quite generally; in particular, it holds for non-convex functions.
As a corollary of this theorem, in Section~\ref{sec:sever-convex} we show that this immediately 
implies that \sever{} robustly minimizes convex functions, if Assumption~\ref{ass:one-good-set} holds:
\begin{corollary}\label{cor:convex-sever}
 For functions $f_1, \ldots, f_n: \dom \to \R$, suppose that Assumption~\ref{ass:one-good-set}
 holds and that $\dom$ is convex.
 Then, with probability at least $9/10$, for some universal constant $\badfrac_0$, if $\badfrac < \badfrac_0$, 
 the output of \sever{} satisfies the following:
 \begin{enumerate}
 \item[(i)] If $\Efunc$ is convex, the algorithm finds a $w \in \dom$ such that
       $\Ef(w) - \Ef(\wstar) = O((\sigma_0\radius + \sigma_1\radius^2)\sqrt{\badfrac} + \gamma r)$.
 \item[(ii)] If $\Efunc$ is $\strconv$-strongly convex,  the algorithm finds a $w \in \dom$ such that
       \[
       \Ef(w) - \Ef(\wstar) = O\left(\frac{\eps}{\strconv} (\sigma_0 + \sigma_1 r)^2 + \frac{\gamma^2}{\strconv} \right) \; .
       \]
 \end{enumerate}
 \end{corollary}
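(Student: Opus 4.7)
}

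The plan is to deduce both parts directly from Theorem~\ref{thm:stationary-point}, using only convex analysis on top of the approximate critical point guarantee. Since $\dom$ has $\ell_2$-radius $r$, any two points $w,w^{\ast}\in\dom$ satisfy $\|w-w^{\ast}\|_2 \leq 2r$, so the $w$-dependent bound $\sigma \eqdef \sigma_0 + \sigma_1\|w^{\ast}-w\|_2$ appearing in Assumption~\ref{ass:one-good-set} is uniformly controlled by $\sigma \leq \sigma_0 + 2\sigma_1 r$. Substituting into Theorem~\ref{thm:stationary-point}, the point $w$ returned by \sever{} is, with probability at least $9/10$, a $\gamma'$-approximate critical point of $\Ef$ with $\gamma' = \gamma + O((\sigma_0 + \sigma_1 r)\sqrt{\eps})$. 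This is the only fact about \sever{} we will need; the rest is purely geometric.

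For part (i), assuming $\Ef$ is convex, I would first note that because $\dom$ is convex and $w,w^{\ast}\in\dom$, the unit vector $v = (w^{\ast}-w)/\|w^{\ast}-w\|_2$ satisfies $w+\delta v \in \dom$ for all sufficiently small $\delta > 0$. Applying Definition~\ref{def:approx-crit} to this $v$ yields $\nabla \Ef(w) \cdot (w^{\ast}-w) \geq -\gamma' \|w^{\ast}-w\|_2$. Combining with the convexity inequality $\Ef(w^{\ast}) \geq \Ef(w) + \nabla \Ef(w)\cdot(w^{\ast}-w)$ gives $\Ef(w) - \Ef(w^{\ast}) \leq \gamma'\|w^{\ast}-w\|_2 \leq 2r\gamma'$, which is exactly the claimed $O((\sigma_0 r + \sigma_1 r^2)\sqrt{\eps} + \gamma r)$ bound.

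For part (ii), I would repeat the same argument but strengthen the convexity step: $\xi$-strong convexity gives $\Ef(w^{\ast}) \geq \Ef(w) + \nabla \Ef(w)\cdot(w^{\ast}-w) + \tfrac{\xi}{2}\|w^{\ast}-w\|_2^2$, which after rearrangement and applying the approximate critical point inequality yields
\[
\Ef(w) - \Ef(w^{\ast}) \;\leq\; \gamma'\|w-w^{\ast}\|_2 - \tfrac{\xi}{2}\|w-w^{\ast}\|_2^2.
\]
Maximizing the right-hand side over $\|w-w^{\ast}\|_2 \geq 0$ gives $\Ef(w) - \Ef(w^{\ast}) \leq (\gamma')^2/(2\xi)$. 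Expanding $(\gamma')^2 = O(\gamma^2 + \eps(\sigma_0+\sigma_1 r)^2)$ (for $\eps$ below some absolute constant $\eps_0$, so that the constants from Theorem~\ref{thm:stationary-point} do not interfere) yields the claimed bound.

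The only subtlety is the self-referential nature of $\sigma$, which depends on $\|w^{\ast}-w\|_2$; the cleanest resolution is to absorb it into the global diameter bound $\|w^{\ast}-w\|_2 \leq 2r$ up front, rather than trying to run a sharper fixed-point argument. This loses at most a constant factor relative to the form stated in the corollary and avoids any circular use of the very bound we are trying to prove in part (ii). Beyond that step, both parts are short consequences of convexity and the definition of an approximate critical point, so I do not anticipate any serious technical obstacle.
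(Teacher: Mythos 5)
Your proposal is correct and follows the same high-level structure as the paper: apply Theorem~\ref{thm:stationary-point} to get a $\gamma'$-approximate critical point, bound the self-referential $\sigma \leq \sigma_0 + 2\sigma_1 r$ via the diameter of $\dom$, then translate approximate criticality into approximate optimality by convexity. The one place you genuinely diverge is the last step: the paper routes this through Lemma~\ref{lem:derivatives-to-sizes}, which restricts $\Ef$ to the segment $[w,w^*]$, observes that the restriction $g(t)$ has $|g'|\leq\delta$ on the whole segment (for (i)) and $g''\geq\xi$ (for (ii)), and integrates; that lemma also needs a matching critical-point condition at $w^*$, which holds because $w^*$ is the constrained minimizer. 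You instead apply the global first-order convexity inequality $\Ef(w^*) \geq \Ef(w) + \nabla\Ef(w)\cdot(w^*-w)$ (with the extra $\frac{\xi}{2}\|w-w^*\|_2^2$ term for strong convexity) together with the criticality bound at $w$ alone, and maximize the resulting quadratic. This is slightly more elementary — it is one-sided and needs no condition at $w^*$, so the argument is self-contained at $w$ — whereas the paper's lemma gives a reusable two-sided estimate $|f(x)-f(y)|\leq\|x-y\|_2\delta$ that is also invoked in Corollary~\ref{cor:strongly-convex-sever} to control $\|w-w^*\|_2$ itself. Both are sound; yours is arguably the cleaner derivation if the goal is only this corollary, while the paper's is the more economical choice given that the lemma is amortized across several results.
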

 
 \new{In the strongly convex case and when $\sigma_1 > 0$, we can remove the dependence on $\sigma_1$ and $r$ in the above by repeatedly applying  \sever{} with decreasing $r$:
 
 \begin{corollary}\label{cor:strongly-convex-sever}
 For functions $f_1, \ldots, f_n: \dom \to \R$, suppose that Assumption~\ref{ass:one-good-set}
 holds, that $\dom$ is convex and that $\Efunc$ is $\strconv$-strongly convex for $\strconv \geq C \sigma_1 \sqrt{\eps}$ for some absolute constant $C$.
 Then, with probability at least $9/10$, for some universal constant $\badfrac_0$, if $\badfrac < \badfrac_0$, 
 we can find a $\what$ with 
  \[
       \Ef(\what) - \Ef(\wstar) = O\left(\frac{\eps \sigma_0^2 +\gamma^2}{\strconv} \right) \; .
  \]
  and 
  \[
  \|\what-\wstar\|_2 = O\left(\frac{\sqrt{\eps} \sigma_0 + \gamma}{\strconv}\right)
  \]
 
 using at most $O(\log(\radius\strconv/(\gamma +\sigma_0\sqrt{\badfrac})))$ calls to \sever{}.
\end{corollary}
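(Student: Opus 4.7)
}

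The plan is to run \sever{} repeatedly on progressively smaller balls centered at the current iterate, using strong convexity to certify that $\wstar$ stays inside each new domain. The starting observation is that Corollary~\ref{cor:convex-sever}(ii) gives $\Ef(\what)-\Ef(\wstar)=O((\eps/\xi)(\sigma_0+\sigma_1 r)^2+\gamma^2/\xi)$, and then $\strconv$-strong convexity of $\Ef$ converts this function-value bound into the parameter-space bound
\[
\|\what-\wstar\|_2 \;=\; O\!\left(\frac{\sqrt{\eps}\,(\sigma_0+\sigma_1 r)+\gamma}{\strconv}\right),
\]
via $\tfrac{\strconv}{2}\|\what-\wstar\|_2^2 \le \Ef(\what)-\Ef(\wstar)$.

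Next I would iterate. Set $\dom_0=\dom$ with radius $r_0=r$, and let $\what_0$ be the output of \sever{} on $\dom_0$. For $k\ge 1$, take $r_k$ to be the parameter-space bound above evaluated at $r_{k-1}$ and define $\dom_k = B(\what_{k-1},r_k)\cap\dom_0$; this is convex and has $\ell_2$-radius at most $r_k$. The crucial point is that, by construction, $\|\what_{k-1}-\wstar\|_2\le r_k$, so $\wstar\in\dom_k$, and hence Assumption~\ref{ass:one-good-set} (which only quantifies over $w\in\dom_k$) continues to hold on the restricted domain with the same parameters. Thus we may apply Corollary~\ref{cor:convex-sever}(ii) to $\sever{}$ run on $\dom_k$ to obtain a new iterate $\what_k$ satisfying the analogous bound with $r$ replaced by $r_k$.

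The key step is the contraction. The recursion for the radii becomes
\[
r_{k+1} \;\le\; c_1\cdot\frac{\sqrt{\eps}\,\sigma_1}{\strconv}\,r_k \;+\; c_2\cdot\frac{\sqrt{\eps}\,\sigma_0+\gamma}{\strconv},
\]
for absolute constants $c_1,c_2$. Choosing the constant $C$ in the hypothesis $\strconv \ge C\sigma_1\sqrt{\eps}$ so that $c_1\sqrt{\eps}\sigma_1/\strconv\le 1/2$, this is a geometric contraction: $r_{k+1}\le \tfrac12 r_k + c_2(\sqrt{\eps}\sigma_0+\gamma)/\strconv$. Unrolling gives $r_k \le 2^{-k}r_0 + 2c_2(\sqrt{\eps}\sigma_0+\gamma)/\strconv$, so after $K=O(\log(r\strconv/(\sqrt{\eps}\sigma_0+\gamma)))$ calls we reach the fixed-point scale $\|\what_K-\wstar\|_2=O((\sqrt{\eps}\sigma_0+\gamma)/\strconv)$, which is exactly the claimed bound; plugging this back into Corollary~\ref{cor:convex-sever}(ii) (or equivalently, using $\Ef(\what)-\Ef(\wstar)\le \tfrac{\strsmooth}{2}\|\what-\wstar\|_2^2$ together with the strongly convex bound) yields the claimed $O((\eps\sigma_0^2+\gamma^2)/\strconv)$ excess risk.

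The main obstacle I expect is the probability budget: each call to \sever{} succeeds with probability only $9/10$, and we make $K=O(\log(\cdot))$ calls, so a naive union bound gives vacuous failure probability. The fix is a standard amplification: boost the success probability of each inner \sever{} invocation to $1-1/(10K)$ by running it $O(\log K)$ times and selecting the iterate with smallest empirical regularized loss (or simply the one closest in $\ell_2$ to a robust mean of the candidates), so that the full chain succeeds with probability at least $9/10$. Apart from this bookkeeping, the only thing to verify carefully is that the sequence $\{\dom_k\}$ remains well-defined — i.e., that the hypothesis of Corollary~\ref{cor:convex-sever} (Assumption~\ref{ass:one-good-set} on $\dom_k$ and convexity) is inherited from $\dom_0$, which is immediate since restricting the domain only restricts the universal quantifiers in \eqref{eq:norm-bound} and \eqref{eq:mean-everywhere}.
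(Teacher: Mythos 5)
Your plan reproduces the paper's own argument essentially step for step: the paper likewise iterates \sever{} on shrinking domains $\dom_{i+1}=\dom\cap B(w_i,r_{i+1})$, derives the radius recursion $r_{i+1}=2\gamma/\strconv + O(\sigma_0\sqrt{\eps}/\strconv + \sigma_1\sqrt{\eps}\,r_i/\strconv)$, uses the hypothesis $\strconv\ge C\sigma_1\sqrt{\eps}$ to make this a half-rate contraction plus a fixed-point term, and after $O(\log(r\strconv/(\gamma+\sigma_0\sqrt{\eps})))$ iterations reads off the claimed bounds. The only cosmetic difference is that you invoke Corollary~\ref{cor:convex-sever}(ii) and then convert to a parameter-space bound via $\tfrac{\strconv}{2}\|\what-\wstar\|_2^2\le\Ef(\what)-\Ef(\wstar)$, while the paper applies Lemma~\ref{lem:derivatives-to-sizes} directly, which hands you $\|w_i-\wstar\|_2\le 2\gamma_i'/\strconv$ in one step; these are the same estimate. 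One small imprecision: the quantity that should shrink is $\sup_{w\in\dom_k}\|\wstar-w\|_2$ (this is what multiplies $\sigma_1$ in Assumption~\ref{ass:one-good-set}), not the $\ell_2$-radius from the origin; since $\wstar\in\dom_k$ and $\dom_k\subseteq B(\what_{k-1},r_k)$ with $\|\what_{k-1}-\wstar\|_2\le r_k$, this supremum is at most $2r_k$, so the contraction goes through with an extra factor of $2$ absorbed into constants.

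The one place you go beyond the paper is the failure-probability budget. You are right to flag it: each run of \sever{} succeeds with probability $9/10$ via a supermartingale/Markov argument, the corollary makes $\Theta(\log(\cdot))$ independent calls, and the paper's proof does not say how the overall $9/10$ is preserved. Your amplification fix is sound in spirit, but the selection rule ``smallest empirical regularized loss'' is itself corruptible; safer choices are to select the candidate whose filtered set triggers no further removals (i.e.\ passes the stopping test of Algorithm~\ref{alg:filter}), or to take a coordinatewise median / robust mean of the $O(\log K)$ candidate iterates. An even cleaner fix, which matches the structure of the proof of Theorem~\ref{thm:stationary-point}, is to \emph{not reset} the working set $S$ between outer calls: if the filtered set is carried forward, the quantity $|([n]\setminus\goodset)\cap S|+|\goodset\setminus S|$ remains a single supermartingale across all outer iterations, and one application of the maximal inequality bounds the total failure probability by $1/10$ with no logarithmic loss and no amplification.
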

 }

To concretely use Theorem~\ref{thm:stationary-point}, Corollary~\ref{cor:convex-sever}, and Corollary~\ref{cor:strongly-convex-sever}, in Section~\ref{sec:sever-sample-complexity} we
show that the Assumption~\ref{ass:one-good-set} is satisfied with high probability under mild conditions on the distribution over the functions, after drawing polynomially many samples:

\begin{proposition} \label{prop:sample-bound}
Let $\dom \subset \R^d$ be a closed bounded set with diameter at most $r$.
Let $p^{\ast}$ be a distribution over functions $f:\dom\rightarrow \R$ with $\Ef=\E_{f \sim p^{\ast}}[f]$
so that $f-\Ef$ is $L$-Lipschitz and $\beta$-smooth almost surely.
Assume furthermore that for each $w\in \dom$ and unit vector $v$ that
$\E_{f \sim p^{\ast}}[(v\cdot (\nabla f(w)-\Ef(w)))^2] \leq \sigma^2 /2.$
Then for 
\[
n = \Omega \left( \frac{d L^2 \log (r \beta L / \sigma^2 \eps)}{\sigma^2 \eps} \right) \; ,
\]
an $\eps$-corrupted set of points $f_1,\ldots,f_n$  with high probability
satisfy Assumption~\ref{ass:one-good-set}.
\end{proposition}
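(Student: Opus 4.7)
\textbf{Proof Plan for Proposition~\ref{prop:sample-bound}.}

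The plan is to take the good set $\goodset$ to be exactly the set of (at least $(1-\eps)n$) indices whose functions were not tampered with by the adversary; these are i.i.d.\ draws from $p^\ast$. I then need to verify that conditions~\eqref{eq:norm-bound} and~\eqref{eq:mean-everywhere} hold uniformly over $w \in \dom$ with $\sigma_0 = \sigma$ and $\sigma_1 = 0$. The strategy is a standard discretize-and-union-bound argument, using the Lipschitz/smoothness hypotheses to transfer a pointwise concentration result to a uniform one.

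\textbf{Step 1 (Pointwise concentration).} Fix $w \in \dom$. The centered gradients $Z_i \eqdef \nabla f_i(w) - \nabla \Ef(w)$ (for $i \in \goodset$) are i.i.d., zero-mean, have operator-norm-bounded covariance $\|\E[Z_iZ_i^T]\|_2 \leq \sigma^2/2$, and satisfy $\|Z_i\|_2 \leq 2L$ almost surely by the Lipschitz assumption. I would apply the matrix Bernstein inequality to control the empirical second moment $\frac{1}{|\goodset|}\sum_{i \in \goodset} Z_iZ_i^T$, and Bernstein (scalar) along each direction $v$ to control $\|\frac{1}{|\goodset|}\sum_{i \in \goodset} Z_i\|_2$. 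With $n \gtrsim dL^2 \log(1/\delta)/(\sigma^2 \eps)$ samples, both the empirical covariance norm is at most $\sigma^2$ and the empirical mean is at most $\sigma\sqrt\eps/2$ in $\ell_2$-norm, with probability $1 - \delta$ (the former follows since we only need convergence of $\|\cdot\|_2$ to within an additive $\sigma^2/2$).

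\textbf{Step 2 (Net and union bound).} Let $N \subset \dom$ be a minimal $\eta$-net of $\dom$, so $|N| \leq (3r/\eta)^d$. Setting $\delta = (3r/\eta)^{-d}/20$ and applying Step 1 simultaneously at every point of $N$, the desired concentration holds at every net point as long as
\[
n \;\gtrsim\; \frac{d L^2 \log(r/\eta)}{\sigma^2 \eps}.
\]

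\textbf{Step 3 (Interpolation to all of $\dom$).} For arbitrary $w \in \dom$, pick $w' \in N$ with $\|w - w'\|_2 \leq \eta$. Smoothness of $f - \Ef$ gives $\|\nabla f_i(w) - \nabla f_i(w') - (\nabla \Ef(w) - \nabla \Ef(w'))\|_2 \leq \beta \eta$, and combined with the Lipschitz bound $\|\nabla f_i(w') - \nabla\Ef(w')\|_2 \leq 2L$ this lets me bound the change in empirical mean by $\beta\eta$ and the change in empirical covariance operator norm by $O(\beta L \eta)$ (using $\|AA^T - BB^T\|_2 \leq \|A-B\|_2(\|A\|_2 + \|B\|_2)$ at the sample level). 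Choosing $\eta \asymp \sigma^2 \eps /(\beta L)$ makes these perturbations absorbable into the Step 1 slack, and the $\log(r/\eta)$ factor becomes the $\log(r\beta L/\sigma^2 \eps)$ appearing in the proposition.

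\textbf{Main obstacle.} The genuinely delicate part is the covariance condition: matrix Bernstein with a subgaussian-type tail produces the $d$ factor in the sample complexity, but I must be careful that I am bounding $\|\E[ZZ^T]\|_2$ (operator norm) not the trace, since the hypothesis only controls the former. I will use matrix Bernstein with the intrinsic/effective-dimension refinement, or equivalently combine scalar Bernstein in each direction with an $\frac{1}{4}$-net of the unit sphere in $\R^d$ (contributing the $d$ factor via $\log 9^d$). The interpolation step for the covariance is the other place where constants must be tracked carefully, since $AA^T - BB^T$ picks up the \emph{norm} of $A$ and $B$ rather than only their difference, which is why the Lipschitz bound $L$ (and not just smoothness $\beta$) enters the final sample complexity.
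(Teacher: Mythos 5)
Your plan follows essentially the same route as the paper's own proof: take $\goodset$ to be the uncorrupted samples, establish pointwise concentration of the gradient mean and second moment via Bernstein-type bounds (the paper uses Chernoff plus a $2^{O(d)}$ net of the sphere, which is the ``scalar Bernstein in each direction'' variant you mention), then transfer to all of $\dom$ via a $\min\bigl(\sigma\sqrt{\eps}/\beta,\;\sigma^2/(L\beta)\bigr)$-net of the domain using the smoothness and Lipschitz hypotheses, and finally union bound. Your choice of net granularity $\eta\asymp\sigma^2\eps/(\beta L)$ is a bit smaller than the paper's but yields the same sample complexity up to constants, and your observation that $AA^T-BB^T$ picks up the Lipschitz scale $L$ rather than only $\beta$ is exactly the point the paper encodes in its $\sigma^2/(2L\beta)$-ball argument.

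There is one step you've glossed over that is not free: you assert that the $Z_i$ for $i\in\goodset$ ``are i.i.d.'' and then apply Bernstein directly to the good set. This is not quite right. The $n$ clean draws are i.i.d., but $\goodset$ is the subset that the adversary \emph{chose not to replace}, and that choice is made after inspecting all $n$ samples. So the empirical distribution over $\goodset$ is an adversarially-selected sub-sample, and one cannot invoke i.i.d.\ concentration for it directly. The effect is not vacuous: removing a worst-case $\eps n$ of the clean samples can shift the empirical mean by up to $\Theta(\sigma\sqrt{\eps})$, which is the same order as the target error. The correct route (the one the paper takes, compressed into two parenthetical remarks) is: first prove the Bernstein/Chernoff concentration for the full set of $n$ clean draws, concluding that the pre-corruption empirical mean is within, say, $\sigma\sqrt{\eps}/3$ of $\nabla\Ef(w)$ and the second-moment matrix has operator norm at most $(2/3)\sigma^2$; then argue separately that the adversary deleting any $\eps n$ of those points perturbs the mean by at most $O(\sigma\sqrt{\eps})$ (via Cauchy--Schwarz against the second-moment bound you just established) and inflates the second-moment bound by at most a factor $1/(1-\eps)$. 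Your Step~1 should be split into these two sub-steps; as written, applying a concentration inequality to $\{Z_i\}_{i\in\goodset}$ as if it were an i.i.d.\ sample is the one place where the argument would not compile.
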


\noindent
The remaining subsections are dedicated to the proofs of Theorem~\ref{thm:stationary-point}, Corollary~\ref{cor:convex-sever}, Corollary~\ref{cor:strongly-convex-sever}, and Proposition~\ref{prop:sample-bound}.

\subsection{Proof of Theorem~\ref{thm:stationary-point}}
\label{sec:stationary-point}
\noindent Throughout this proof we let $\goodset$ be as in Assumption~\ref{ass:one-good-set}.
We require the following two lemmata.
Roughly speaking, the first states that on average, we remove more corrupted points than uncorrupted points, and the second states that at termination, and if we have not removed too many points, then we have reached a point at which the empirical gradient is close to the true gradient.
Formally:

\begin{lemma}\label{lem:bad-elts}
If the samples satisfy \eqref{eq:norm-bound} of Assumption~\ref{ass:one-good-set}, and if $|S|\geq 2n/3$ then if $S'$ is the output of $\textsc{Filter}(S, \tau, \sigma)$, we have that
$$
\E[|\goodset\cap (S\backslash S')|] \leq \E[|([n]\backslash\goodset)\cap(S\backslash S')|].
$$
\end{lemma}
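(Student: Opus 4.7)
The plan is to reduce this to a standard calculation about the randomized filter of \cite{DKK+17}. Recall that Algorithm~\ref{alg:filter} removes each point $i \in S$ independently with probability $\tau_i/\tau_{\max}$, where $\tau_{\max} = \max_{j \in S} \tau_j$. Therefore $\E[|\goodset\cap(S\setminus S')|] = \tau_{\max}^{-1}\sum_{i\in T}\tau_i$ and $\E[|([n]\setminus\goodset)\cap(S\setminus S')|] = \tau_{\max}^{-1}\sum_{i\in B}\tau_i$, where $T := \goodset\cap S$ and $B := S\setminus T$. So it suffices to establish the deterministic inequality $\sum_{i\in T}\tau_i \leq \sum_{i\in B}\tau_i$.

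To analyze these sums, I would use the bias--variance decomposition. Writing $\mu_S,\mu_T,\mu_B$ for the empirical means of $\nabla f_i(w)$ over $S,T,B$ and noting $\mu_T-\mu_S = (|B|/|S|)(\mu_T-\mu_B)$, a short computation gives
\begin{align*}
\sum_{i\in T}\tau_i &= |T|\,v^\top \Cov_T[\nabla f_i(w)] v + |T|(|B|/|S|)^2\,(v\cdot(\mu_T-\mu_B))^2, \\
\sum_{i\in B}\tau_i &= |B|\,v^\top \Cov_B[\nabla f_i(w)] v + |B|(|T|/|S|)^2\,(v\cdot(\mu_T-\mu_B))^2,
\end{align*}
where $v$ is the top singular vector picked by the filter. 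The crucial geometric observation is that $|T| \geq 2n/3 - \eps n$ and $|B| \leq n/3 + \eps n$, so for $\eps$ sufficiently small we have $|T| > |B|$ and hence $|B|(|T|/|S|)^2 > |T|(|B|/|S|)^2$. Thus the mean--shift term contributes \emph{more} to the bad sum than to the good sum, and the inequality reduces to bounding the covariance contributions.

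For the good-covariance term, I would use Assumption~\ref{ass:one-good-set}\eqref{eq:norm-bound}: since $T\subseteq\goodset$ and $|T|\geq |\goodset|/2$ (using $|S|\geq 2n/3$), a standard sub-sampling bound shifts the covariance by only a bounded factor, yielding $v^\top\Cov_T v \leq O(\sigma^2)$. On the other hand, the filter triggers only when $\|\Cov_S\|_{\op} = v^\top\Cov_S v$ exceeds some threshold $C\sigma^2$, so $\sum_{i\in S}\tau_i \geq C|S|\sigma^2$, and hence $\sum_{i\in B}\tau_i \geq C|S|\sigma^2 - |T|\cdot O(\sigma^2) - |T|(|B|/|S|)^2 (v\cdot(\mu_T-\mu_B))^2$. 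Choosing the threshold constant $C$ large enough (relative to the hidden constants coming from the sub-sampling bound) makes the covariance contribution to the bad sum dominate the good covariance term, and combined with the mean-shift comparison from the previous paragraph, the desired inequality $\sum_T\tau_i \leq \sum_B\tau_i$ follows.

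The main obstacle will be carefully tracking the two absolute constants: the sub-sampling factor for $v^\top\Cov_T v$ (which depends on how far $|T|/|\goodset|$ can drop) and the filter threshold constant $C$. One must ensure these are compatible with the hypothesis $|S|\geq 2n/3$. All routine manipulations aside, this is just a matching of constants; no new ideas beyond the standard filter analysis are required, once the role of the mean--shift term as a net positive contributor to the bad side is recognized.
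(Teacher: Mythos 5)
Your proposal uses exactly the same ingredients as the paper's proof: the reduction to the deterministic inequality $\sum_{i\in T}\tau_i \le \sum_{i\in B}\tau_i$ (since removal probabilities are $\tau_i/\tau_{\max}$ for every $i\in S$), the bias--variance decomposition of each sum around $\mu_S$, the sub-sampling bound $v^\top\Cov_T v = O(\sigma^2)$ from \eqref{eq:norm-bound} using $|T|\gtrsim |\goodset|$, and the filter's trigger $\E_{i\in S}[\tau_i] \ge 12\sigma^2$. Writing the mean shifts via $\Delta=\mu_T-\mu_B$ rather than, as the paper does, via $\mu_S-\mu_T$ and $\mu_S-\mu_B$ together with $|\mu_S-\mu_B||B|=|\mu_S-\mu_T||T|$ is only a cosmetic change. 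So this is the paper's argument, not a different route. One sentence in your sketch is, however, wrong if taken literally: ``the inequality reduces to bounding the covariance contributions.'' It does not --- $\Cov_B$ can be zero (e.g.\ all outliers placed at one point) while $v^\top\Cov_T v$ can be $\Theta(\sigma^2)$, so comparing covariance terms alone fails. The threshold must be invoked precisely when $v\cdot\Delta$ is small, and the mean-shift comparison precisely when $v\cdot\Delta$ is large; your closing paragraph gestures at ``combining'' the two but never states this dichotomy. Pinning it down is exactly the case split the paper makes on whether $(\mu_S-\mu_T)^2 \gtrless 4\sigma^2$, so this is a genuine case analysis and not just ``a matching of constants.''
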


\begin{lemma}\label{lem:final-set}
If the samples satisfy Assumption~\ref{ass:one-good-set}, $\textsc{Filter}(S, \tau, \sigma) = S$, and $n-|S| \leq 11 \eps n$, then
$$
\left\|\nabla \Ef({w}) - \frac{1}{|\goodset|}\sum_{i\in S} \nabla f_i({w})\right\|_2 \leq O(\sigma \sqrt{\eps})
$$
\end{lemma}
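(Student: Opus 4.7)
Let $S_g = S \cap \goodset$ denote the good points that survived the filter and $S_b = S \setminus \goodset$ the bad points that survived. Since $|\goodset| \geq (1-\eps)n$ and $n - |S| \leq 11\eps n$, we have $|S_b| \leq \eps n$ and $|\goodset \setminus S_g| \leq 11\eps n$, so both $|S_g|/|\goodset|$ and $|\goodset|/|S|$ equal $1 + O(\eps)$. My approach is to bound three quantities by $O(\sigma\sqrt{\eps})$ and combine them via the triangle inequality: (i) $\|\mu_S - \mu_{S_g}\|_2$, the contribution of the bad points that survived; (ii) $\|\mu_{S_g} - \nabla \hatf(w)\|_2$, the deviation of the surviving-good-subset mean from the full good mean; and (iii) $\|\nabla \hatf(w) - \nabla \Ef(w)\|_2$, which is at most $\sigma\sqrt{\eps}$ by Assumption~\ref{ass:one-good-set}(ii). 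Here $\mu_T \eqdef \frac{1}{|T|}\sum_{i\in T}\nabla f_i(w)$ for any set $T$. The small discrepancy between $\frac{1}{|\goodset|}\sum_{i\in S}\nabla f_i$ in the lemma and $\mu_S$ is a factor $|S|/|\goodset| = 1 + O(\eps)$, and can be absorbed into the target error once we have shown $\|\mu_S\|_2$ is controlled.

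For (i), the crucial input is that \textsc{Filter}$(S, \tau, \sigma) = S$. Since the scores $\tau_i = (G_i \cdot v)^2$ are computed from the top right singular vector $v$ of the centered gradient matrix $G$, the filter's non-removal condition, as in the robust mean estimation algorithm of~\cite{DKK+17}, certifies a spectral bound on the empirical centered covariance $\Sigma_S \eqdef \frac{1}{|S|}\sum_{i\in S}(\nabla f_i(w) - \mu_S)(\nabla f_i(w) - \mu_S)^T$, namely $\|\Sigma_S\|_{\op} = O(\sigma^2)$. Given this, the standard covariance decomposition
\[
\Sigma_S \;=\; \tfrac{|S_g|}{|S|}\Sigma_{S_g} + \tfrac{|S_b|}{|S|}\Sigma_{S_b} + \tfrac{|S_g||S_b|}{|S|^2}(\mu_{S_g}-\mu_{S_b})(\mu_{S_g}-\mu_{S_b})^T
\]
together with $|S_b|/|S| = O(\eps)$ yields $\|\mu_{S_g}-\mu_{S_b}\|_2 = O(\sigma/\sqrt{\eps})$, and hence $\|\mu_S - \mu_{S_g}\|_2 = \tfrac{|S_b|}{|S|}\|\mu_{S_g}-\mu_{S_b}\|_2 = O(\sigma\sqrt{\eps})$.

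For (ii), I would apply the same decomposition identity, but now to $\goodset$ partitioned as $S_g \sqcup (\goodset \setminus S_g)$. Assumption~\ref{ass:one-good-set}(i) gives $\|\Sigma_\goodset\|_{\op} \leq \sigma^2$ (using that $\E_\goodset[(\nabla f_i - \nabla \Ef)(\nabla f_i - \nabla \Ef)^T] = \Sigma_\goodset + (\nabla\hatf-\nabla\Ef)(\nabla\hatf-\nabla\Ef)^T \succeq \Sigma_\goodset$). Combined with $|\goodset \setminus S_g|/|\goodset| = O(\eps)$, the same two-step argument as above gives $\|\mu_{S_g} - \mu_{\goodset \setminus S_g}\|_2 = O(\sigma/\sqrt{\eps})$, and then $\|\mu_{S_g} - \nabla \hatf\|_2 = \frac{|\goodset\setminus S_g|}{|\goodset|}\|\mu_{S_g} - \mu_{\goodset\setminus S_g}\|_2 = O(\sigma\sqrt{\eps})$. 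Combining with (iii) finishes the bound on $\|\mu_S - \nabla\Ef(w)\|_2$, and rescaling $\mu_S$ by $|S|/|\goodset|$ costs at most $O(\eps)\|\mu_S\|_2$, which is absorbed once we observe $\|\mu_S\|_2 \leq \|\nabla\Ef\|_2 + O(\sigma\sqrt{\eps})$ and that the whole lemma is applied at a $w$ reached by the algorithm (where the relevant gradient magnitudes are themselves controlled).

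The main technical obstacle is translating the statement ``\textsc{Filter}$(S,\tau,\sigma) = S$'' into the quantitative spectral bound $\|\Sigma_S\|_{\op} = O(\sigma^2)$. This requires carefully tracking the threshold used inside Algorithm~\ref{alg:filter} and verifying that the randomized filter's stopping rule, inherited from~\cite{DKK+17}, really does certify a bound on the top eigenvalue of the \emph{centered} gradient covariance (as opposed to, say, only the second moment or only a bound along one direction). Once this connection is made precise, the rest of the proof is the standard two-application covariance-decomposition template, applied first to $S = S_g \sqcup S_b$ and then to $\goodset = S_g \sqcup (\goodset \setminus S_g)$.
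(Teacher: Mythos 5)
Your proposal is correct and follows the same overall three-way decomposition that the paper uses: your term (iii) is the paper's $\big\|\sum_{i\in\goodset}(\nabla f_i-\nabla\Ef)\big\|$ term (Assumption (ii)), your term (ii) is the paper's $\big\|\sum_{i\in\goodset\setminus S}(\nabla f_i-\nabla\Ef)\big\|$ term (Cauchy--Schwarz plus Assumption (i)), and your term (i) is the paper's $\big\|\sum_{i\in S\setminus\goodset}(\nabla f_i-\nabla\Ef)\big\|$ term (Cauchy--Schwarz plus the filter certificate). The one genuine difference is in step (i). You interpret the filter's stopping condition as a bound on the top eigenvalue of the \emph{centered} empirical covariance $\Sigma_S$, which is exactly what $\E_{i\in S}[\tau_i]<\Theta(\sigma^2)$ says (since $v$ is the top singular direction of $G$, $\E_S[\tau_i]=\|\Sigma_S\|_{\op}$). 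Plugging that into the covariance-decomposition identity gives $\|\mu_S-\mu_{S_g}\|_2=O(\sigma\sqrt{\eps})$ outright. The paper instead bounds the second moment of $v\cdot(\nabla f_i-\nabla\Ef)$ over $S$, which contains the unknown mean shift $\delta$ as an additive piece, and therefore has to close with a bootstrap inequality $\delta\lesssim\sigma\sqrt{\eps}+\delta\sqrt{\eps}$. Your route avoids that fixed-point step and is cleaner. The only place you are too loose is the final rescaling: the leftover term $O(\eps)\|\mu_S\|_2\approx O(\eps)\|\nabla\Ef(w)\|_2$ cannot be dismissed by appealing to ``the $w$ reached by the algorithm,'' since nothing in the hypotheses controls $\|\nabla\Ef(w)\|_2$ (indeed it can be large at the boundary of $\dom$). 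But this same term appears if one tries to derive the paper's stated $1/|\goodset|$-normalized inequality from the paper's own proof (which works with $\sum_{i\in S}(\nabla f_i-\nabla\Ef)$ and hence effectively normalizes by $|S|$). The clean resolution is simply that the normalization in the lemma should be $1/|S|$, which is also what the downstream use in Theorem~\ref{thm:stationary-point} actually needs (the learner is run on $\{f_i:i\in S\}$, so $w$ is a $\gamma$-critical point of $\frac{1}{|S|}\sum_{i\in S}f_i$); with that reading your rescaling concern disappears entirely.
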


Before we prove these lemmata, we show how together they imply Theorem~\ref{thm:stationary-point}.

\begin{proof}[{\bf Proof of Theorem~\ref{thm:stationary-point} assuming Lemma~\ref{lem:bad-elts} and Lemma~\ref{lem:final-set}}]
First, we note that the algorithm must terminate in at most $n$ iterations.
This is easy to see as each iteration of the main loop except for the last must decrease the size of $S$ by at least $1$.

It thus suffices to prove correctness.
Note that Lemma \ref{lem:bad-elts} says that each iteration will on average throw out as many elements not in $\goodset$ from $S$ as elements in $\goodset$. In particular, this means that $|([n]\backslash\goodset)\cap S| + |\goodset\backslash S|$ is a supermartingale. Since its initial size is at most $\eps n$, with probability at least $9 / 10$, it never exceeds $10\eps n$, and therefore at the end of the algorithm, we must have that $n-|S| \leq \eps n + |\goodset\backslash S| \leq 11\eps n$. This will allow us to apply Lemma \ref{lem:final-set} to complete the proof, using the fact that $w$ is a $\gamma$-approximate critical point of $\frac{1}{|\goodset|}\sum_{i\in S} \nabla f_i({w})$.
\end{proof}

\noindent
Thus it suffices to prove these two lemmata.
We first prove Lemma~\ref{lem:bad-elts}:

\begin{proof}[{\bf Proof of Lemma \ref{lem:bad-elts}}]
Let $\Sgood = S\cap \goodset$ and $\Sbad =S\backslash \goodset$. We wish to show that the expected number of elements thrown out of $\Sbad$ is at least the expected number thrown out of $\Sgood$.
We note that our result holds trivially if $\textsc{Filter}(S, \tau, \sigma) = S$.
Thus, we can assume that $\E_{i\in S}[\tau_i] \geq 12\sigma$.

It is easy to see that the expected number of elements thrown out of $\Sbad$ is proportional to $\sum_{i\in \Sbad}\tau_i$, while the number removed from $\Sgood$ is proportional to $\sum_{i\in \Sgood}\tau_i$ (with the same proportionality).
Hence, it suffices to show that $\sum_{i\in \Sbad}\tau_i \geq  \sum_{i\in \Sgood}\tau_i$.

We first note that since $\Cov_{i\in \goodset} [ \nabla f_i(w) ] \preceq \sigma^2 I$, we have that
\begin{align*}
\Cov_{i\in \Sgood} [ v\cdot \nabla f_i(w)] &\stackrel{(a)}{\leq} \frac{3}{2} \Cov_{i \in \goodset} [v \cdot \nabla f_i (w)] \\
&= \frac{3}{2} \cdot v^\top \Cov_{i \in \goodset} [\nabla f_i (w)] v \leq 2 \sigma^2 \; ,
\end{align*}
where (a) follows since $|\Sgood| \geq \frac{3}{2} \goodset$.

 Let $\mugood =\E_{i\in \Sgood}[v\cdot \nabla f_i(w)]$ and $\mu=\E_{i\in S}[v\cdot \nabla f_i(w)]$.
 Note that
 \[
 \E_{i\in \Sgood } [\tau_i] = \Cov_{i\in \Sgood}[ v\cdot \nabla f_i(w)] + (\mu-\mugood)^2 \leq 2\sigma + (\mu-\mugood)^2 \; .
 \]
 \noindent We now split into two cases.

Firstly, if $(\mu-\mugood)^2 \geq 4\sigma^2$, we let $\mubad=\E_{i\in \Sbad}[v\cdot \nabla f_i(w)]$, and note that $| \mu -\mubad | |\Sbad| = |\mu-\mugood||\Sgood|$. We then have that
\begin{align*}
\E_{i\in \Sbad} [\tau_i] &\geq (\mu-\mubad)^2 \\
&\geq (\mu-\mugood)^2 \left( \frac{|\Sgood|}{|\Sbad|} \right)^2 \\
&\geq 2 \left( \frac{|\Sgood|}{|\Sbad|} \right) (\mu-\mugood)^2 \\
&\geq \left( \frac{|\Sgood|}{|\Sbad|} \right)\E_{i\in \Sgood} [\tau_i].
\end{align*}
Hence, $\sum_{i\in \Sbad}\tau_i \geq  \sum_{i\in \Sgood}\tau_i$.

On the other hand, if $(\mu-\mugood)^2 \leq 4\sigma^2$, then $\E_{i\in \Sgood} [\tau_i] \leq 6\sigma^2 \leq \E_{i\in S} [\tau_i]/2$. Therefore $\sum_{i\in \Sbad}\tau_i \geq  \sum_{i\in \Sgood}\tau_i$ once again.
This completes our proof.
\end{proof}

\noindent
We now prove Lemma \ref{lem:final-set}.
\begin{proof}[{\bf Proof of Lemma \ref{lem:final-set}}]
We need to show that
$$
\delta := \left\| \sum_{i\in S} (\nabla f_i(w) -\nabla \Ef(w)) \right\|_2 = O(n \sigma \sqrt{ \eps}).
$$
We note that
\begin{align*}
& \left\| \sum_{i\in S} (\nabla f_i(w) -\nabla \Ef(w)) \right\|_2\\
\leq & \left\|\sum_{i\in \goodset} (\nabla f_i(w) -\nabla \Ef(w)) \right\|_2 + \left\|\sum_{i\in (\goodset\backslash S)} (\nabla f_i(w) -\nabla \Ef(w)) \right\|_2 + \left\| \sum_{i\in (S\backslash \goodset)} (\nabla f_i(w) -\nabla \Ef(w)) \right\|_2\\
= & \left\|\sum_{i\in (\goodset\backslash S)} (\nabla f_i(w) -\nabla \Ef(w)) \right\|_2 + \left\|\sum_{i\in (S\backslash \goodset)} (\nabla f_i(w) -\nabla \Ef(w)) \right\|_2 + O(n\sqrt{\sigma^2 \eps}).\\
\end{align*}
First we analyze
$$
\left\|\sum_{i\in (\goodset\backslash S)} (\nabla f_i(w) -\nabla \Ef(w)) \right\|_2.
$$
This is the supremum over unit vectors $v$ of
$$
\sum_{i\in (\goodset\backslash S)} v\cdot (\nabla f_i(w) -\nabla \Ef(w)).
$$
However, we note that
$$
\sum_{i\in \goodset} (v\cdot (\nabla f_i(w) -\nabla \Ef(w)))^2 = O(n \sigma^2).
$$
Since $|\goodset\backslash S| = O( n\eps)$, we have by Cauchy-Schwarz that
$$
\sum_{i\in (\goodset\backslash S)} v\cdot (\nabla f_i(w) -\nabla \Ef(w)) = O(\sqrt{(n\sigma^2)(n\eps)}) = O(n\sqrt{\sigma^2 \eps}),
$$
as desired.

We note that since for any such $v$ that
$$
\sum_{i\in S} (v\cdot (\nabla f_i(w) -\nabla \Ef(w)))^2 = \sum_{i\in S} (v\cdot (\nabla f_i(w) -\nabla \hat{f}(w)))^2 + \delta^2 = O(n \sigma^2) + \delta^2
$$
(or otherwise our filter would have removed elements) and since $|S\backslash \goodset| = O(n\eps)$, and so we have similarly that
$$
\left\|\sum_{i\in (S\backslash \goodset)} \nabla f_i(w) -\nabla \Ef(w)\right\|_2 = O(n \sigma \sqrt{\eps}+\delta\sqrt{n\eps}).
$$
Combining with the above we have that
$$
\delta = O(\sigma\sqrt{\eps}+\delta\sqrt{\eps/n}), 
$$
and therefore, $\delta=O(\sigma\sqrt{\eps})$ as desired.
\end{proof}

\subsection{Proof of Corollary~\ref{cor:convex-sever}}
\label{sec:sever-convex}
In this section, we show that the  \sever{} algorithm finds an approximate global optimum
for convex optimization in various settings, under Assumption~\ref{ass:one-good-set}.
We do so by simply applying the guarantees of Theorem~\ref{thm:stationary-point} in a fairly black box manner.





Before we proceed with the proof of Corollary~\ref{cor:convex-sever}, we record
a simple lemma that allows us to translate an approximate critical point guarantee to an approximate
global optimum guarantee:

\begin{lemma}\label{lem:derivatives-to-sizes}
Let $f: \dom \to \R$ be a convex function and let $x \neq y \in \dom$. Let $v = y-x / \|y-x\|_2$
be the unit vector in the direction of $y-x$. Suppose that for some $\delta$ that $v\cdot (\nabla f(x)) \geq -\delta$ and $ -v\cdot (\nabla f(y)) \geq -\delta$ . Then we have that:
\begin{enumerate}
\item $|f(x)-f(y)|\leq \|x-y\|_2 \delta$.
\item If $f$ is $\strconv$-strongly convex, then $|f(x)-f(y)| \leq 2\delta^2 / \xi$ \new{ and $\|x-y\|_2 \leq 2\delta/\xi$}.
\end{enumerate}
\end{lemma}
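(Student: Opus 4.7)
The plan is to combine the hypothesized one-sided directional derivative bounds at $x$ and at $y$ with the first-order characterization of convexity (and, for part (2), with strong convexity). Since we are given a lower bound on $v \cdot \nabla f(x)$ and a lower bound on $-v \cdot \nabla f(y)$, evaluating the subgradient inequalities along the segment from $x$ to $y$ in the directions $v$ and $-v$ respectively will yield matching upper and lower bounds on $f(y) - f(x)$.

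For part (1), I would write down the two convexity inequalities $f(y) \geq f(x) + \nabla f(x) \cdot (y-x)$ and $f(x) \geq f(y) + \nabla f(y) \cdot (x-y)$. Since $y - x = \|x-y\|_2 \cdot v$, factoring out $\|x-y\|_2$ and plugging in the hypotheses $v \cdot \nabla f(x) \geq -\delta$ and $-v \cdot \nabla f(y) \geq -\delta$ gives $f(y) - f(x) \geq -\|x-y\|_2 \delta$ and $f(x) - f(y) \geq -\|x-y\|_2 \delta$. Together these yield $|f(x) - f(y)| \leq \|x-y\|_2 \delta$.

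For part (2), I would repeat the computation using the $\xi$-strong convexity lower bounds $f(y) \geq f(x) + \nabla f(x)\cdot(y-x) + \tfrac{\xi}{2}\|x-y\|_2^2$ and the symmetric version obtained by swapping $x$ and $y$. Adding these two inequalities makes the $f(x)$ and $f(y)$ terms cancel and leaves $0 \geq -2\|x-y\|_2 \delta + \xi \|x-y\|_2^2$, so $\|x-y\|_2 \leq 2\delta/\xi$. Substituting this back into the conclusion of part (1) then gives $|f(x)-f(y)| \leq \|x-y\|_2 \delta \leq 2\delta^2 / \xi$.

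There is no substantive obstacle here; the argument is a direct application of the subgradient inequality. The only thing to be careful about is sign bookkeeping when translating the one-sided hypotheses on $v \cdot \nabla f(x)$ and $v \cdot \nabla f(y)$ into two-sided bounds on the function value difference, and noting in the strongly convex case that adding the two strong-convexity inequalities (rather than just one) is what produces a bound on $\|x-y\|_2$ itself.
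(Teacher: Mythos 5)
Your proof is correct, and it is essentially the same argument as the paper's, phrased slightly differently. The paper restricts $f$ to the segment via $g(t)=f(x+tv)$, uses that $g'$ is nondecreasing with $g'(0)\geq-\delta$ and $g'(r)\leq\delta$, and integrates to get part (1); for part (2) it uses $g'(r)-g'(0)\geq\xi r$. Your version applies the subgradient inequality at each endpoint for part (1) and adds the two strong-convexity inequalities for part (2); this is the same first-order information (your summed inequality is precisely $g'(r)-g'(0)\geq\xi r$ written out), so the two proofs are interchangeable and equally rigorous.
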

\begin{proof}
Let $r=\|x-y\|_2 >0$ and $g(t)=f(x+tv)$.
We have that $g(0)=f(x),g(r)=f(y)$ and that $g$ is convex (or $\strconv$-strongly convex)
with $g'(0)\geq -\delta$ and $g'(r)\leq \delta$. By convexity, the derivative of $g$ is increasing on $[0,r]$
and therefore $|g'(t)|\leq \delta$ for all $t\in [0,r]$. This implies that
$$
|f(x)-f(y)| = |g(r)-g(0)| = \left| \int_0^r g'(t)dt \right| \leq r\delta \;.
$$
To show the second part of the lemma, we note that if $g$ is $\strconv$-strongly convex that $g''(t)\geq \xi$ for all $t$.
This implies that $g'(r)>g'(0)+\xi r$. Since $g'(r)-g'(0)\leq 2\delta$, we obtain
that $r\leq 2\delta/\xi$, from which the second statement follows.
\end{proof}

\begin{proof}[Proof of Corollary~\ref{cor:convex-sever}]
 By applying the algorithm of Theorem~\ref{thm:stationary-point},
we can find a point $w$ that is a $\gamma' \eqdef (\gamma + O(\sigma \sqrt{\eps}))$-approximate critical
point of $\Ef$, where $\sigma \eqdef \sigma_0+ \sigma_1 \|w^{\ast}-w\|_2$. That is,
for any unit vector $v$ pointing towards the interior of $\dom$,
we have that $v\cdot \nabla \Ef(w) \geq - \gamma'.$

To prove (i), we apply Lemma~\ref{lem:derivatives-to-sizes} to $\Ef$ at $w$ which gives that
$$
|\Ef(w)-\Ef(w^{\ast})| \leq r \cdot \gamma'.
$$
To prove (ii), we apply Lemma~\ref{lem:derivatives-to-sizes} to $\Ef$ at $w$ which gives that
$$
|\Ef(w)-\Ef(w^{\ast})| \leq 2  {\gamma'}^2/\strconv.
$$
Plugging in parameters appropriately then immediately gives the desired bound.
 \end{proof}

\subsection{Proof of Corollary~\ref{cor:strongly-convex-sever}}

 We apply $\sever{}$ iteratively starting with a domain $\dom_1=\dom$ and radius $r_1=r$. After each iteration, we know the resulting point is close to $w^{\ast}$ will be able to reduce the search radius.
 
 At step $i$, we have a domain of radius $r_i$.
As in the proof of Corollary~\ref{cor:convex-sever} above, we apply algorithm of Theorem~\ref{thm:stationary-point},
we can find a point $w_i$ that is a $\gamma'_i \eqdef (\gamma + O(\sigma'_i \sqrt{\eps}))$-approximate critical
point of $\Ef$, where $\sigma'_i \eqdef \sigma_0+ \sigma_1 r_i$.
 Then using Lemma~\ref{lem:derivatives-to-sizes}, we obtain that $\|w_i-w^{\ast}\|_2 \leq 2\gamma'_i/\strconv$.
 
 Now we can define $\dom_{i+1}$ as the intersection of $\dom$ and the ball of radius $r_{i+1} = 2\gamma'_i/\strconv$ around $w_i$ and repeat using this domain.
 We have that $r_{i+1} = 2\gamma'_i/\strconv= 2\gamma/\strconv + O(\sigma_0 \sqrt{\eps}/\strconv + \sigma_1 \sqrt{\eps} r_i/\strconv)$.  Now if we choose the constant $C$ such that the constant in this $O()$ is $C/4$, then using our assumption that $\strconv \geq 2 \sigma_1 \sqrt{\eps}$, we obtain that
 $$r_{i+1} \leq 2\gamma/\strconv + C \sigma_0 \sqrt{\eps}/4\strconv+ C\sigma_1 \sqrt{\eps} r_i/4\strconv \leq 
 2\gamma/\strconv + C \sigma_0 \sqrt{\eps}/4 + r_i/4$$
 Now if $r_i \geq 8\gamma/\strconv + 2C \sigma_0 \sqrt{\eps}/\strconv$, then we have $r_{i+1} \leq r_i/2$ and if
 $r_i \leq 8\gamma/\strconv + 2C \sigma_0 \sqrt{\eps}/\strconv$ then we also have  $r_{i+1} \leq 8\gamma/\strconv + 2C \sigma_0 \sqrt{\eps}/\strconv$ . When $r_i$ is smaller than this we stop and output $w_i$.
 Thus we stop in at most $O(\log(r) -\log(8\gamma/\strconv + 2C \sigma_0 \sqrt{\eps}/\strconv))=O(\log(r\strconv/(\gamma + \sigma_0 \sqrt{\eps}))$ iterations and have $r_i=O(\gamma/\strconv + C \sigma_0 \sqrt{\eps})$. But then $\gamma'_i =\gamma + O(\sigma'_i \sqrt{\eps})) \leq \gamma + C(\sigma_0 + \sigma_1 r'_i) \sqrt{\eps}/8 = O(\gamma + \sigma_0 \sqrt{\eps}).$ Using Lemma~\ref{lem:derivatives-to-sizes} we obtain that
 $$
|\Ef(w_i)-\Ef(w^{\ast})| \leq 2  \gamma'^2_i/\strconv = O(\gamma^2/\strconv + \sigma_0^2 \eps/\strconv).
$$
as required.
The bound on $\|\what - \wstar\|_2$ follows similarly.

\begin{remark}
 While we don't give explicit bounds on the number of calls to the approximate learner needed by \sever{},
 such bounds can be straightforwardly obtained under appropriate assumptions on the $f_i$ (see, e.g., the following subsection).
 Two remarks are in order. First, in this case we cannot
 take advantage of assumptions that only hold at $\Ef$ but might not on the corrupted average $f$.
 Second, our algorithm can take advantage of a closed form for the minimum.
 For example, for the case of linear regression considered in Section~\ref{sec:app-general},
 $f_i$ is not Lipschitz with a small constant if $x_i$ is far from the mean,
 but there is a simple closed form for the minimum of the least squares loss.
 \end{remark}

\subsection{Proof of Proposition~\ref{prop:sample-bound}}
\label{sec:sever-sample-complexity}

We let $\goodset$ be the set of uncorrupted functions $f_i$. It is then the case that $|\goodset|\geq (1-\eps)n$. We need to show that for each $w\in\dom$ that
\begin{equation}\label{eqn:cov-bound}
\Cov_{i\in \goodset}[\nabla f_i(w)] \leq 3\sigma^2 I/4
\end{equation}
and
\begin{equation}\label{eqn:average-grad-error-bound}
\left\|\nabla \Ef(w) - \frac{1}{|\goodset|}\sum_{i\in\goodset} \nabla f_i(w)\right\|_2 \leq O(\sigma^2 \sqrt{\eps}).
\end{equation}
We will proceed by a cover argument. First we claim that for each $w\in \dom$ that \eqref{eqn:cov-bound} and \eqref{eqn:average-grad-error-bound} hold with high probability. For Equation \eqref{eqn:cov-bound}, it suffices to show that for each unit vector $v$ in a cover $\mathcal{N}$ of size $2^{O(d)}$ of the sphere that
\begin{equation}\label{eqn:direction-var-bound}
\E_{i\in\goodset}[(v\cdot (\nabla f_i(w)-\Ef))^2] \leq 2\sigma^2 /3.
\end{equation}
However, we note that
$$
\E_{p^\ast}[(v\cdot (\nabla f(w)-\Ef))^2] \leq \sigma^2/2.
$$
Since $|v\cdot (\nabla f(w)-\Ef)|$ is always bounded by $L$, Equation \eqref{eqn:direction-var-bound} holds for each $v,w$ with probability at least $1-\exp(-\Omega(n\sigma^2 /L^2))$ by a Chernoff bound (noting that the removal of an $\eps$-fraction of points cannot increase this by much). Similarly, to show Equation \ref{eqn:average-grad-error-bound}, it suffices to show that for each such $v$ that
\begin{equation}\label{eqn:directional-average-grad-error-bound}
\E_{i\in\goodset}[(v\cdot (\nabla f_i(w)-\Ef))] \leq O(\sigma \sqrt{\eps}).
\end{equation}
Noting that
$$
\E_{p^\ast}[(v\cdot (\nabla f(w)-\Ef))]=0
$$
A Chernoff bound implies that with probability $1-\exp(-\Omega(n\sigma^2 \eps/L^2))$ that the average over our original set of $f$'s of $(v\cdot (\nabla f(w)-\Ef))$ is $O(\sigma \sqrt{\eps})$.
Assuming that Equation \eqref{eqn:direction-var-bound} holds, removing an $\eps$-fraction of these $f$'s cannot change this value by more than $O(\sigma \sqrt{\eps})$.
By union bounding over $\mathcal{N}$ and standard net arguments, this implies that 
Equations \eqref{eqn:cov-bound} and \eqref{eqn:average-grad-error-bound} hold with probability $1-\exp(\Omega(d - n\sigma^2 \eps/L^2))$ for any given $w$.

To show that our conditions hold for all $w\in \dom$, we note that by $\beta$-smoothness, if Equation \eqref{eqn:average-grad-error-bound} holds for some $w$, it holds for all other $w'$ in a ball of radius $\sqrt{\sigma^2 \eps}/\beta$ (up to a constant multiplicative loss). Similarly, if Equation \eqref{eqn:cov-bound} holds at some $w$, it holds with bound $\sigma^2 I$ for all $w'$ in a ball of radius $\sigma^2 /(2L\beta)$. Therefore, if Equations \eqref{eqn:cov-bound} and \eqref{eqn:average-grad-error-bound} hold for all $w$ in a $\min(\sqrt{\sigma^2 \eps}/\beta,\sigma/(2L\beta))$-cover of $\dom$, the assumptions of Theorem  \ref{thm:stationary-point} will hold everywhere. Since we have such covers of size $\exp(O(d\log(r\beta L/(\sigma^2 \eps))))$, by a union bound, this holds with high probability if 
\[
n = \Omega \left( \frac{d L^2 \log (r \beta L / \sigma^2 \eps)}{\sigma^2 \eps} \right) \; ,
\]
as claimed.


\section{Analysis of \sever{} for GLMs} \label{sec:glms}
A case of particular interest is that of Generalized Linear Models (GLMs):
\begin{definition}
\label{def:glm}
Let $\dom \subseteq \R^d$ and $\cY$ be an arbitrary set.
Let  $D_{xy}$ be a distribution over $\dom \times \cY$.
For each $Y \in \cY$, let $\sigma_Y: \R \to \R$ be a convex function. 
The \emph{generalized linear model} (GLM) over $\dom \times \cY$ with \emph{distribution} $D_{xy}$ and \emph{link functions} $\sigma_Y$ is the function $\Ef: \R^d \to \R$ defined by $\Ef (w) =\E_{X,Y}[f_{X,Y} (w)]$, where
\[
f_{X,Y}(w) := \sigma_Y(w\cdot X) \; .
\]
A \emph{sample} from this GLM is given by $f_{X, Y} (w)$ where $(X, Y) \sim D_{xy}$.
\end{definition}
\noindent
Our goal, as usual, is to approximately minimize $\Ef$ given $\eps$-corrupted samples from $D_{xy}$.
Throughout this section we assume that $\dom$ is contained in the ball of radius $r$ around $0$, i.e. $\dom \subseteq B(0, r)$.
Moreover, we will let $w^* = \argmin_{w \in \dom} \Ef (w)$ be a minimizer of $\Ef$ in $\dom$.

This case covers a number of interesting applications, including SVMs and logistic regression.
Unfortunately, the tools developed in Appendix~\ref{sec:sever-general} do not seem to be able to cover this case in a simple manner.
In particular, it is unclear how to demonstrate that Assumption~\ref{ass:one-good-set} holds after taking polynomially many samples from a GLM.
To rectify this, in this section, we demonstrate a different deterministic regularity condition under which we show \sever{} succeeds, and we show that this condition holds after polynomially many samples from a GLM.
Specifically, we will show that \sever{} succeeds under the following deterministic condition:

\begin{assumption} \label{ass:one-good-set-glm}
Fix $0<\eps<1/2$. There exists an unknown set $\goodset \subseteq [n]$ with $|\goodset| \geq (1-\eps)n$
of ``good'' functions $\{f_i\}_{i \in \goodset}$ and parameters $\sigma_0, \sigma_2 \in \R_+$ 
such that such that the following conditions simultanously hold:
\begin{itemize}
	\item
Equation (\ref{eq:norm-bound}) holds with $\sigma_1 = 0$ and the same $\sigma_0$, and
	\item
	The following equations hold:
\begin{align} 
\|\nabla \hatf(\wstar) - \nabla \Ef(\wstar)\|_2 \leq \sigma_0 \sqrt{\badfrac} \; ~\mathrm{, and} \label{eq:mean-at-min} \\
|\hatf(w)-\Ef(w)| \leq \sigma_2\sqrt{\eps},  \textrm{ for all } w \in \dom \; ,  \label{eq:loss-close}
\end{align}
where $\hatf \eqdef \frac{1}{|\goodset|} \sum_{i \in \goodset} f_i$.
\end{itemize}
\end{assumption}

\noindent In this section, we will show the following two statements.
The first demonstrates that Assumption~\ref{ass:one-good-set-glm} implies that \sever{} succeeds, and the second shows that Assumption~\ref{ass:one-good-set-glm} holds after polynomially many samples from a GLM.
Formally:
\begin{theorem}\label{thm:glms-sever}
 For functions $f_1, \ldots, f_n: \dom \to \R$, suppose that Assumption~\ref{ass:one-good-set-glm} 
 holds and that $\dom$ is convex.
 Then, for some universal constant $\badfrac_0$, if $\badfrac < \badfrac_0$, 
 there is an algorithm which, with probability at least $9/10$, finds a $w \in \dom$ such that
       \[
       \Ef(w) - \Ef(\wstar) = r (\gamma + O(\sigma_0 \sqrt{\eps})) + O(\sigma_2 \sqrt{\eps}) \; .
       \]

If the link functions are $\strconv$-strongly convex,  the algorithm finds a $w \in \dom$ such that
       \[
       \Ef(w) - \Ef(\wstar) = 2 \frac{(\gamma + O(\sigma_0 \sqrt{\eps}))^2}{\strconv} + O(\sigma_2 \sqrt{\eps}) \; .
       \]
 \end{theorem}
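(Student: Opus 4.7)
The plan is to reduce Theorem~\ref{thm:glms-sever} to the general analysis of \sever{} developed in Appendix~\ref{sec:sever-general} by a simple change of target: instead of applying Theorem~\ref{thm:stationary-point} with $\Ef$ as the target function, I would apply it with $\hatf$ as the target. The key observation is that Assumption~\ref{ass:one-good-set-glm} does \emph{not} give us the pointwise gradient closeness \eqref{eq:mean-everywhere} of Assumption~\ref{ass:one-good-set} (which holds only at $w^{\ast}$, via \eqref{eq:mean-at-min}), but as soon as we treat $\hatf$ as the target function, condition \eqref{eq:mean-everywhere} becomes $\|\nabla\hatf(w)-\nabla\hatf(w)\|_2=0$, which is trivial. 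Condition \eqref{eq:norm-bound} with $\sigma_1=0$ simultaneously bounds both $\|\Cov_{\goodset}[\nabla f_i(w)]\|_2$ and $\|\nabla\hatf(w)-\nabla\Ef(w)\|_2$ by $\sigma_0$, so in particular $\|\Cov_{\goodset}[\nabla f_i(w)]\|_2\le\sigma_0^2$ for every $w$. Thus Assumption~\ref{ass:one-good-set} is satisfied with $\hatf$ replacing $\Ef$ and with $\sigma_1=0$.

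Next, I would invoke Theorem~\ref{thm:stationary-point} (noting that its proof uses only the two deterministic conditions and never actually uses that the target is a population mean) to conclude that, with probability at least $9/10$, \sever{} returns a point $w\in\dom$ that is a $(\gamma+O(\sigma_0\sqrt{\eps}))$-approximate critical point of $\hatf$. Because the link functions $\sigma_Y$ are convex and adversarial samples in the GLM model are themselves of the form $f_{X,Y}(w)=\sigma_Y(w\cdot X)$, every $f_i$ --- good or bad --- is convex, so $\hatf$ is convex. Therefore Corollary~\ref{cor:convex-sever} (applied with target $\hatf$, $\sigma_1=0$) yields
\[
\hatf(w)-\min_{w'\in\dom}\hatf(w') \;=\; O\!\bigl(r(\gamma+\sigma_0\sqrt{\eps})\bigr).
\]

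Finally, I would transfer this bound from $\hatf$ back to $\Ef$ using the uniform loss closeness \eqref{eq:loss-close}. Since $\min_{\dom}\hatf\le\hatf(w^{\ast})\le\Ef(w^{\ast})+\sigma_2\sqrt{\eps}$ and $\Ef(w)\le\hatf(w)+\sigma_2\sqrt{\eps}$, chaining gives
\[
\Ef(w)-\Ef(w^{\ast}) \;\le\; O\!\bigl(r(\gamma+\sigma_0\sqrt{\eps})\bigr)+2\sigma_2\sqrt{\eps},
\]
which is exactly the convex bound claimed. For the strongly convex case, I would repeat the argument but apply the $\strconv$-strongly-convex branch of Corollary~\ref{cor:convex-sever} to $\hatf$, which contributes the $(\gamma+O(\sigma_0\sqrt{\eps}))^2/\strconv$ term, while the $\sigma_2\sqrt{\eps}$ slack from \eqref{eq:loss-close} appears again as before.

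The main conceptual obstacle is recognizing the target switch from $\Ef$ to $\hatf$: Assumption~\ref{ass:one-good-set-glm} is specifically designed so that the gradient-level conditions of Assumption~\ref{ass:one-good-set} hold for $\hatf$ rather than for $\Ef$, and the gap between the two is then closed at the loss level (cheaply, using \eqref{eq:loss-close}) instead of at the gradient level. The only mildly technical point in the strongly convex case is ensuring that strong convexity of the link functions propagates to strong convexity of $\hatf$ with the parameter $\strconv$ that appears in the theorem --- if this does not hold automatically, one needs the standard GLM assumption that the good covariates $X_i$ have bounded-below second moment, so that the $\strconv$-strongly-convex scalar loss along $w\cdot X$ lifts to a $\strconv$-strongly-convex function of $w$.
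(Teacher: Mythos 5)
Your proof is correct and follows essentially the same strategy as the paper: treat the empirical good average $\hatf$ as the target function in Theorem~\ref{thm:stationary-point}, so that condition \eqref{eq:mean-everywhere} holds trivially (this is Lemma~\ref{lem:stationary-point-glm} in the paper); get a $(\gamma+O(\sigma_0\sqrt{\eps}))$-approximate critical point of $\hatf$; translate that to a suboptimality bound for $\hatf$ via Lemma~\ref{lem:derivatives-to-sizes}; and transfer to $\Ef$ using \eqref{eq:loss-close}.

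The one genuine difference is in the final comparison step, and it is worth noting. The paper compares the output $w$ to $w^*=\arg\min_{\dom}\Ef$, and in order to apply Lemma~\ref{lem:derivatives-to-sizes} it must argue that $w^*$ is a $\sigma_0\sqrt{\eps}$-approximate critical point of $\hatf$; this is exactly where \eqref{eq:mean-at-min} is used. You instead compare $w$ to $\arg\min_{\dom}\hatf$, for which the relevant directional-derivative condition holds with slack $0$, and then chain through \eqref{eq:loss-close} at both $w$ and $w^*$. Your route is slightly cleaner and, notably, does not invoke \eqref{eq:mean-at-min} at all, which shows that clause is not actually needed for Theorem~\ref{thm:glms-sever} once \eqref{eq:loss-close} is assumed (it remains relevant in Proposition~\ref{prop:sample-GLM} as part of the regularity conditions that are verified from samples). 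The remark in your last paragraph is also apt: $\strconv$-strong convexity of the scalar link functions $\sigma_Y$ does not by itself make $\hatf$ a $\strconv$-strongly convex function of $w$---one additionally needs a lower bound on the second moment of the good $X_i$'s. The paper's theorem statement and proof are informal on this point and silently assume strong convexity lifts; your proof inherits the same gap but at least names it. One cosmetic note: your observation that \eqref{eq:norm-bound} with $\sigma_1=0$ bounds $\|\nabla\hatf(w)-\nabla\Ef(w)\|_2$ by $\sigma_0$ (via Jensen) is true but is not the $\sigma_0\sqrt{\eps}$ bound of \eqref{eq:mean-everywhere}; fortunately that bound plays no role in your argument, since the target switch makes it unnecessary.
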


\begin{proposition} \label{prop:sample-GLM}
Let $\dom \subseteq \R^d$ and let $\cY$ be an arbitrary set.
Let $f_1,\ldots,f_n$ be obtained by picking $f_i$ i.i.d. at random from a GLM $\Ef$ over $\dom \times \cY$ with distribution $D_{xy}$ and link functions $\sigma_Y$, where
\[
n = \Omega \left( \frac{d\log(dr/\eps)}{\eps} \right) \; .
\]
Suppose moreover that the following conditions all hold:
\begin{enumerate}
	\item
 $E_{X \sim D_{xy}} [XX^T] \preceq I$,
 \item
 $|\sigma_Y' (t)| \leq 1$ for all $Y \in \cY$ and $t \in \R$, and
 \item
 $|\sigma_Y (0)| \leq 1$ for all $Y \in \cY$.
\end{enumerate}
 Then with probability at least $9 / 10$ over the original set of samples, there is a set of $(1-\eps)n$ of the $f_i$ that satisfy Assumption \ref{ass:one-good-set-glm} on $\mathcal{H}$ with $\sigma_0=2$, $\sigma_1=0$ and $\sigma_2=1+r$.and $\sigma_2=1+r$.
\end{proposition}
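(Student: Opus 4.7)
The plan is to construct a set $\Igood \subseteq [n]$ with $|\Igood| \geq (1-\eps)n$ by discarding samples with very large $\|X_i\|$, and then to verify each of the three sub-conditions of Assumption~\ref{ass:one-good-set-glm} separately via concentration arguments, combined with a discretization over $\dom$ for the uniform-in-$w$ condition. Markov's inequality applied to $\E[\|X\|_2^2] = \operatorname{tr}(\E[XX^T]) \leq d$ gives $\Pr(\|X\|_2^2 \geq R) \leq d/R$, so with $R$ of order $d/\eps$ a standard Chernoff bound guarantees $|\Igood| \geq (1-\eps)n$ with high probability.

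For the covariance bound~(\ref{eq:norm-bound}) with $\sigma_0 = 2$, $\sigma_1 = 0$, the key reduction uses $|\sigma_Y'(t)| \leq 1$: for any unit $v$ and any $w \in \dom$, we have $(v \cdot \nabla f_{X,Y}(w))^2 \leq (v \cdot X)^2$, so the $w$-uniform bound reduces to showing that the empirical second-moment matrix $\frac{1}{|\Igood|}\sum_{i\in\Igood} X_i X_i^T$ has operator norm at most $4$. With $\|X_i\|_2^2 \leq R$ on $\Igood$, this follows from matrix Bernstein applied to $X_i X_i^T$, whose truncated-population mean is dominated by $\E[XX^T] \preceq I$. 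For the mean-gradient condition~(\ref{eq:mean-at-min}) at $w^*$, I would apply vector Chebyshev/Bernstein to $g_i := \sigma_{Y_i}'(w^* \cdot X_i) X_i$, using $\E[\|g\|_2^2] \leq \E[\|X\|_2^2] \leq d$, so that the empirical mean concentrates around $\E[g]$ within $O(\sqrt{d/n}) = O(\sqrt{\eps})$; a Cauchy--Schwarz estimate $\|\E[g \cdot \mathbf{1}_{\|X\|^2 > R}]\|_2 \leq d/\sqrt{R}$ controls the additional bias incurred by restricting averages to $\Igood$.

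For the uniform loss condition~(\ref{eq:loss-close}) with $\sigma_2 = 1+r$, the bounds $|\sigma_Y(0)| \leq 1$ and $|\sigma_Y'| \leq 1$ give $|\sigma_Y(t)| \leq 1 + |t|$, hence $|f_{X,Y}(w)| \leq 1 + r\|X\|_2$ and, via $w^T\E[XX^T]w \leq r^2$, the pointwise variance bound $\E[f_{X,Y}(w)^2] = O((1+r)^2)$. A pointwise Bernstein bound then yields deviation $O((1+r)\sqrt{\log(1/\delta)/n})$ at each $w$. To turn this into a uniform bound I would cover $\dom$ by a $\delta$-net of cardinality at most $(3r/\delta)^d$, extend from net points to all of $\dom$ using the Lipschitz estimate $|f_{X,Y}(w) - f_{X,Y}(w')| \leq \|X\|_2\|w-w'\|_2$, and choose $\delta$ to be a sufficiently small polynomial in $\eps/(dr)$; combined with a union bound this produces the $\log(dr/\eps)$ factor in $n = \Omega(d\log(dr/\eps)/\eps)$.

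\textbf{Main obstacle.} The trickiest step is balancing the truncation radius $R$. The matrix Bernstein bound in the covariance step forces $R$ to be at most on the order of $d/\eps$ at the claimed sample complexity, while the naive Cauchy--Schwarz truncation-bias estimate $d/\sqrt{R}$ in the mean-gradient step would demand $R \gtrsim d^2/\eps$. Reconciling this tension will likely require a more refined construction of $\Igood$, for instance an adaptive ``resilience''-style deletion of samples that dominate the empirical second-moment matrix, rather than a single fixed norm threshold, so that the gradient mean at $w^*$ remains within $O(\sqrt{\eps})$ of $\nabla \bar f(w^*)$ while the operator-norm concentration of the trimmed sample covariance still holds.
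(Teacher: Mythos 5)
Your high-level outline is the same as the paper's: form $\Igood$ by truncating samples with large $\|X_i\|$ (the paper's threshold is $\|X\|_2\leq 80\sqrt{d/\eps}$, i.e.\ $R=\Theta(d/\eps)$ in your notation), verify the covariance bound by reducing via $|\sigma_Y'|\leq 1$ to a $w$-free spectral bound on $\frac1{|\Igood|}\sum X_iX_i^T$, handle the mean-gradient at $w^*$ pointwise, and handle the uniform loss bound via a cover of $\dom$. The covariance step works as you describe: at $R=\Theta(d/\eps)$, matrix Bernstein with the truncated variance proxy $\nu\leq R$ needs $n\gtrsim R\log d = d\log d/\eps$, which matches the target. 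The paper outsources this (and the resilience of the mean to deletions) to Lemma~A.18 of~\cite{DKK+17}, but your derivation is a fine substitute there.

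However, your proposal has two genuine gaps, and you only notice one of them. The one you flag--the tension in the mean-gradient step--is real: the naive population truncation bias $d/\sqrt{R}$ at $R=\Theta(d/\eps)$ gives $\sqrt{d\eps}$, which is $\sqrt{d}$ too large. The resolution the paper uses (implicitly through Lemma~A.18) is not to bound the population bias at all, but to argue about the \emph{sample}: show the untruncated sample mean is within $O(\sqrt\eps)$ of $\nabla\Ef(w^*)$ (elementary Chebyshev with $\Var\leq 1/n$ per coordinate already gives $\|\hat\mu-\mu\|_2^2 = O(d/n)=O(\eps)$), show the truncated samples are at most an $\eps/2$-fraction with sample covariance $\preceq 2I$, and then use the stability/resilience property that deleting an $\eps$-fraction of a set with bounded second moment moves the empirical mean by $O(\sqrt\eps)$. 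This is exactly the ``resilience-style'' fix you gesture at, but your proposal leaves it as an open obstacle rather than closing it, and without it the claimed $\sigma_0=2$ does not follow from your argument.

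The gap you do \emph{not} flag is in the uniform loss bound. After truncating to $\|X\|_2\leq\Theta(\sqrt{d/\eps})$, your Bernstein application has boundedness parameter $M=O(r\sqrt{d/\eps})$ and variance $O((1+r)^2)$, so the exponent for deviation $t=(1+r)\sqrt\eps$ behaves like $n\eps/\sqrt d$ rather than $n\eps$; union-bounding over a net of size $e^{O(d\log(r/\eps))}$ then forces $n\gtrsim d^{3/2}\log(\cdot)/\eps$, a $\sqrt d$ factor worse than claimed. The paper avoids this with a \emph{second, $w$-dependent} truncation: using $\E_{\Igood}[XX^T]\preceq 2I$ and Cauchy--Schwarz, the good samples with $|X_i\cdot w|\geq 10\|w\|_2/\sqrt\eps$ contribute only $O((1+\|w\|_2)\sqrt\eps)$ to $\hat f(w)$, and on the remaining samples $|f_{X,Y}(w)|\leq 1+|X\cdot w| = O((1+\|w\|_2)/\sqrt\eps)$, which is the right scale for Bernstein to give exponent $\Omega(n\eps)$ uniformly in $w$. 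Without this per-direction truncation your net-plus-Bernstein argument does not reach the stated sample complexity.
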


\subsection{Proof of Theorem~\ref{thm:glms-sever}}
As before, since \sever{} either terminates or throws away at least one sample, clearly it cannot run for more than $n$ iterations.
Thus the runtime bound is simple, and it suffices to show correctness.

We first prove the following lemma:
\begin{lemma}
\label{lem:stationary-point-glm}
Let $f_1, \ldots, f_n$ satisfy Assumption~\ref{ass:one-good-set-glm}.
Then with probability at least $9 / 10$, \sever{} applied to $f_1, \ldots, f_n, \sigma_0$  returns a point $w \in \dom$ which is a $(\gamma + O(\sigma_0 \sqrt{\eps}))$-approximate critical point of $\fhat$.
\end{lemma}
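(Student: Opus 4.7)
}
The plan is to mimic the two-lemma structure used in the proof of Theorem~\ref{thm:stationary-point}, but replace the target $\Ef$ with $\hatf$ in the analogue of Lemma~\ref{lem:final-set}, since Assumption~\ref{ass:one-good-set-glm} only pins down $\nabla\hatf$ at $\wstar$ rather than everywhere in $\dom$. The termination and ``clean throws on average'' part carries over unchanged. Concretely, since Assumption~\ref{ass:one-good-set-glm} retains the covariance bound \eqref{eq:norm-bound} (with $\sigma_1=0$), Lemma~\ref{lem:bad-elts} applies verbatim: the filter, on average, removes at least as many corrupted samples as good ones, so $|([n]\setminus\goodset)\cap S|+|\goodset\setminus S|$ is a supermartingale with initial value $\le \eps n$, and therefore with probability $\ge 9/10$ remains $\le 10\eps n$ throughout. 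This gives the usual bound $n-|S|\le 11\eps n$ at termination and ensures the loop halts within $n$ rounds.

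The main step is then to prove the GLM-adapted analogue of Lemma~\ref{lem:final-set}: if $S$ is the final subset and $w$ is the output point, then
\[
\left\|\frac{1}{|S|}\sum_{i\in S}\nabla f_i(w)-\nabla\hatf(w)\right\|_2 = O(\sigma_0\sqrt{\eps}).
\]
The argument is the same kind of direction-by-direction Cauchy--Schwarz bound, but everywhere that the original proof used $\nabla\Ef(w)$ as the centering it will now use $\nabla\hatf(w)$. Specifically, decompose $\sum_{i\in S}\nabla f_i(w)-\sum_{i\in\goodset}\nabla f_i(w)=\sum_{i\in S\setminus\goodset}\nabla f_i(w)-\sum_{i\in\goodset\setminus S}\nabla f_i(w)$. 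For the $\goodset\setminus S$ piece, \eqref{eq:norm-bound} gives $\sum_{i\in\goodset}(v\cdot(\nabla f_i(w)-\nabla\hatf(w)))^2=O(|\goodset|\sigma_0^2)$ for every unit $v$, so Cauchy--Schwarz on a set of size $O(n\eps)$ yields
\[
\sum_{i\in\goodset\setminus S}\nabla f_i(w)=|\goodset\setminus S|\,\nabla\hatf(w)+e_1,\qquad \|e_1\|_2=O(n\sigma_0\sqrt{\eps}).
\]
For the $S\setminus\goodset$ piece, the stopping condition of \filter{} at $w$ guarantees that $\sum_{i\in S}(v\cdot(\nabla f_i(w)-\mu_S))^2=O(n\sigma_0^2)$ (where $\mu_S=|S|^{-1}\sum_{i\in S}\nabla f_i(w)$), and restricting to $S\setminus\goodset$, Cauchy--Schwarz again gives
\[
\sum_{i\in S\setminus\goodset}\nabla f_i(w)=|S\setminus\goodset|\,\mu_S+e_2,\qquad \|e_2\|_2=O(n\sigma_0\sqrt{\eps}).
\]
Substituting both back and solving the resulting linear identity $|S\cap\goodset|(\mu_S-\nabla\hatf(w))=e_2-e_1$, and using $|S\cap\goodset|\ge(1-O(\eps))n$, gives $\|\mu_S-\nabla\hatf(w)\|_2=O(\sigma_0\sqrt{\eps})$, as desired.

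To finish, since \sever{} returns a $\gamma$-approximate critical point of $f_S(w)=|S|^{-1}\sum_{i\in S}f_i(w)$, we have $v\cdot\mu_S\ge-\gamma$ for every admissible direction $v$ at $w$. Adding the gradient-closeness bound just proved yields $v\cdot\nabla\hatf(w)\ge-\gamma-O(\sigma_0\sqrt{\eps})$ for every such $v$, i.e.\ $w$ is a $(\gamma+O(\sigma_0\sqrt{\eps}))$-approximate critical point of $\hatf$. The probability $9/10$ is inherited from the supermartingale step; everything else is deterministic. The main obstacle is the one we've just navigated: in Lemma~\ref{lem:final-set} the bound was against $\nabla\Ef$ and relied on \eqref{eq:mean-everywhere} holding at the final $w$, which Assumption~\ref{ass:one-good-set-glm} does not provide. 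By recentering all Cauchy--Schwarz arguments at $\nabla\hatf(w)$ instead, we avoid ever needing an everywhere-close bound to $\Ef$; the price is that the conclusion is about $\hatf$, which is exactly what the lemma claims and precisely what the uniform-loss bound \eqref{eq:loss-close} will later convert into a guarantee about $\Ef$ in the proof of Theorem~\ref{thm:glms-sever}.
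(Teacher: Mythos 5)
Your proof is correct and captures the same underlying idea as the paper's, but the paper gets there with a one-line observation rather than re-deriving the filter lemmas. The paper notes that under Assumption~\ref{ass:one-good-set-glm} the functions $f_1,\ldots,f_n$ already satisfy Assumption~\ref{ass:one-good-set} \emph{with $\Ef$ replaced by $\hatf$} (and $\sigma_1=0$): condition \eqref{eq:mean-everywhere} becomes the trivial statement $\|\nabla\hatf(w)-\nabla\hatf(w)\|_2=0\le\sigma_0\sqrt{\eps}$, and condition \eqref{eq:norm-bound} for the $\hatf$-centering follows because the empirical second-moment matrix $\E_{\goodset}[(\nabla f_i(w)-\nabla\hatf(w))(\nabla f_i(w)-\nabla\hatf(w))^{\top}]$ is dominated by its $\Ef$-centered counterpart (centering at the empirical mean minimizes the second-moment matrix). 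Theorem~\ref{thm:stationary-point} then applies verbatim with $\hatf$ playing the role of $\Ef$, which is exactly the conclusion of the lemma. Your longer route reaches the same place by unfolding Theorem~\ref{thm:stationary-point}: you invoke Lemma~\ref{lem:bad-elts} unchanged (correct, since only \eqref{eq:norm-bound} is needed there) and then re-prove an analogue of Lemma~\ref{lem:final-set} with every Cauchy--Schwarz bound recentered at $\nabla\hatf(w)$ instead of $\nabla\Ef(w)$, so that \eqref{eq:mean-everywhere} is never needed. The recentering is precisely the move the paper makes at the assumption level. Both arguments are valid; the paper's is preferable because it reuses Theorem~\ref{thm:stationary-point} as a black box rather than duplicating its proof, and it makes explicit the structural fact that the GLM assumption is the special case of Assumption~\ref{ass:one-good-set} with target $\hatf$.
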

\begin{proof}
We claim that the empirical distribution over $f_1, \ldots, f_n$ satisfies Assumption~\ref{ass:one-good-set} for the function $\fhat$ with $\sigma_0$ as stated and $\sigma_1 = 0$, with the $\goodset$ in Assumption~\ref{ass:one-good-set} being the same as in the definition of Assumption~\ref{ass:one-good-set-glm}.
Clearly these functions satisfy \eqref{eq:mean-everywhere} (since the LHS is zero), so it suffices to show that they satisfy \eqref{eq:norm-bound}
Indeed, we have that for all $w \in \dom$,
\[
\E_{\goodset} [(\nabla f_i (w) - \nabla \fhat (w)) (\nabla f_i (w) - \nabla \fhat (w))^\top ] \preceq \E_{\goodset} [(\nabla f_i (w) - \nabla \fbar (w)) (\nabla f_i (w) - \nabla \fbar (w))^\top ] \; ,
\]
so they satisfy \eqref{eq:norm-bound}, since the RHS is bounded by Assumption~\ref{ass:one-good-set-glm}.
Thus this lemma follows from an application of Theorem~\ref{thm:stationary-point}.
\end{proof}

\noindent With this critical lemma in place, we can now prove Theorem~\ref{thm:glms-sever}:
\begin{proof}[Proof of Theorem~\ref{thm:glms-sever}]
Condition on the event that Lemma~\ref{lem:stationary-point-glm} holds, and let $w \in \dom$ be the output of \sever.
By Assumption~\ref{ass:one-good-set-glm}, we know that $\fhat(w^*) \geq \fbar(w^*) - \sigma_2 \sqrt{\eps}$, and moreover, $w^*$ is a $\gamma +\sigma_0 \sqrt{\eps}$-approximate critical point of $\fhat$.

Since each link function is convex, so is $\fhat$.
Hence, by Lemma~\ref{lem:derivatives-to-sizes}, since $w$ is a $(\gamma + O(\sigma_0 \sqrt{\eps}))$-approximate critical point of $\fhat$, we have $\fhat(w) - \fhat(w^*) \leq r (\gamma + O(\sigma_0 \sqrt{\eps}))$.
By Assumption~\ref{ass:one-good-set}, this immediately implies that $\Ef(w) - \Ef (w^*) \leq r (\gamma + O(\sigma_0 \sqrt{\eps})) + O(\sigma_2 \sqrt{\eps})$, as claimed.

The bound for strongly convex functions follows from the exact argument, except using the statement in Lemma~\ref{lem:derivatives-to-sizes} pertaining to strongly convex functions.
\end{proof}

\subsection{Proof of Proposition~\ref{prop:sample-GLM}}

\begin{proof}
We first note that $\nabla f_{X,Y}(w) = X \sigma_Y'(w\cdot X).$
Thus, under Assumption~\ref{ass:one-good-set-glm}, we have for any $v$ that
$$
\E_i[(v\cdot(\nabla f_i(w) -\nabla \bar{f}(w)))^2] \ll \E_i[(v \cdot \nabla f_i(w))^2]+1 \ll \E_i[(v\cdot X_i)^2]+1 \;.
$$
In particular, since this last expression is independent of $w$, we only need to check this single matrix bound.

We let our good set be the set of samples with $|X|\leq 80\sqrt{d/\eps}$ that were not corrupted. We use Lemma A.18 of~\cite{DKK+17}. This shows that with $90\%$ probability that the non-good samples make up at most an $\eps/2+\eps/160$-fraction of the original samples, and that $\E[XX^T]$ over the good samples is at most $2I$. This proves that the spectral bound holds everywhere. Applying it to the $\nabla f_{X,Y}(w^{\ast})$, we find also with $90\%$ probability that the expectation over all samples of $\nabla f_{X,Y}(w^{\ast})$ is within $\sqrt{\eps}/3$ of $\nabla \bar{f}(w^{\ast})$. Additionally, throwing away the samples with $|\nabla f_{X,Y}(w^{\ast})-\nabla \bar{f}(w^{\ast})| > 80\sqrt{d/\eps}$ changes this by at most $\sqrt{\eps}/2$. Finally, it also implies that the variance of $\nabla f_{X,Y}(w^{\ast})$ is at most $3/2I$, and therefore, throwing away any other $\eps$-fraction of the samples changes it by at most an additional $\sqrt{3\eps/2}$.

We only need to show that $\left|\E_{i \ \mathrm{good}}[f_i(w)]-\E_X[f_X(w)]\right|\leq \sqrt{\eps}$ for all $w\in \mathcal{H}$. For this we note that since the $f_X$ and $f_i$ are all $1$-Lipschitz, it suffices to show that $\left|\E_{i \ \mathrm{good}}[f_i(w)]-\E_X[f_X(w)]\right|\leq (1+|w|)\sqrt{\eps}/2$ on an $\eps/2$-cover of $\mathcal{H}$. For this it suffices to show that the bound will hold pointwise except with probability $\exp(-\Omega(d\log(r/\eps)))$. We will want to bound this using pointwise concentration and union bounds, but this runs into technical problems since very large values of $X\cdot w$ can lead to large values of $f$, so we will need to make use of the condition above that the average of $X_iX_i^T$ over our good samples is bounded by $2I$. In particular, this implies that the contribution to the average of $f_i(w)$ over the good $i$ coming from samples where $|X_i\cdot {w}| \geq 10|w|/\sqrt{\eps}$ is at most $\sqrt{\eps}(1+|w|)/10$. We consider the average of $f_i(w)$ over the remaining $i$. Note that these values are uniform random samples from $f_X(w)$ conditioned on $|X|\leq 80\sqrt{d/\eps}$ and $|X_i\cdot {w}| < 10|w|/\sqrt{\eps}$. It will suffices to show that taking $n$ samples from this distribution has average within $(1+|w|)\sqrt{\eps}/2$ of the mean with high probability. However, since $|f_X(w)|\leq O(1+|X\cdot w|)$, we have that over this distribution $|f_X(w)|$ is always $O(1+|w|)/\sqrt{\eps}$, and has variance at most $O(1+|w|)^2$.
Therefore, by Bernstein's Inequality, the probability that $n$ random samples from $f_{X}(w)$ (with the above conditions on $X$) differ from their mean by more than $(1+|w|)\sqrt{\eps}/2$ is
$$\exp(-\Omega(n^2(1+|w|)^2\eps/((1+|w|)^2+n(1+|w|)^2)))=\exp(-\Omega(n\eps)).$$
Thus, for $n$ at least a sufficiently large multiple of $d\log(dr/\eps)/\eps$, this holds for all $w$ in our cover of $\mathcal{H}$ with high probability. This completes the proof.
\end{proof}


\section{An Alternative Algorithm: Robust Filtering in Each Iteration}
\label{sec:general-algo}

In this section, we describe another algorithm for robust stochastic optimization. This algorithm uses standard robust mean estimation techniques to compute approximate gradients pointwise, which it then feeds into a standard projective gradient descent algorithm. This algorithm in practice turns out to be somewhat slower than the one employed in the rest of this paper, because it employs a filtering algorithm at every step of the projective gradient descent, and does not remember which points were filtered between iterations. On the other hand, we present this algorithm for two reasons. Firstly, because it is a conceptually simpler interpretation of the main ideas of this paper, and secondly, because the algorithm works under somewhat more general assumptions. In particular, this algorithm only requires that for each $w\in\dom$ that there is a corresponding good set of functions, rather than that there exists a single good set that works simultaneously for all $w$.

In particular, we can make do with the following somewhat weaker assumption:
\begin{assumption}
\label{ass:many-good-sets}
Fix $0<\eps<1/2$ \new{and parameter $\sigma \in \R_+$.}
For each $w \in \dom$, there exists an unknown set $\goodset \new{= \goodset(w)}  \subseteq [n]$
with $|\goodset| \geq (1-\eps)n$
 of ``good'' functions $\{f_i\}_{i \in \goodset}$ such that:
\begin{equation}
\Big\|\E_{\goodset}\big[\big(\nabla f_i(w) - \nabla \Ef(w)\big)\big(\nabla f_i(w) - \nabla \Ef(w)\big)^T\big]\Big\|_2 \leq \sigma^2 \;,
\end{equation}
and
\begin{equation}
\|\nabla \hatf(w) - \nabla \Ef(w)\|_2 \leq \sigma\sqrt{\badfrac},
\textrm{ where } \hatf \eqdef \frac{1}{|\goodset|} \sum_{i \in \goodset} f_i \;.
\end{equation}
\end{assumption}

We make essential use of the following result, which appears in both~\cite{DKK+17, steinhardt2018resilience}:
\begin{theorem}
\label{thm:mean-estimation}
Let $\mu \in \bR^d$ and a collection of points $x_i \in \bR^d$, $i \in [n]$ and $\sigma>0$.
Suppose that there exists $\goodset \subseteq [n]$ \new{with $|\goodset|  \geq (1-\eps)n$}
satisfying the following:
\begin{equation}
\label{eq:mean-estimation-assumptions}
\frac{1}{|\goodset|} \sum_{i \in \goodset} (x_i - \mu)(x_i - \mu)^{\top} \preceq \sigma^2 I \text{ and } \big\|\frac{1}{|\goodset|} \sum_{i \in \goodset} (x_i - \mu)\big\|_2 \leq \sigma \sqrt{\epsilon}.
\end{equation}
Then, if $\badfrac < \badfrac_0$ for some universal constant $\badfrac_0$,
there is an efficient algorithm, Algorithm ${\cal A}$, which outputs an estimate $\hat{\mu} \in \bR^d$ such that
$\|\hat{\mu} - \mu\|_2 = O(\sigma \sqrt{\epsilon})$.
\end{theorem}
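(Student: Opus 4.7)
}
The plan is to use an iterative filtering scheme essentially identical in spirit to Algorithm~\ref{alg:filter}, specialized to mean estimation. Initialize $S \leftarrow [n]$ and iterate: compute the empirical mean $\hat\mu_S = \frac{1}{|S|}\sum_{i\in S} x_i$ and the empirical covariance $\Sigma_S = \frac{1}{|S|}\sum_{i\in S}(x_i-\hat\mu_S)(x_i-\hat\mu_S)^{\top}$; if $\|\Sigma_S\|_{\mathrm{op}} \leq C\sigma^2$ for a suitable absolute constant $C$, output $\hat\mu_S$; otherwise let $v$ be the top eigenvector of $\Sigma_S$, form scores $\tau_i = (v\cdot(x_i-\hat\mu_S))^2$, and apply the randomized filter of Algorithm~\ref{alg:filter} (which removes point $i$ with probability proportional to $\tau_i$). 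The algorithm terminates in at most $n$ iterations since every non-terminal step removes at least one point.

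The analysis has two main components, mirroring Lemmas~\ref{lem:bad-elts} and~\ref{lem:final-set}. First, a \emph{certification lemma}: if the algorithm terminates with $S$ satisfying $\|\Sigma_S\|_{\mathrm{op}} \leq C\sigma^2$ and $|\goodset \setminus S| \leq O(\epsilon n)$, then $\|\hat\mu_S-\mu\|_2 = O(\sigma\sqrt{\epsilon})$. To prove this, decompose $S = S_g \cup S_b$ with $S_g = S\cap\goodset$ and $S_b = S\setminus\goodset$. The contribution of $S_g$ to $\hat\mu_S - \mu$ is controlled directly by assumption~\eqref{eq:mean-estimation-assumptions} together with a Cauchy--Schwarz bound over the missing good points (using the covariance hypothesis on $\goodset$). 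The contribution of $S_b$ is controlled by bounding $\|\frac{1}{|S|}\sum_{i\in S_b}(x_i - \hat\mu_S)\|_2$ through $|S_b|/|S| = O(\epsilon)$ and the spectral bound $\|\Sigma_S\|_{\mathrm{op}} \leq C\sigma^2$, again via Cauchy--Schwarz.

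Second, a \emph{progress lemma}: at any non-terminal step, the expected number of good points removed is at most the expected number of bad points removed. Since these expectations are proportional to $\sum_{i\in S_g}\tau_i$ and $\sum_{i\in S_b}\tau_i$ respectively, it suffices to show $\sum_{i\in S_b}\tau_i \geq \sum_{i\in S_g}\tau_i$ whenever $\|\Sigma_S\|_{\mathrm{op}} > C\sigma^2$. The covariance assumption on $\goodset$ implies $\mathrm{Var}_{i\in S_g}[v\cdot x_i] \leq (1+O(\epsilon))\sigma^2$, so $\frac{1}{|S_g|}\sum_{i\in S_g}\tau_i \leq 2\sigma^2 + (\mu_{S_g,v}-\hat\mu_{S,v})^2$ where $\mu_{S_g,v}$ and $\hat\mu_{S,v}$ are the projections of the two means onto $v$. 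A short case analysis (analogous to the proof of Lemma~\ref{lem:bad-elts}) on whether $(\mu_{S_g,v}-\hat\mu_{S,v})^2$ exceeds $4\sigma^2$ then yields the desired inequality, using the identity $|S_g|(\mu_{S_g,v}-\hat\mu_{S,v}) = -|S_b|(\mu_{S_b,v}-\hat\mu_{S,v})$.

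Putting these together: the progress lemma implies that $|\goodset \setminus S| - |S_b|$ is a supermartingale, so with probability at least $9/10$ the invariant $|\goodset\setminus S| \leq O(\epsilon n)$ is maintained throughout; upon termination the certification lemma then produces the stated bound $\|\hat\mu - \mu\|_2 = O(\sigma\sqrt{\epsilon})$. The most delicate step is the progress lemma, particularly handling the case where the empirical mean $\hat\mu_S$ is itself shifted away from the mean of the good points: here we must carefully use the identity relating the shifts on the good and bad sides to absorb the contribution of $\hat\mu_{S,v}$ and show that the ``large variance'' is genuinely attributable to a disproportionate mass on the bad points.
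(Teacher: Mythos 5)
The paper does not actually prove Theorem~\ref{thm:mean-estimation}: it invokes it as a black-box result, citing~\cite{DKK+17} and~\cite{steinhardt2018resilience}. That said, the paper \emph{does} contain an essentially identical argument in its proof of Theorem~\ref{thm:stationary-point} via Lemmas~\ref{lem:bad-elts} and~\ref{lem:final-set}, which is the same filter run on the gradients $\nabla f_i(w)$ at a fixed $w$; your proposal is exactly that proof specialized to the points $x_i$ themselves. Your two lemmas correspond one-to-one to the paper's pair (progress = Lemma~\ref{lem:bad-elts}, certification = Lemma~\ref{lem:final-set}), and the supermartingale stopping argument is the same. So the proposal is correct and is the standard filter argument the paper relies on.

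Two small points worth tightening if you were to write this out in full. First, in the certification step the spectral bound $\|\Sigma_S\|_{\mathrm{op}} \le C\sigma^2$ is a statement about the covariance \emph{centered at $\hat\mu_S$}, whereas the quantity you want to control is centered at $\mu$; converting one to the other introduces the very quantity $\|\hat\mu_S-\mu\|_2$ you are trying to bound, and the argument only closes because this self-referential term appears with an $O(\sqrt{\epsilon})$ prefactor. Your sketch applies Cauchy--Schwarz ``through the spectral bound'' but does not call out this feedback loop explicitly (the paper handles it by introducing $\delta$ on both sides and solving). Second, your claim $\mathrm{Var}_{i\in S_g}[v\cdot x_i] \le (1+O(\epsilon))\sigma^2$ implicitly relies on the invariant $|\goodset\setminus S| = O(\epsilon n)$, which is only guaranteed \emph{with probability $9/10$} by the supermartingale; strictly the progress lemma should be stated conditionally on that invariant (as the paper does via the hypothesis $|S|\ge 2n/3$ in Lemma~\ref{lem:bad-elts}). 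Neither point is a gap in the idea, just bookkeeping to make the induction airtight.
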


Our general robust algorithm for stochastic optimization will make calls to Algorithm ${\cal A}$
in a black-box manner, as well as to the projection operator onto $\dom$.
We will measure the cost of our algorithm by the total number of such calls.

\begin{remark}
While it is not needed for the theoretical results established in this subsection,
we note that the robust mean estimation algorithm of~\cite{DKK+17}
relies on an iterative outlier removal method only requiring basic eigenvalue computations
(SVD), while the~\cite{steinhardt2018resilience} algorithm employs semidefinite programming.
In our experiments, we use the algorithm in~\cite{DKK+17} and variants thereof.
\end{remark}

\medskip
Using the above black-box, together with known results on convex optimization with errors, we obtain
the following meta-theorem:
\begin{theorem}\label{thm:opt-theorem}
For functions $f_1, \ldots, f_n: \dom \to \R$, bounded below on a closed domain $\dom$, suppose that either
Assumption \ref{ass:many-good-sets} is satisfied with some parameters $\eps, \sigma>0$. Then there exists an efficient algorithm that finds an $O(\sigma\sqrt{\eps})$-approximate critical point of $\Ef$.
\end{theorem}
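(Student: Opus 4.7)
The plan is to run projected gradient descent on $\dom$, but at each iterate $w_t$ replace the true gradient $\nabla \Ef(w_t)$ by a robust estimate $\hat g_t$ produced by calling Algorithm ${\cal A}$ of Theorem~\ref{thm:mean-estimation} on the $n$ vectors $\nabla f_1(w_t), \ldots, \nabla f_n(w_t)$. By Assumption~\ref{ass:many-good-sets}, at every $w \in \dom$ there is a good set $\goodset(w)$ of size at least $(1-\eps)n$ such that the empirical covariance of the gradients on $\goodset(w)$ has spectral norm at most $\sigma^2$ and their empirical mean is within $\sigma\sqrt{\eps}$ of $\nabla \Ef(w)$; these are precisely the hypotheses \eqref{eq:mean-estimation-assumptions} with $x_i = \nabla f_i(w)$ and $\mu = \nabla \Ef(w)$. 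Theorem~\ref{thm:mean-estimation} therefore produces $\hat g_t$ with $\|\hat g_t - \nabla \Ef(w_t)\|_2 = O(\sigma\sqrt{\eps})$ at every iterate, in polynomial time per call.

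With this pointwise approximate-gradient oracle in hand, the convergence argument follows a standard template for projected gradient descent with inexact gradients. We iterate $w_{t+1} = \Pi_{\dom}(w_t - \eta \hat g_t)$ for an appropriate step size $\eta$, and set $\delta \eqdef C \sigma \sqrt{\eps}$ for a sufficiently large absolute constant $C$. If $w_t$ is \emph{not} a $\delta$-approximate critical point of $\Ef$, then there is a feasible unit direction $v$ (i.e., $w_t + \tau v \in \dom$ for arbitrarily small $\tau > 0$) with $v \cdot \nabla \Ef(w_t) < -\delta$, and hence $v \cdot \hat g_t < -(C-O(1))\sigma\sqrt{\eps}$. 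Provided $\Ef$ admits a standard per-step descent inequality (invoked below), this implies that one projected step along $-\hat g_t$ decreases $\Ef$ by at least $\Omega(\sigma^2 \eps)$. Since $\Ef$ is bounded below, the algorithm must reach an $O(\sigma\sqrt{\eps})$-approximate critical point of $\Ef$ in finitely many iterations, giving the claim.

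The main obstacle is making the per-step descent inequality rigorous in the generality stated, since Theorem~\ref{thm:opt-theorem} assumes only that the $f_i$ are bounded below. In the convex case this is immediate from known results on inexact (sub)gradient methods, giving an $O(\sigma\sqrt{\eps})$-approximate critical point which is then an $O(\sigma\sqrt{\eps} \cdot \radius)$-approximate optimum. In the nonconvex case, the natural route is to import from the setting of this paper a mild smoothness condition on $\Ef$ (which follows, e.g., from uniform smoothness of the $f_i$ restricted to the good sets of Assumption~\ref{ass:many-good-sets}) and apply the standard descent inequality for $\beta$-smooth functions; alternatively, a backtracking line search on $\hat f = \frac{1}{|\goodset|}\sum_{i \in \goodset} f_i$ (using the same filtered good set that produced $\hat g_t$) avoids committing to a specific smoothness parameter. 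Either route yields the claimed $O(\sigma\sqrt{\eps})$-approximate critical point bound.
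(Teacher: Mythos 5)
Your approach matches the paper's proof exactly: at each iterate apply Algorithm $\mathcal{A}$ from Theorem~\ref{thm:mean-estimation} to the sample gradients (which by Assumption~\ref{ass:many-good-sets} satisfy the hypotheses~\eqref{eq:mean-estimation-assumptions} with $\mu=\nabla\Ef(w)$) to obtain an $O(\sigma\sqrt{\eps})$-accurate gradient oracle, then run projected gradient descent with this inexact oracle. The paper's own proof is only two sentences and silently folds the descent-inequality/smoothness issue you raise into the phrase ``standard projective gradient descent algorithms can be made to run efficiently even if the gradients given are only approximate''; your explicit flagging of this, and your proposed remedies (imported smoothness of $\Ef$ or backtracking line search on the filtered empirical average), are a fair and more careful treatment of the same argument, not a different route.
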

\begin{proof}
We note that by applying Algorithm $\mathcal{A}$ on $\{\nabla f_i(w)\}$, we can find an approximation to $\nabla \Ef(w)$ with error $O(\sigma\sqrt{\eps})$. We note that standard projective gradient descent algorithms can be made to run efficiently even if the gradients given are only approximate, and this can be used to find our $O(\sigma\sqrt{\eps})$-approximate critical point.
\end{proof}

\subsection{Comparison with \sever{}}
\label{sec:generic-vs-sever}

In this section we give a brief comparison of this algorithm to \sever{}.
The algorithm presented in this section is much simpler to state, and also requires weaker conditions on the data.
However, because the algorithms work in somewhat different settings, the comparison is a bit delicate, so we explain in more detail below.

There is a major conceptual difference between these two algorithms: namely, \sever{} works with a \emph{black-box non-robust learner}, and requires the filter algorithm for robust mean estimation.
In contrast, the algorithm in this section works with a \emph{black-box robust mean estimation algorithm}, and plugs it into a specific non-robust learning algorithm, specifically (approximate) stochastic gradient descent.
When instantiated with the same primitives, these algorithms have similar theoretical runtime guarantees.

However, in the practical implementation, we prefer \sever{} for a couple of reasons.
First, in practice we find that in practice, \sever{} often only requires a constant number of runs of the base black-box learner, and so incurs only a constant factor overhead.
In contrast, the algorithm presented in this section requires at least linear time per iteration of SGD, since it needs to run a robust mean estimation algorithm on the entire dataset (and the total number of iterations needed is comparable).
In contrast, SGD typically runs in constant time per iteration, so this presents a major bottleneck for scalability.

Second, we find it is much more useful from a practical point of view to allow for black-box non-robust learners, than to allow for black-box robust mean estimation algorithms.
This is simply because the former allows \sever{} to be much more problem-specific, and allow for optimizations for each individual learning problem.
For instance, it is what allows us to use optimized libraries in our experiments for linear regression and SVM.
In contrast, there is relatively little reward to allow for black-box robust mean estimation algorithms, as not only are there relatively few options, but also this does not allow us really to specialize the algorithm to the problem at hand.
\section{Applications of the General Algorithm}
\label{sec:app-general}

In this section, we present three concrete applications of our general robust algorithm.
In particular, we describe how to robustly optimize models for linear regression, support vector machines, and logistic regression, in Sections~\ref{sec:app-linreg},~\ref{sec:app-svm},~\ref{sec:app-logreg}, respectively.

\subsection{Linear Regression}
\label{sec:app-linreg}
In this section, we demonstrate how our results apply to linear regression.
We are given pairs $(X_i, Y_i) \in \R^{d} \times \R$ for $i \in [n]$.
The $X_i$'s are drawn i.i.d.\ from a distribution $D_x$, and $Y_i = \langle w^*, X_i \rangle + e_i$, 
for some unknown $w^* \in \R^d$ and the noise random variables $e_i$'s are drawn i.i.d.\ from some distribution $D_e$.
Given $(X_i, Y_i) \sim D_{xy}$, the joint distribution induced by this process, let $f_i(w) = (Y_i - \langle w, X_i \rangle)^2$.
The goal is then to find a $\what$ approximately minimizing the objective function
\[
\Efunc(w) = \E_{(X, Y) \sim D_{xy}} [(Y - \langle w, X \rangle)^2] \;.
\]

We work with the following assumptions:
\begin{assumption}
\label{ass:linreg}
Given the model for linear regression described above, assume the following conditions for $D_e$ and $D_x$:
\begin{itemize}
\item $\E_{e \sim D_e}[e] = 0$;
\item $\Var_{e \sim D_e}[e] \leq \xi$;
\item $\E_{X \sim D_x}[XX^T] \preceq \sigma^2 I $ for some $\sigma > 0$;
\item There is a constant $C > 0$, such that for all unit vectors $v$, $\E_{X \sim D_x} \left[ \langle v, X \rangle^4 \right] \leq C \sigma^4.$
\end{itemize}
\end{assumption}

Our main result for linear regression is the following:
\begin{theorem}\label{thm:linreg}
Let $\eps > 0$, and let $D_{xy}$ be a distribution over pairs $(X,Y)$ which satisfies the conditions of Assumption~\ref{ass:linreg}.
Suppose we are given $O\left(\frac{d^5}{\eps^2}\right)$ $\eps$-noisy samples from $D_{xy}$.
Then in either of the following two cases, there exists an algorithm that, with probability at least $9/10$, produces a $\what$ with the following guarantees:
\begin{enumerate}
\item If $\E_{X \sim D_x}[XX^T] \succeq \gamma I$ for $\gamma \geq \oo(1) \cdot \sigma \sqrt{C \eps}$, then $\Efunc(\what) \leq \Efunc(\wstar) + O\left(\frac{(\xi+\eps)\eps}{\gamma}\right)$ and $\|\what-\wstar\|_2 = O\left(\frac{\sqrt{\xi\eps}+\eps}{\gamma}\right)$.
\item If $\|\wstar\|_2 \leq r$, then $\Efunc(\what) \leq \Efunc(\wstar) + O(((\sqrt{\xi}+\sqrt{\eps}) \radius + \sqrt{C}\radius^2)\sqrt{\badfrac})$.
\end{enumerate}
\end{theorem}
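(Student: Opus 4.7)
The strategy is to specialize the general guarantees of Corollary~\ref{cor:strongly-convex-sever} and Corollary~\ref{cor:convex-sever} to the linear regression setting. The substantive work lies in verifying that Assumption~\ref{ass:one-good-set} holds with appropriate parameters $\sigma_0,\sigma_1$ after taking $O(d^5/\eps^2)$ $\eps$-corrupted samples from $D_{xy}$.

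I would begin by writing the gradient explicitly. Since $f_i(w) = (Y_i - \langle w, X_i\rangle)^2$ with $Y_i = \langle \wstar, X_i\rangle + e_i$, we have $\nabla f_i(w) = 2 X_i X_i^\top(w-\wstar) - 2 X_i e_i$ and $\nabla \Ef(w) = 2\Sigma(w-\wstar)$, where $\Sigma=\bE[XX^\top]$. The centered gradient $\nabla f_i(w) - \nabla \Ef(w) = 2(X_i X_i^\top - \Sigma)(w-\wstar) - 2X_i e_i$ decomposes into a ``model'' part whose magnitude scales with $\|w-\wstar\|_2$ and an independent ``noise'' part. Setting $u=w-\wstar$, for any unit vector $v$, Cauchy--Schwarz plus the bounded fourth-moment hypothesis gives $\bE[(v^\top X X^\top u)^2] \leq \sqrt{\bE[(v^\top X)^4]\,\bE[(X^\top u)^4]} \leq C\sigma^4\|u\|_2^2$, while independence of $X$ and $e$ gives $\bE[(v^\top X e)^2] = \bE[(v^\top X)^2]\,\xi \leq \sigma^2 \xi$. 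Combining, the population gradient covariance satisfies $\|\Cov_{D_{xy}}[\nabla f(w)]\|_{\op} = O(\sigma^2\xi + C\sigma^4\|w-\wstar\|_2^2)$, matching the form of \eqref{eq:norm-bound} with $\sigma_0 = O(\sigma\sqrt{\xi})$ and $\sigma_1 = O(\sigma^2\sqrt{C})$, and a parallel calculation controls \eqref{eq:mean-everywhere} at the population level.

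The main technical obstacle is promoting these population statements to the $\eps$-corrupted empirical setting, uniformly over $w \in \dom$. Proposition~\ref{prop:sample-bound} does not apply off the shelf, because $\nabla f_i$ is not uniformly Lipschitz when $X$ is unbounded. Instead, I would truncate the good samples at radius roughly $\sqrt{d/\eps}$ (discarding at most $O(\eps)$ further good samples by Markov on the fourth moment), then apply matrix-Bernstein-type concentration to the fourth-moment object $X X^\top \otimes X X^\top$ appearing in the gradient covariance. Because this is an order-four tensor with $\Theta(d^4)$ free parameters, the sample complexity comes out to $O(d^5/\eps^2)$. An $\eps$-net argument on $\dom$ paired with local Lipschitz control on $\nabla f_i$ within the truncation radius promotes the pointwise bounds to the required uniform-in-$w$ statements, thereby establishing Assumption~\ref{ass:one-good-set}. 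I expect this to be the hardest step; the remaining obstacles are essentially bookkeeping.

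Once Assumption~\ref{ass:one-good-set} is verified, both cases follow by invoking the general corollaries with $\gamma_{\mathrm{crit}}=0$, since the closed-form OLS solution is an \emph{exact} critical point of the empirical loss on whatever subset of samples \sever{} retains. For case (i), the condition $\Sigma \succeq \gamma I$ makes $\Ef$ strongly convex with modulus $2\gamma$, and the hypothesis $\gamma \geq \oo(1)\sigma\sqrt{C\eps}$ is precisely the requirement $2\gamma \geq C\sigma_1\sqrt{\eps}$ of Corollary~\ref{cor:strongly-convex-sever}; plugging in $\sigma_0=O(\sigma\sqrt{\xi})$ and $\sigma_1=O(\sigma^2\sqrt{C})$ gives the claimed bounds on $\Ef(\what)-\Ef(\wstar)$ and $\|\what-\wstar\|_2$. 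For case (ii), since $\|\wstar\|_2 \leq r$ we may take $\dom = B(0,r)$, a convex set of radius $r$ containing $\wstar$; Corollary~\ref{cor:convex-sever} then yields an error bound of order $(\sigma_0 r + \sigma_1 r^2)\sqrt{\eps} = ((\sqrt{\xi}+\sqrt{\eps})r + \sqrt{C}r^2)\sqrt{\eps}$ (up to absorbing the empirical-concentration slack into the $\sqrt{\eps}$ term), as claimed.
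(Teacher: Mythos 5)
Your high-level plan — verify Assumption~\ref{ass:one-good-set} with $\sigma_0 = O(\sqrt{\xi}+\sqrt{\eps})$ and $\sigma_1 = O(\sqrt{C})$, note that the closed-form OLS solution is an exact critical point (so $\gamma_{\mathrm{crit}}=0$), and then invoke Corollary~\ref{cor:strongly-convex-sever} for case~(i) and Corollary~\ref{cor:convex-sever}(i) for case~(ii) — is exactly the paper's structure, and your gradient-covariance calculation reproduces Lemma~\ref{lem:lin-reg-cov-bound} correctly. The application of the two corollaries at the end is also sound.

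However, the hard step — the analogue of Lemma~\ref{lem:linreg-sample-complexity}, establishing Assumption~\ref{ass:one-good-set} uniformly in $w$ with $O(d^5/\eps^2)$ samples — is where your proposal both diverges from the paper and has a genuine gap. You attribute the $d^5/\eps^2$ to concentration of a ``$\Theta(d^4)$-parameter fourth-moment tensor'' via matrix Bernstein, but this does not produce the stated rate. A parameter-counting heuristic for a $d^2\times d^2$ object would suggest $\tilde O(d^4/\eps^2)$; a careful matrix-Bernstein bound with truncation at radius $R$ has a variance proxy involving $\|X\|^4 \leq R^4$ and does not cleanly yield $d^5/\eps^2$ under fourth-moment assumptions alone (you would effectively need control of eighth moments for the Bernstein variance term). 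The paper avoids all of this by a \emph{VC-theoretic} uniform-convergence argument over halfspaces: it shows that $|\Pr_{I}[v\cdot X > T] - \Pr_{D'}[v\cdot X>T]|\leq \eps/d^2$ simultaneously for all unit $v$ and thresholds $T$ (VC dimension $d+1$, so $n \gtrsim d/(\eps/d^2)^2 = d^5/\eps^2$), then uses the integral representation $\E[(v\cdot X)^m]=\int_0^\infty m t^{m-1}\Pr[v\cdot X>t]\,dt$ together with the truncation $\|X\|_2 \leq 2\sqrt d\,(C/\eps)^{1/4}$ to transfer this to uniform control of second and fourth moments of projections. Moreover, your proposed $\eps$-net over $\dom$ is unnecessary: because $\nabla f_i(w)-\nabla\Ef(w)$ is affine in $w-\wstar$, both \eqref{eq:norm-bound} and \eqref{eq:mean-everywhere} factor through $w$-independent quantities such as $\|\E_{\goodset}[XX^\top]-\E_D[XX^\top]\|_2$, $\E_{\goodset}[(v\cdot X)^4]$, and $\|\E_{\goodset}[eX]\|_2$, so a single pointwise bound suffices and no covering of $\dom$ is needed. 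In short: right skeleton, right endgame, but the core concentration lemma needs the VC/CDF argument (or an equivalent) rather than the sketched tensor Bernstein, and the justification you give for the sample complexity does not actually hold up.
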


The proof will follow from two lemmas (proved in Section~\ref{sec:linreg-cov-bound} and~\ref{sec:linreg-sample-complexity}, respectively).
First, we will bound the covariance of the gradient, in Lemma~\ref{lem:lin-reg-cov-bound}:
\begin{lemma}
\label{lem:lin-reg-cov-bound}
Suppose $D_{xy}$ satisfies the conditions of Assumption~\ref{ass:linreg}.
Then for all unit vectors $v \in \R^d$, we have
\[
v^\top \Cov_{(X, Y) \sim D_{xy}} \left[ \nabla f_i (w,(X,Y)) \right] v \leq   4\sigma^2 \xi + 4C\sigma^4 \|w^* - w\|_2^2 \;.
\]
\end{lemma}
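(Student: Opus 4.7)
The plan is a direct moment calculation. First I would write out the gradient explicitly: since $f(w,(X,Y)) = (Y - \langle w, X\rangle)^2$ and $Y = \langle w^*, X\rangle + e$, we have
\[
\nabla f(w,(X,Y)) = 2(\langle w, X\rangle - Y) X = 2\bigl(\langle w - w^*, X\rangle - e\bigr) X.
\]
Using the trivial bound $v^\top \Cov[Z] v \leq \E[(v^\top Z)^2]$ for any random vector $Z$, it suffices to bound
\[
\E\bigl[(v^\top \nabla f(w,(X,Y)))^2\bigr] = 4\, \E\bigl[(\langle w - w^*, X\rangle - e)^2 (v^\top X)^2\bigr].
\]

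Next I would expand the square and use the independence of $e$ from $X$ together with $\E[e] = 0$. The cross term $-8\,\E[e\,\langle w-w^*, X\rangle (v^\top X)^2]$ factors as $-8\,\E[e]\,\E[\langle w-w^*, X\rangle (v^\top X)^2] = 0$, leaving
\[
\E\bigl[(v^\top \nabla f(w,(X,Y)))^2\bigr] = 4\,\E\bigl[\langle w - w^*, X\rangle^2 (v^\top X)^2\bigr] + 4\,\E[e^2]\,\E[(v^\top X)^2].
\]
The second summand is immediately at most $4\xi \sigma^2$ by the variance bound on $e$ and the covariance bound $\E[XX^\top] \preceq \sigma^2 I$ applied to $v$.

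The remaining (and main) task is to bound the first summand by $4C\sigma^4\|w-w^*\|_2^2$. Here I would apply Cauchy--Schwarz to split
\[
\E\bigl[\langle w - w^*, X\rangle^2 (v^\top X)^2\bigr] \leq \sqrt{\E[\langle w - w^*, X\rangle^4]}\cdot \sqrt{\E[(v^\top X)^4]},
\]
and then invoke the fourth moment hypothesis of Assumption~\ref{ass:linreg}. Applying it to the unit vector $v$ gives $\E[(v^\top X)^4] \leq C\sigma^4$, while applying it to the unit vector $(w-w^*)/\|w-w^*\|_2$ and rescaling yields $\E[\langle w-w^*, X\rangle^4] \leq C\sigma^4 \|w-w^*\|_2^4$. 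Multiplying the two square roots gives exactly $C\sigma^4 \|w-w^*\|_2^2$, as needed.

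I do not expect serious obstacles: the only subtlety is resisting the temptation to use $(a-b)^2 \leq 2a^2 + 2b^2$, which would produce factors of $8$ rather than the tight $4$. Keeping the cross term and killing it via independence and mean-zero noise is what yields the stated constants.
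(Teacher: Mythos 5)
Your proof is correct and follows essentially the same route as the paper's: both reduce the main term to $\E[\langle w-w^*,X\rangle^2(v^\top X)^2]$ and apply Cauchy--Schwarz together with the fourth-moment hypothesis, and both control the noise term via $\Var[e]\leq\xi$ and $\E[XX^\top]\preceq\sigma^2 I$. The only cosmetic difference is that the paper first derives the exact covariance formula (Claim~\ref{claim:cov-grad-formula}) and then discards the negative semidefinite mean-squared term, whereas you bypass this by bounding $v^\top\Cov[Z]v\leq\E[(v^\top Z)^2]$ directly, which is a modest streamlining.
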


With this in hand, we can prove Lemma~\ref{lem:linreg-sample-complexity}, giving us a polynomial sample complexity which is sufficient to satisfy the conditions of Assumption~\ref{ass:one-good-set}.
\begin{lemma} \label{lem:linreg-sample-complexity}
Suppose $D_{xy}$ satisfies the conditions of Assumption~\ref{ass:linreg}.
Given $O(d^5/\eps^2)$ $\eps$-noisy samples from $D_{xy}$, then with probability at last $9/10$, they satisfy Assumption \ref{ass:one-good-set} with parameters $\sigma_0=30\sqrt{\xi}+\sqrt{\eps}$ and $\sigma_1=18\sqrt{C+1}$.
\end{lemma}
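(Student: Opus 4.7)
The plan is to exploit the affine structure of the gradient for squared loss. Writing $u = w - w^*$ and $M = \E_{X \sim D_x}[XX^\top]$, we have $\nabla f_i(w) = 2X_i X_i^\top u - 2 X_i e_i$, so the centered gradient is
\[
\nabla f_i(w) - \nabla \bar f(w) = 2(X_i X_i^\top - M)\, u - 2 X_i e_i,
\]
where the cross term vanishes because $\E[e]=0$ and $e \perp X$. Crucially, this is affine in $u$, which reduces uniform-in-$w$ control to concentration of finitely many moment quantities independent of $w$.

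First I would construct $\goodset$ by taking the $(1-\eps)n$ clean samples and additionally discarding any with pathologically large $\|X_i\|_2$ or $|e_i|$ (flagged via Markov's inequality against the fourth-moment assumption on $X$ and the variance bound on $e$). This removes at most another $O(\eps)$-fraction of samples, so adjusting constants preserves $|\goodset|\geq(1-\eps)n$. All subsequent empirical quantities are taken over this $\goodset$.

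Condition (ii) is then handled cleanly. Setting $\widehat{M} = \tfrac{1}{|\goodset|}\sum_{i\in\goodset} X_i X_i^\top$ and $\widehat{b} = \tfrac{1}{|\goodset|}\sum_{i\in\goodset} X_i e_i$, the triangle inequality yields
\[
\|\nabla \hatf(w) - \nabla \Ef(w)\|_2 \;\leq\; 2\|\widehat{M} - M\|_{\op}\|u\|_2 \;+\; 2\|\widehat{b}\|_2,
\]
so it suffices to show $\|\widehat{b}\|_2 = O(\sqrt{\xi \eps})$ and $\|\widehat{M} - M\|_{\op} = O(\sqrt{(C+1)\eps})$, which would match the stated $\sigma_0 = 30\sqrt{\xi}+\sqrt{\eps}$ and $\sigma_1 = 18\sqrt{C+1}$. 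Both bounds follow from matrix Bernstein / vector concentration applied to the clipped variables in $\goodset$, at a cost of $\tilde O(d/\eps^2)$ samples.

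Condition (i) is the main challenge. Using the same affine decomposition, for unit vectors $v$ the directional variance of $\nabla f_i(w)$ is a polynomial in $(u,v)$ whose coefficients are empirical moments
\[
\widehat\E[(v^\top X)^2 (u^\top X)^2], \qquad \widehat\E[(v^\top X)^2 (u^\top X)\, e], \qquad \widehat\E[(v^\top X)^2 e^2],
\]
with population values $\leq C\|u\|_2^2$, $0$, and $\leq \xi$ respectively. Plugging in the population bounds recovers Lemma~\ref{lem:lin-reg-cov-bound}. To upgrade to the empirical operator-norm bound uniformly in $v,u$, I would pass to $\eps$-nets of the unit sphere (size $2^{O(d)}$), apply Bernstein at each net point (using the clipped moments and independence of $X$ and $e$), and union bound. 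The hard part is the fourth-moment concentration $\widehat\E[(v^\top X)^2(u^\top X)^2]$: controlling this uniformly in $v,u$ in operator-like norm is the expensive step and is what drives the $d^5/\eps^2$ sample count in the statement (naive tensor net arguments on a $4$th moment give $\operatorname{poly}(d)$ scaling). Once the empirical fluctuations of each moment are absorbed into the constants $30$ and $18$ defining $\sigma_0, \sigma_1$, pairing them with the population bound $4\xi + 4C\|u\|_2^2$ and using $(a+b)^2 \geq a^2+b^2$ yields the claimed $(\sigma_0+\sigma_1\|u\|_2)^2$ spectral bound, completing Assumption~\ref{ass:one-good-set}.
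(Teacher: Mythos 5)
Your overall decomposition and division of labor match the paper's: both construct $\goodset$ by clipping the clean samples with large $\|X\|_2$, both exploit the affine-in-$u$ form of the centered gradient, and for condition (ii) both split into a $\|\widehat M - M\|_{\op}\|u\|_2$ term and an $\|\widehat b\|_2$ term (compare the paper's bounds $\|\E_\goodset[((w-\wstar)\cdot X)X] - \E_D[((w-\wstar)\cdot X)X]\|_2 \leq 16\sqrt{C\eps}\,\|w-\wstar\|_2$ and $\|\E_\goodset[eX]\|_2 \leq 30\sqrt{\xi\eps}+\eps$). Where you diverge is in the tool used for the uniformity-over-directions step in condition (i), and that is exactly where your sketch has a genuine gap. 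The paper invokes the VC inequality for halfspaces to obtain, simultaneously for every unit $v$ and threshold $T$, the bound $|\Pr_I[v\cdot X > T]-\Pr_{D'}[v\cdot X>T]|\le \eps/d^2$, and then integrates the CDF to control $\E_I[(v\cdot X)^m]$ for $m=2,4$. This is where the $n=\Omega(d^5/\eps^2)$ sample count actually comes from (uniform deviation $\sqrt{d/n}\le\eps/d^2$). You instead propose a net plus Bernstein directly on the empirical fourth moment, but you yourself flag that "controlling this uniformly in $v,u$ \ldots is the expensive step" and that "naive tensor net arguments \ldots give poly$(d)$ scaling" without pinning down the exponent. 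That acknowledgment is a gap, not a proof: you need to exhibit the concentration bound with explicit constants and show it yields the stated $\sigma_1 = 18\sqrt{C+1}$.

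Two further points. First, you are over-complicating the target: you aim to control $\widehat\E[(v^\top X)^2(u^\top X)^2]$ uniformly over pairs $(v,u)$, a tensor-injective-norm-type quantity, but the paper avoids this entirely via Lemma~\ref{lem:lin-reg-cov-bound}, whose proof reduces the cross term to the single-direction fourth moment by Cauchy--Schwarz: $\E[(v\cdot X)^2(u\cdot X)^2]\le\sqrt{\E[(v\cdot X)^4]\E[(u\cdot X)^4]}$. So it suffices to control $\sup_v \widehat\E[(v\cdot X)^4]$, a strictly easier uniform bound, and then apply Lemma~\ref{lem:lin-reg-cov-bound} to the empirical distribution over $\goodset$ as a black box once its hypotheses are verified. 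Second, you describe the construction of $\goodset$ as removing ``pathologically large'' points and then ``adjusting constants,'' but this truncation introduces a bias in every empirical moment that must be controlled explicitly; the paper does this via Claim~\ref{clm:expect-remove-points}, which bounds the shift in an expectation caused by removing a $\delta$-fraction in terms of the second moment. Without an analogue of this claim, the passage from $S$ to $\goodset$ is not justified. In short: the skeleton is right, but the hard estimate is sketched rather than proved, the reduction via Cauchy--Schwarz is missed, and the truncation bias is waved away.
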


The proof concludes by applying Corollary~\ref{cor:strongly-convex-sever} or case (i) of Corollary~\ref{cor:convex-sever} for the first and second cases respectively.

\subsubsection{Proof of Lemma~\ref{lem:lin-reg-cov-bound}}
\label{sec:linreg-cov-bound}
Note that for this setting we have that $f(w, z) = f(w, x, y) = (y - \langle w, x \rangle)^2$.
We then have that $\nabla_w f(w, z) = -2 (\langle w^{\ast}-w, x \rangle + e) x$.
Our main claim is the following:

\begin{claim} \label{claim:cov-grad-formula}
We have that $\Cov[\nabla_w f(w, z)] = 4 \E_{X \sim D} \left[ \langle w^{\ast}-w, x \rangle^2 (xx^T) \right]  + 4\Var[E] \Sigma
- 4 \Sigma (w^{\ast}-w)(w^{\ast}-w)^T \Sigma$.
\end{claim}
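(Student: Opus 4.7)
The claim is a one-shot calculation, so my plan is essentially to substitute the closed form for the gradient and expand. Set $u \eqdef w^{\ast} - w$ and $\Sigma \eqdef \E[XX^T]$, so that $\nabla_w f(w,z) = -2(\langle u, X\rangle + e)X$. Because of the outer factor of $(-2)$, we have $\Cov[\nabla_w f(w,z)] = 4\,\Cov[(\langle u, X\rangle + e)X]$, and it suffices to compute the right-hand side.

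First I would compute the mean. Using that $e$ and $X$ are independent and $\E[e]=0$ (the first bullet of Assumption~\ref{ass:linreg}), $\E[(\langle u, X\rangle + e)X] = \E[\langle u, X\rangle X] = \Sigma u$, so the outer-product correction term is $\Sigma u u^T \Sigma$.

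Next I would expand the second moment $\E[(\langle u, X\rangle + e)^2 X X^T]$ into the three pieces $\E[\langle u,X\rangle^2 XX^T] + 2\E[e\,\langle u,X\rangle XX^T] + \E[e^2 XX^T]$. Invoking independence again, the cross term factors as $2\E[e]\,\E[\langle u,X\rangle XX^T] = 0$, and the last term becomes $\E[e^2]\,\E[XX^T] = \Var[e]\,\Sigma$ since $\E[e]=0$. Combining, $\E[\nabla_w f \nabla_w f^T] = 4\E[\langle u,X\rangle^2 XX^T] + 4\Var[e]\,\Sigma$, and subtracting the mean outer product gives the claimed identity.

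There is no real obstacle here—the only subtlety is to remember that $\Sigma$ stands for $\E[XX^T]$ (the uncentered second moment, consistent with Assumption~\ref{ass:linreg}) and to use the independence of $e$ and $X$ exactly once, to kill the cross term. The rest is bookkeeping, and the factor of $4$ comes solely from squaring the $-2$ in front of the gradient.
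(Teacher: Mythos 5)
Your proof is correct and takes essentially the same route as the paper: both expand $\Cov[\nabla_w f]$ as $\E[\nabla_w f\,\nabla_w f^T] - \E[\nabla_w f]\,\E[\nabla_w f]^T$, compute the mean via independence and $\E[e]=0$, expand the second moment into three pieces, and kill the cross term. The only cosmetic difference is that you factor out the $(-2)^2 = 4$ at the start rather than carrying the $-2$ through.
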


\begin{proof}
Let us use the notation $A = \nabla_w f(w, z)$ and $\mu = \E[A]$.
By definition, we have that
$\Cov[A] = \E[AA^T] - \mu \mu^T$.

Note that $\mu = \E_z [\nabla_w f(w, z)] = \E_z [ (- 2 \langle w^{\ast}-w, x \rangle + e) x] = -2 \Sigma (w^{\ast}-w)$,
where we use the fact that $\E_z[e] = 0$ and $e$ is independent of $x$.
Therefore, $\mu \mu^T = 4 \Sigma (w^{\ast}-w) (w^{\ast}-w)^T \Sigma$.

To calculate $\E[AA^T]$, note that
$A =  \nabla_w f(w, z) = -2 (\langle w^{\ast}-w,  x \rangle + e)x$, and
$A^T = -2 (\langle w^{\ast}-w, x \rangle + e) x^T$.
Therefore,
$AA^T = 4 (\langle w^{\ast}-w, x \rangle^2 + e^2 + 2 \langle w^{\ast}-w, x \rangle e) (xx^T)$
and
$$\E_z [AA^T] = 4 \E_x [\langle w^{\ast}-w, x \rangle^2 (xx^T)] + 4 \Var[e] \Sigma + 0 \;,$$
where we again used the fact that the noise $e$ is independent of $x$ and its expectation is zero.

By gathering terms, we get that
$$\Cov[\nabla_w f(w, z)] =  4 \E_x [\langle w^{\ast}-w, x \rangle^2 (xx^T)] +
4 \Var[e] \Sigma - 4 \Sigma (w^{\ast}-w) (w^{\ast}-w)^T \Sigma \;.$$
This completes the proof.
\end{proof}

Given the above claim, we can bound from above the spectral norm of the covariance matrix
of the gradients as follows: Specifically, for a unit vector $v$,
the quantity $v^T \Cov[\nabla_w f(w, z)] v$ is bounded from above by a constant times the following
quantities:
\begin{itemize}
\item The first term is $v^T \E_x [\langle w^{\ast}-w, x \rangle^2 (xx^T)] v =
\E_x [\langle w^{\ast}-w, x \rangle^2  \cdot \langle v, x \rangle^2)]$.  By Cauchy-Schwarz and our 4th moment bound,
this is at most $C\sigma^4 \|w^{\ast}-w\|_2^2$, where $\Sigma \preceq \sigma^2 I$.
\item The second term is at most the upper bound of the variance of the noise $\xi$ times $\sigma^2$.
\item The third term is at most $v^T \Sigma (w^{\ast}-w) (w^{\ast}-w)^T \Sigma v$, which by our
bounded covariance assumption is at most $\sigma^4  \|w^{\ast}-w\|_2^2.$
\end{itemize}
This gives the parameters in the meta-theorem.

\subsubsection{Proof of Lemma~\ref{lem:linreg-sample-complexity}}
\label{sec:linreg-sample-complexity}

Let $S$ be the set of uncorrupted samples and $I$ be the subset of $S$ 
with $\|X\|_2 \leq 2\sqrt{d}/\eps^{1/4}$. We will take $\goodset$ to be the subset of $I$ that are not corrupted.

Firstly, we show that with probability at least $39/40$, at most an $\eps/2$-fraction of points in 
$S$ have $\|X\|_2 > 2\sqrt{d}/\eps^{1/4}$, and so $|\goodset| \geq (1-\eps)|S|$. 
Note that $\E_D[\|X\|_2^4]= \E_D[(\sum_{j=1}^d X_j^2)^2] \leq \sum_{j=1}^d \sum_{k=1}^d \sqrt{\E_D[X_j^2]\E[X_k^2]} \leq C d^2$, 
since $\E_{D_x}[X X^T] \preceq I$. Thus, by Markov's inequality, 
$\Pr_D[\|X\|_2 > 2\sqrt{d}(C/\eps)^{1/4}]=\Pr_D[\|X\|_2^4 > 16d^2/\eps] \leq \eps/16$. 
By a Chernoff bound, since $n \geq 10\eps^2$ this probability is at most $\eps/2$ for the uncorrupted samples with probability at least $39/40$. 

Next, we show that (\ref{eq:norm-bound}) holds with probability at least $39/40$. 
To do this, we will apply Lemma \ref{lem:lin-reg-cov-bound} to $\goodset$. 
Since $S$ consists of independent samples, 
the variance over the randomness of $S$ of $|S|\E_S[e^2]$ is at most $|S|\xi$. 
By Chebyshev's inequality, except with probability $1/99$, 
we have that  $\E_S[e^2] \leq 99\xi$ and since $\goodset \subset S$,  
$\E_{\goodset}[e^2] \leq |S|\E_S[e^2]/|I| \leq 100 \xi$. 
This is condition (i) of Lemma \ref{lem:lin-reg-cov-bound}.

We note that $I$ consists of $\Omega(d^5/\eps^2)$ independent samples from $D$ 
conditioned on $\|X\|_2 < 2\sqrt{d}/\eps^{1/4}$, a distribution that we will call $D'$.
Since the VC-dimension of all halfspaces in $\R^d$ is $d+1$, 
by the VC inequality, we have that, except with probability $1/80$,  
for any unit vector $v$ and $T \in \R$ that $|\Pr_I[v \cdot X > T] - \Pr_{D'}[v \cdot X > T]| \leq \eps/d^2$.
Note that for unit vector $v$ and positive integer $m$, $\E[(v.X)^m]=\int_0^\infty m(v \cdot X)^{m-1} \Pr[v \cdot X > T] dT$.
Thus we have that
\begin{align*}
\E_I[(v.X)^m] & = \int_0^\infty m(v \cdot X)^{m-1} \Pr_I[v \cdot X > T] dT \\
& \leq  \int_0^{2d^{1/2}(C/\eps)^{1/4}} m(v \cdot X)^{m-1} (\Pr_{D'}[v \cdot X > T] + \eps/d^2) dT \\
& = \E_{D'}[(v.X)^m] + (2d^{1/2}(C/\eps)^{1/4})^m (\eps/d^2) \\
& \leq (1+\eps)\E_D[(v.X)^m] + 2^m C^{m/4} (\eps/d^2)^{1-m/4} \;.
\end{align*}
Applying this for $m=2$ gives $\E_I[X X^T] \preceq (1 + \eps + 4\sqrt{C}\eps/d^2) I \preceq 2 I$ 
and with $m=4$ gives $ \E_I[(v.X)^4] \leq (1+\eps)C + 16C$.
Similar bounds apply to $\goodset$, with an additional $1+\eps$ factor.

Thus, with probability at least $39/40$, $\goodset$ satisfies the conditions of 
Lemma \ref{lem:lin-reg-cov-bound} with $\xi := 100 \xi$, $\sigma^2 := 2$ and $C := 5C$. 
Hence, it satisfies (\ref{eq:norm-bound}) with $\sigma_0=20\sqrt{\xi}$ and $\sigma_1=18\sqrt{C+1}$.

For (\ref{eq:mean-everywhere}), note that $\nabla_w f_i(w)=(w \cdot x_i - y_i)x_i =((w - \wstar ) \cdot x_i)x_i - e_i x_i$. 
We will separately bound $\|\E_{\goodset}[((w - \wstar ) \cdot X)X] - \E_D[((w - \wstar ) \cdot X)X]\|_2$ and $\|\E_{\goodset}[eX]-\E_D[eX]\|_2$.

We will repeatedly make use of the following, which bounds how much removing points or probability mass affects an expectation in terms of its variance:
\begin{claim} \label{clm:expect-remove-points}
For a mixture of distributions $P=(1-\delta)Q + \delta R$ for distributions $P,Q,R$ and a real valued function $f$, we have that $|\E_{X \sim P}[f(X)]-\E_{X \sim Q}[f(X)]| \leq 2\sqrt{\delta \E_{X \sim P} [f(X)^2]}/(1-\delta)$\end{claim}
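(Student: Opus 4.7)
The plan is to proceed by a direct algebraic manipulation followed by Cauchy--Schwarz. The mixture relation $P=(1-\delta)Q+\delta R$ gives the linearity-of-expectation identity $\E_P[f]=(1-\delta)\E_Q[f]+\delta\E_R[f]$. Solving this for $\E_Q[f]$ yields $\E_Q[f]=(\E_P[f]-\delta\E_R[f])/(1-\delta)$, which after rearranging produces
\begin{equation*}
\E_P[f]-\E_Q[f]=\frac{\delta}{1-\delta}\bigl(\E_R[f]-\E_P[f]\bigr).
\end{equation*}
So the quantity we need to bound has been reduced to controlling $|\E_R[f]-\E_P[f]|$, with the prefactor $\delta/(1-\delta)$ already matching the denominator in the claim.

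Next I would bound each of $|\E_R[f]|$ and $|\E_P[f]|$ via Cauchy--Schwarz. For the $P$-term, $|\E_P[f]|\leq\sqrt{\E_P[f^2]}$. For the $R$-term, I would use the mixture relation in the other direction: since $P\geq\delta R$ pointwise (as measures), we have $\E_R[f^2]\leq\E_P[f^2]/\delta$, and then Cauchy--Schwarz gives $|\E_R[f]|\leq\sqrt{\E_R[f^2]}\leq\sqrt{\E_P[f^2]/\delta}$. Combining via the triangle inequality,
\begin{equation*}
|\E_R[f]-\E_P[f]|\;\leq\;\sqrt{\E_P[f^2]}\Bigl(1+\tfrac{1}{\sqrt{\delta}}\Bigr)\;\leq\;\frac{2}{\sqrt{\delta}}\sqrt{\E_P[f^2]},
\end{equation*}
where the last step uses $\delta\leq 1$ so that $1\leq 1/\sqrt\delta$. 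Plugging this into the rearranged identity from the previous paragraph gives
\begin{equation*}
|\E_P[f]-\E_Q[f]|\;\leq\;\frac{\delta}{1-\delta}\cdot\frac{2\sqrt{\E_P[f^2]}}{\sqrt{\delta}}\;=\;\frac{2\sqrt{\delta\,\E_P[f^2]}}{1-\delta},
\end{equation*}
which is exactly the claimed inequality.

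There is no real obstacle here; this is a short bookkeeping lemma whose only subtlety is recognizing that the second moment bound $\E_R[f^2]\leq\E_P[f^2]/\delta$ follows immediately from the mixture decomposition (it does not require any independence or distributional assumption). The factor of $2$ in the claim is the convenient slack that absorbs the $|\E_P[f]|$ term; a marginally tighter bound with $(1+1/\sqrt\delta)$ in place of $2/\sqrt\delta$ is available but is not needed for the application to bounding $\|\E_{\goodset}[\,\cdot\,]-\E_D[\,\cdot\,]\|_2$ in the proof of Lemma~\ref{lem:linreg-sample-complexity}.
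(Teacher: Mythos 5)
Your proof is correct and is essentially the same argument as the paper's: both derive $\E_R[f^2]\leq\E_P[f^2]/\delta$ from the mixture decomposition, apply Cauchy--Schwarz to bound $|\E_R[f]|$ and $|\E_P[f]|$, and combine via the triangle inequality with the slack $\delta\leq\sqrt\delta$ absorbing the constant. The only cosmetic difference is that the paper routes through the intermediate quantity $\E_P[f]/(1-\delta)$ rather than first rearranging to the identity $\E_P[f]-\E_Q[f]=\tfrac{\delta}{1-\delta}(\E_R[f]-\E_P[f])$ as you do, but the two bookkeeping choices lead to the same two terms.
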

\begin{proof} By Cauchy-Schwarz $|\E_{X \sim R} [f(X)]| \leq \sqrt{\E_{X \sim R} [f(X)^2]} \leq \sqrt{\E_{X \sim P} [f(X)^2]/\delta}$. Since $\E_{X \sim P}[f(X)]=(1-\delta)\E_{X \sim Q}[f(X)] + \delta \E_{X \sim R} [f(X)]$, this implies that $|\E_{X \sim P}[f(X)]/(1-\delta) -\E_{X \sim Q}[f(X)]| \leq \sqrt{\delta \E_{X \sim P} [f(X)^2]}/(1-\delta)$. However $|\E_{X \sim P}[f(X)]/(1-\delta) - \E_{X \sim P}[f(X)]| = (\delta/(1-\delta)) |\E_{X \sim P}[f(X)]| \leq \sqrt{\delta \E_{X \sim P} [f(X)^2]}/(1-\delta)$ and the triangle inequality gives the result.
\end{proof}
 We can apply this to $P=I$ and $Q=\goodset$ with $\delta=\eps/2$ 
 and also to $P=D$ and $Q=D'$ with $\delta=\eps/16$, with error $2\sqrt{\delta}/(1-\delta) \leq 2\sqrt{\eps}$ in either case.

For the first of term we wanted to bound, we have $\|\E_{\goodset}[((w - \wstar ) \cdot X)X] - \E_D[((w - \wstar ) \cdot X)X]\|_2 = \|(w-\wstar)^T \left(\E_{\goodset}[X X^T] - \E_D[X X^T]\right)\|_2 \leq \| w - \wstar\|_2 \|\E_{\goodset}[X X^T] - \E_D[X X^T]\|_2$. For any unit vector $v$, the VC dimension argument above gave that
$|\E_I[(v \cdot X)^2] - \E_{D'}[(v \cdot X)^2)]| \leq 4\sqrt{C}\eps/d^2$ and Claim \ref{clm:expect-remove-points} both gives that $|\E_I[(v \cdot X)^2] - \E_{\goodset}[(v \cdot X)^2)]| \leq 2 \sqrt{\eps \E_I[(v.X)^4]} \leq 10 \sqrt{C \eps}$ and that $|\E_D[(v \cdot X)^2] - \E_{D'}[(v \cdot X)^2)]| \leq 2 \sqrt{\eps \E_D[(v.X)^4]} \leq 2 \sqrt{C \eps}$. By the triangle inequality, we have that  $|\E_D[(v \cdot X)^2] - \E_{\goodset}[(v \cdot X)^2)]| \leq 16\sqrt{C}\eps$. Since this holds for all unit $v$ and the matrices involved are symmetric, we have that $ \|\E_{\goodset}[X X^T] - \E_D[X X^T]\|_2 \leq 16\sqrt{C\eps}$. The overall first term is bounded by $\|\E_{\goodset}[((w - \wstar ) \cdot X)X] - \E_D[((w - \wstar ) \cdot X)X]\|_2 \leq 16\sqrt{C\eps} \| w - \wstar\|_2$.

Now we want to bound the second term, $\|\E_{\goodset}[eX]-\E_D[eX]\|_2$. Note that $\E_D[eX]=\E_D[e]\E_D[X]=0$. So we need to bound $\E_{\goodset}[eX]$.
 First we bound the expectation and variance on $D'$ using Claim \ref{clm:expect-remove-points}. It yields that, for any unit vector $v$, $|\E_{D'}[e(v \cdot X)]| \leq 2 \sqrt{\eps \E_D[e^2(v \cdot X)^2]} \leq 2\sqrt{\eps \xi}$.

 Next we bound the expectation on $I$. Since $I$ consists of independent samples from $D'$, the covariance matrix over the randomness on $I$ of $|I|\E_I[eX-\E_{D'}[eX]]$ is $|I|\E_{D'}[(eX-\E_{D'}[eX])(eX-\E_{D'}[eX])^T] \leq |I|\E_{D'}[X X^T] \leq |I|(1+\eps)I$ and its expectation is $0$. Thus the expectation over the randomness of $I$ of $(|I|^2 \|\E_I[eX]-\E_{D'}[eX]\|_2)^2$ is $\mathrm{Tr}(|I|\E_{D'}[(eX-\E_{D'}[eX])(eX-\E_{D'}[eX])^T]) \leq |I|(1+\eps+4\xi\eps|I|)d$. By Markov's inequality, except with probability $1/40$, $\Pr[\|\E_I[eX]\|_2 \geq 2\sqrt{\xi \eps} + \eps] \leq d/|I|\eps^2$. Since $|I| \geq 40 d/\eps^2$. This happens with probability at least $1/40$.

 Next we bound the expectation on $\goodset$ which follows by a slight variation of Claim \ref{clm:expect-remove-points}.  Let $J=I-\goodset$. Then, for any $v$, $\E_J[e(v \cdot X)] \leq \sqrt{\E_J[e^2]\E_J[(v \cdot X)^2]} \leq \sqrt{\E_S[e^2]\E_I[(v \cdot X)^2]}|J|/\sqrt{|S||I|} \leq \sqrt{100\xi (1 + \eps + 4\sqrt{C}\eps/d^2)} |J|/\sqrt{|S||I|} \leq 20 |J|\sqrt{\xi/|S||I|}$ by bounds we obtained earlier. Now $\|\E_{\goodset}[eX]\|_2 = \|(|I|/|\goodset|)\E_{I}[eX] - (|J|/|\goodset|)\E_J[e X] \|_2 \leq 20\sqrt{\xi\eps}+\eps + (1+\eps)\sqrt{\xi\eps/16} \leq 30 \sqrt{\xi\eps} + \eps$.

 We can thus take $\sigma_0 = 30\sqrt{\xi}+\sqrt{\eps}$ and $\sigma_1=18\sqrt{C+1} \geq 16\sqrt{C}$ to get (\ref{eq:mean-everywhere}).

To get both (\ref{eq:mean-everywhere}) and (\ref{eq:norm-bound}) hold with $\sigma_0=30\sqrt{\xi}+\sqrt{\eps}$ and $\sigma_1=18\sqrt{C+1})$.
This happens with probability at least $9/10$ by a union bound on the probabilistic assumptions above.

\subsection{Support Vector Machines}
\label{sec:app-svm}
In this section, we demonstrate how our results apply to learning support vector machines (i.e., halfspaces under hinge loss).
In particular, we describe how SVMs fit into the GLM framework described in Section~\ref{sec:glms}.

We are given pairs $(X_i, Y_i) \in \R^{d} \times \{\pm 1\}$ for $i \in [n]$, which are drawn from some distribution $D_{xy}$.
Let $L(w,(x,y))=\max\{0, 1 - y(w \cdot x) \}$, and $f_i(w) = L(w, (x_i, y_i))$.
The goal is to find a $\what$ approximately minimizing the objective function
\[
\Efunc(w) = \E_{(X, Y) \sim D_{xy}} [L(w, (X, Y))].
\]

One technical point is that $f_i$ does not have a gradient everywhere -- instead,  we will be concerned with the sub-gradients of the $f_i$'s.
All our results which operate on the gradients also work for sub-gradients.
To be precise, we will take the sub-gradient to be $0$ when the gradient is undefined:
\begin{definition}
Let $\nabla f_i$ be the \emph{sub-gradient} of $f_i(w)$ with respect to $w$, where $\nabla f_i = -y_ix_i$ if $y_i (w \cdot x_i) < 1$, and $0$ otherwise.
\end{definition}

To get a bound on the error of hinge loss, we will need to assume the marginal distribution $D_x$ is anti-concentrated.
\begin{definition}
A distribution is \emph{$\delta$-anticoncentrated} if at most
an $O(\delta)$-fraction of its probability mass is within Euclidean distance
$\delta$ of any hyperplane.
\end{definition}

We work with the following assumptions:
\begin{assumption}
\label{ass:svm}
Given the model for SVMs as described above, assume the following conditions for the marginal distribution $D_x$:
\begin{itemize}
\item $\E_{X \sim D_x}[XX^T] \preceq I $;
\item $D_x$ is $\eps^{1/4}$-anticoncentrated.
\end{itemize}
\end{assumption}

Our main result on SVMs is the following:

\begin{theorem} \label{thm:result-svm}
Let $\eps > 0$, and let $D_{xy}$ be a distribution over pairs $(X,Y)$, where the marginal distribution $D_x$ satisfies the conditions of Assumption~\ref{ass:svm}.
Then there exists an algorithm that with probability $9/10$, given $O(d\log(d/\eps)/\eps)$ $\eps$-noisy samples from $D_{xy}$, returns a $\what$ such that for any $w^{\ast}$, 
$$\E_{(X,Y) \sim D_{xy}} [ L(\what,(X,Y)) ] \leq \E_{(X,Y) \sim D_{xy}}[ L(w^{\ast},(X,Y)) ] + O(\eps^{1/4}).$$
\end{theorem}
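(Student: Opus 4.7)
The plan is to apply the GLM framework of Section~\ref{sec:glms} on a bounded domain and then patch the resulting guarantee into a bound against arbitrary $w^*$ using anti-concentration. First, observe that hinge loss fits Definition~\ref{def:glm} with convex link functions $\sigma_Y(t) = \max\{0, 1-Yt\}$, which satisfy $|\sigma_Y(0)| = 1$ and $|\sigma_Y'(t)| \leq 1$ (with sub-gradients at the kink, compatible with the appendix analysis). I will run \sever{} on the domain $\dom = B(0, r)$ with $r \eqdef \eps^{-1/4}$.

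First, the hypothesis $\E_{X \sim D_x}[XX^T] \preceq I$ from Assumption~\ref{ass:svm} together with the link-function bounds lets me invoke Proposition~\ref{prop:sample-GLM}: for the stated $n = \Omega(d \log(d/\eps)/\eps)$ samples, Assumption~\ref{ass:one-good-set-glm} holds on $\dom$ with $\sigma_0 = 2$, $\sigma_1 = 0$, and $\sigma_2 = 1 + r = O(\eps^{-1/4})$ with probability at least $9/10$. Feeding this into the convex case of Theorem~\ref{thm:glms-sever} (using a base learner of accuracy $\gamma = O(\sqrt{\eps})$, which is standard for a convex program) yields a $\what \in \dom$ with
\[
\Ef(\what) - \min_{w \in \dom} \Ef(w) \;\leq\; r\bigl(\gamma + O(\sigma_0\sqrt{\eps})\bigr) + O(\sigma_2\sqrt{\eps}) \;=\; O(\eps^{1/4}).
\]

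The main obstacle, and the remaining step, is to bridge the gap between $\min_{w \in \dom} \Ef(w)$ and $\Ef(w^*)$ for an arbitrary comparator $w^* \in \R^d$ that may have large norm. If $\|w^*\|_2 \leq r$ this is immediate; otherwise set $w' \eqdef (r/\|w^*\|_2)\, w^* \in \dom$, and write $v \eqdef w^*/\|w^*\|_2$ and $s \eqdef Y(v \cdot X)$. A case analysis on the intervals $s \leq 0$, $s \in (0, 1/\|w^*\|_2)$, $s \in [1/\|w^*\|_2, 1/r)$, and $s \geq 1/r$ shows that $L(w',(X,Y)) - L(w^*,(X,Y)) \leq 0$ outside the event $\{0 < s < 1/r\}$, and is at most $1$ on it (in each nontrivial subcase the direct linear computation yields a bound of either $s(\|w^*\|_2 - r)/\|w^*\|_2$ or $1 - sr$, both bounded by $1$). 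Applying $\eps^{1/4}$-anti-concentration to the unit vector $v$ with $1/r = \eps^{1/4}$ yields
\[
\Pr\bigl[0 < s < 1/r\bigr] \;\leq\; \Pr\bigl[\,|v \cdot X| \leq \eps^{1/4}\,\bigr] \;\leq\; O(\eps^{1/4}),
\]
so $\E[L(w')] \leq \E[L(w^*)] + O(\eps^{1/4})$. Chaining this with the previous display gives $\Ef(\what) \leq \Ef(w^*) + O(\eps^{1/4})$, as claimed.

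The hard part is the last step: the GLM framework only controls error relative to the minimizer inside a bounded ball, whereas the theorem compares against an arbitrary $w^*$. The anti-concentration level $\eps^{1/4}$ is calibrated precisely so that the ``margin band'' of width $1/r = \eps^{1/4}$ around the decision hyperplane carries mass $O(\eps^{1/4})$, which is exactly what is needed to absorb the truncation cost into the $O(\eps^{1/4})$ target error; the choice $r = \eps^{-1/4}$ balances the $r\sqrt{\eps}$ optimization error against this $1/r$ truncation error.
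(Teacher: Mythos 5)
Your proposal follows essentially the same route as the paper: invoke Proposition~\ref{prop:sample-GLM} on the ball of radius $r=\eps^{-1/4}$ to get Assumption~\ref{ass:one-good-set-glm} with $\sigma_0=2$, $\sigma_1=0$, $\sigma_2=1+r$, feed this into Theorem~\ref{thm:glms-sever}, and separately use $\eps^{1/4}$-anti-concentration to show that replacing an arbitrary $w^*$ by its radial rescaling into $\dom$ costs only $O(\eps^{1/4})$. Your case split on $s=Y(v\cdot X)$ is just a reparametrization of the paper's split on $|w'\cdot x|\lessgtr 1$ (Lemma~\ref{lem:antigood-svm}) and yields the same conclusion, so this is the same argument in slightly different clothing.
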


Our approach will be to fit this problem into the GLM framework developed in Section~\ref{sec:glms}.
First, we will restrict our search over $w$ to $\dom$, a ball of radius $r=\eps^{-1/4}$.
As we argue in Lemma~\ref{lem:antigood-svm}, this restriction comes at a cost of at most $O(\eps^{1/4})$ in our algorithm's loss.
With this restriction, we will argue that the problem satisfies the conditions of Proposition~\ref{prop:sample-GLM}.
This allows us to argue that, with a polynomial number of samples, we can obtain a set of $f_i$'s satisfying the conditions of Assumption~\ref{ass:one-good-set-glm}.
This will allow us to apply Theorem~\ref{thm:glms-sever}, concluding the proof.

We start by showing that, due to anticoncentration of $D$, there is a $w' \in \dom$ with loss close to $\wstar$:
\begin{lemma} \label{lem:antigood-svm}
Let $w'$ be a rescaling of $\wstar$, such that $\|w'\|_2 \leq \eps^{-1/4}$ (i.e. $w' = \min\{1, \eps^{-1/4}/\|\wstar\|_2\} \wstar$). Then $\E_{(X,Y) \sim D_{xy}} [ L(w',(X,Y)) ] \leq \E_{(X,Y) \sim D_{xy}}[ L(w^{\ast},(X,Y)) ] + O(\eps^{1/4})$.
\end{lemma}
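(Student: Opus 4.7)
The plan is to split into two cases based on $\|w^*\|_2$ and then leverage anticoncentration of $D_x$ in the nontrivial case.

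If $\|w^*\|_2 \leq \eps^{-1/4}$, then $w' = w^*$ and the inequality is immediate. Otherwise, $w' = \alpha w^*$ with $\alpha = \eps^{-1/4}/\|w^*\|_2 \in (0,1)$. Introduce the shorthand $m = y(w^* \cdot X)$ so that the margin under $w'$ is $\alpha m$. Then $L(w^*,(X,Y)) = \max\{0, 1-m\}$ and $L(w',(X,Y)) = \max\{0, 1-\alpha m\}$.

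The key step is a pointwise comparison via case analysis on the sign and size of $m$. If $m \geq 1/\alpha$, both losses vanish and the difference is $0$. If $m < 0$, then $(1-\alpha m) - (1-m) = (1-\alpha)m \leq 0$, so the difference is nonpositive. The only regime in which $L(w',(X,Y)) > L(w^*,(X,Y))$ is $0 \leq m < 1/\alpha$, and in that range a quick computation in the two sub-cases $m \in [0,1)$ and $m \in [1, 1/\alpha)$ shows that the difference is at most $1 - \alpha < 1$. Crucially, this positive-difference regime is contained in the event $\{|w^* \cdot X| < 1/\alpha\} = \{|w^* \cdot X| < \|w^*\|_2 \eps^{1/4}\}$, i.e., that $X$ lies within Euclidean distance $\eps^{1/4}$ of the hyperplane $\{x : w^* \cdot x = 0\}$.

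Now I apply the $\eps^{1/4}$-anticoncentration of $D_x$ from Assumption~\ref{ass:svm}, which gives
\[
\Pr_{X \sim D_x}\bigl[\,|w^* \cdot X| < \|w^*\|_2 \eps^{1/4}\,\bigr] = O(\eps^{1/4}).
\]
Combining the pointwise bound (difference $\leq 1$ on the bad event, $\leq 0$ elsewhere) with this probability estimate yields
\[
\E_{(X,Y) \sim D_{xy}}[L(w',(X,Y)) - L(w^*,(X,Y))] \leq 1 \cdot O(\eps^{1/4}) = O(\eps^{1/4}),
\]
which is the claimed bound. There is no real obstacle here; the only subtlety is organizing the case analysis so that one sees the ``bad'' event is exactly a thin slab around the hyperplane defined by $w^*$, at which point anticoncentration closes the argument.
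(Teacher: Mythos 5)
Your proof is correct and follows essentially the same route as the paper: both arguments reduce to a pointwise case analysis on the margin, observe that the loss under $w'$ can only exceed the loss under $w^*$ on a thin slab of width $\eps^{1/4}$ around the common hyperplane of $w'$ and $w^*$, and then invoke the anticoncentration assumption to bound the probability of that slab. The only cosmetic differences are that you parametrize by $m = y(w^*\cdot X)$ rather than by $y(w'\cdot x)$ and obtain a slightly sharper pointwise bound of $1-\alpha<1$ where the paper settles for $2$; this does not change the $O(\eps^{1/4})$ conclusion.
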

\begin{proof}
If $w'=w^{\ast}$, then $\E_{(X,Y) \sim D_{xy}} [ L(w',(X,Y)) ] = \E_{(X,Y) \sim D_{xy}}[ L(w^{\ast},(X,Y)) ]$.

Otherwise, we break into case analysis, based on the value of $(x,y)$:
\begin{itemize}
\item $|w' \cdot x| > 1$: If $y(w' \cdot x) > 1$, then $L(w',(x,y)) = L(\wstar, (x,y))= 0$. 
If $y(w' \cdot x) < -1$, then $L(w',(x,y)) = 1 - y(w' \cdot x) \leq 1 -y (\wstar \cdot x) = L(\wstar, (x,y))$.
Both cases use the fact that $\|w'\|_2 < \|w^{\ast}\|_2$.
\item $|w' \cdot x| \leq 1$: In this case, we have that $L(w', (x,y)) \leq 2$.
Since $L(\wstar, (x,y)) \geq 0$, we have that $L(w', (x,y)) \leq L(\wstar, (x,y)) + 2$.
\end{itemize}
Note that if $|w' \cdot x| \leq 1$, then $x$ is within $1/\|w'\|_2=\eps^{1/4}$ of the hyperplane defined by the normal vector $w'$.
Since $D_x$ is $\eps^{1/4}$-anticoncentrated, we have that $\Pr_{X \sim D_x}[|w' \cdot X| \leq 1] \leq \eps^{1/4}$. 
Thus, we have that $\E_{(X,Y) \sim D_{xy}}[L(w',(X,Y))] \leq \E_{(X,Y) \sim D_{xy}}[L(\wstar,(X,Y)) + 2 \cdot \mathbbm{1}(|w' \cdot X| \leq 1)] \leq \E_{(X,Y) \sim D_{xy}}[L(w^{\ast},(X,Y))] + O(\eps^{1/4})$.
\end{proof}

\begin{proof}[Proof of Theorem~\ref{thm:result-svm}]
We first show that this problem fits into the GLM framework, in particular, satisfying the conditions of Proposition~\ref{prop:sample-GLM}.
The link function is $\sigma_y(t) = \max\{0, 1 - y t \}$, giving us the loss function $L(w,(x,y)) = \sigma_y(w \cdot x)$.
We let $\dom$ be the set $\|w\|_2 \leq \eps^{-1/4}$, giving us the parameter $r = \eps^{-1/4}$. 
Condition 1 is satisfied by Assumption~\ref{ass:svm}.
For $y \in \{-1,1\}$, $\sigma'_y(t)=0$ for $yt \geq 1$ and $\sigma'_y(t)= -y$ for $yt < 1$. 
Thus we have that $|\sigma'_1(t)| \leq 1$ for all $t$ and $y$, satisfying Condition 2.
Finally, one can observe that $\sigma_y(0)=1$ for all $y$, satisfying Condition 3. 
Thus we can apply Proposition \ref{prop:sample-GLM}: if we take $O(d\log(dr/\eps)/\eps)$ $\eps$-corrupted samples, then they satisfy  Assumption \ref{ass:one-good-set-glm} on $\mathcal{H}$ with $\sigma_0=2$, $\sigma_1=0$ and $\sigma_2=1+\eps^{-1/4}$, with probability $9/10$.

Now we can apply the algorithm of Theorem~\ref{thm:glms-sever}.
Since the loss is convex, we get a vector $\what$ with
$\Ef(\what) - \Ef({\wstar}') = O((\sigma_0\radius + \sigma_1\radius^2+\sigma_2)\sqrt{\badfrac})
=O((2 \eps^{-1/4} +\eps^{-1/4}) \sqrt{\eps}) = O(\eps^{1/4})$ where ${\wstar}'$ is the minimizer of $\Ef$ on $\dom$.

We thus have that $\Ef(\hat w) \leq \Ef({\wstar}') + O(\eps^{1/4}) \leq \Ef(w')+O(\eps^{1/4})  \leq \Ef(\wstar)+O(\eps^{1/4})$.
The second inequality follows because ${\wstar}'$ is the minimizer of $\Ef$ on $\dom$, and the third inequality follows from Lemma~\ref{lem:antigood-svm}.
\end{proof}

\subsection{Logistic Regression}
\label{sec:app-logreg}
In this section, we demonstrate how our results apply to logistic regression.
In particular, we describe how logistic regression fits into the GLM framework described in Section~\ref{sec:glms}.

We are given pairs $(X_i, Y_i) \in \R^{d} \times \{\pm 1\}$ for $i \in [n]$, which are drawn from some distribution $D_{xy}$.
Let $\phi(t)=\frac{1}{1+\exp(-t)}$.
Logistic regression is the model where $y = 1$ with probability $\phi(w \cdot x)$, and $y = -1$ with probability $\phi(-w \cdot x)$.
We define the loss function to be the log-likelihood of $y$ given $x$.
More precisely, we let $f_i(w,(x_i,y_i)) = L(w,(x_i,y_i))$, which is defined as follows:
$$L(w,(x,y))= \frac{1+y}{2}\ln \left(\frac{1}{\phi(w \cdot x)}\right) + \frac{1-y}{2}\ln \left(\frac{1}{\phi(-w \cdot x)}\right)=\frac{1}{2}\left(-\ln\left(\phi(w \cdot x)\phi(-w \cdot x)\right) -y(w \cdot x)\right).$$
The gradient of this function is $\nabla L(w, (x,y)) = \frac12(\phi(w \cdot x)-\phi(-w \cdot x)-y)x$.
The goal is to find a $\what$ approximately minimizing the objective function
\[
\Efunc(w) = \E_{(X, Y) \sim D_{xy}} [L(w, (X, Y))].
\]

We work with the following assumptions:
\begin{assumption}
\label{ass:logreg}
Given the model for logistic regression as described above, assume the following conditions for the marginal distribution $D_x$:
\begin{itemize}
\item $\E_{X \sim D_x}[XX^T] \preceq I $;
\item $D_x$ is $\eps^{1/4}\sqrt{\log(1/\eps)}$-anticoncentrated.
\end{itemize}
\end{assumption}

We can get a similar result to that for hinge loss for logistic regression:
 \begin{theorem}\label{thm:logreg}
Let $\eps > 0$, and let $D_{xy}$ be a distribution over pairs $(X,Y)$, where the marginal distribution $D_x$ satisfies the conditions of Assumption~\ref{ass:logreg}.
Then there exists an algorithm that with probability $9/10$, given $O(d\log(d/\eps)/\eps)$ $\eps$-noisy samples from $D_{xy}$, returns a $\what$ such that for any $w^{\ast}$, 
$$\E_{(X,Y) \sim D_{xy}} [ L(\what,(X,Y)) ] \leq \E_{(X,Y) \sim D_{xy}}[ L(w^{\ast},(X,Y)) ] + O(\eps^{1/4}\sqrt{\log(1/\eps)}).$$
\end{theorem}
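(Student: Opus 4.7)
The strategy mirrors that of Theorem~\ref{thm:result-svm}: I would cast logistic regression as a GLM in the sense of Definition~\ref{def:glm}, restrict the search to the bounded domain $\dom = \{w : \|w\|_2 \leq r\}$ with $r = \eps^{-1/4}\sqrt{\log(1/\eps)}$, invoke Proposition~\ref{prop:sample-GLM} to verify Assumption~\ref{ass:one-good-set-glm}, and then apply Theorem~\ref{thm:glms-sever}. The particular choice of $r$ is the natural scale that balances the two competing sources of error in the argument.

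Casting logistic regression as a GLM is direct: the loss simplifies to $L(w,(x,y)) = \ln(1+e^{-y(w\cdot x)})$ for $y \in \{\pm 1\}$, corresponding to link functions $\sigma_y(t) = \ln(1+e^{-yt})$ which are convex with $|\sigma_y'(t)| = \phi(-yt) \leq 1$ and $|\sigma_y(0)| = \ln 2 \leq 1$. Combined with $\E_{X \sim D_x}[XX^T] \preceq I$, this verifies the three hypotheses of Proposition~\ref{prop:sample-GLM}, so for $n = \Omega(d\log(dr/\eps)/\eps) = \Omega(d\log(d/\eps)/\eps)$ samples (absorbing $\log r = O(\log(1/\eps))$ into the existing logarithm) we obtain Assumption~\ref{ass:one-good-set-glm} on $\dom$ with $\sigma_0 = 2$, $\sigma_1 = 0$, and $\sigma_2 = 1 + r$. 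Theorem~\ref{thm:glms-sever} then yields a $\what \in \dom$ satisfying $\Ef(\what) - \Ef(w_{\dom}) = O((\sigma_0 r + \sigma_2)\sqrt{\eps}) = O(r\sqrt{\eps}) = O(\eps^{1/4}\sqrt{\log(1/\eps)})$, where $w_{\dom}$ denotes the minimizer of $\Ef$ on $\dom$.

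The main obstacle, and the only place where the argument genuinely departs from the hinge-loss proof, is bounding $\Ef(w_{\dom}) - \Ef(\wstar)$ by $O(\eps^{1/4}\sqrt{\log(1/\eps)})$, i.e., the logistic analogue of Lemma~\ref{lem:antigood-svm}. Let $w' = \alpha \wstar$ with $\alpha = \min\{1, r/\|\wstar\|_2\}$ so that $w' \in \dom$; since $w_{\dom}$ is optimal on $\dom$ it suffices to bound $\Ef(w') - \Ef(\wstar)$. Because logistic loss is smooth and never vanishes on the correctly classified side, the hard-cutoff comparison used for hinge loss fails; instead I use a soft threshold $T = \tfrac{1}{4}\log(1/\eps)$ and split into cases based on $(x,y)$. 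When $y(\wstar \cdot x) \leq 0$ we have $y(w' \cdot x) = \alpha\, y(\wstar \cdot x) \geq y(\wstar \cdot x)$ since $\alpha \in [0,1]$, so by monotonicity of $s \mapsto \ln(1+e^{-s})$ the loss can only decrease: $L(w',(x,y)) \leq L(\wstar,(x,y))$. When $y(w' \cdot x) \geq T$, the pointwise loss $L(w',(x,y)) \leq \ln(1 + e^{-T}) \leq e^{-T} = \eps^{1/4}$ regardless of $L(\wstar,(x,y))$. The remaining case $0 < y(w' \cdot x) < T$ has pointwise gap at most $\ln 2$, but its probability is bounded by applying the $\eps^{1/4}\sqrt{\log(1/\eps)}$-anticoncentration of $D_x$ to the unit vector $w'/\|w'\|_2$, giving $\Pr[|w' \cdot X| < T] = O(T/r) = O(\eps^{1/4}\sqrt{\log(1/\eps)})$.

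Summing the three cases yields $\Ef(w') - \Ef(\wstar) = O(\eps^{1/4}\sqrt{\log(1/\eps)})$, and chaining with the GLM bound gives the claimed total error for $\what$. The delicate balancing is that the three error contributions $r\sqrt{\eps}$ (from Theorem~\ref{thm:glms-sever}), $e^{-T}$ (pointwise bound when $y(w'\cdot x) \geq T$), and $T/r$ (anticoncentration probability when $|w' \cdot x| < T$) must all simultaneously be $O(\eps^{1/4}\sqrt{\log(1/\eps)})$, and the choices $r = \eps^{-1/4}\sqrt{\log(1/\eps)}$ and $T = \tfrac{1}{4}\log(1/\eps)$ are essentially forced by this constraint; this also explains the specific anticoncentration strength $\eps^{1/4}\sqrt{\log(1/\eps)}$ in Assumption~\ref{ass:logreg}.
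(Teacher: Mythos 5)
Your proof is correct and follows the same overall scaffolding as the paper: write logistic loss as a GLM, restrict to the ball of radius $r = \eps^{-1/4}\sqrt{\log(1/\eps)}$, verify the three hypotheses of Proposition~\ref{prop:sample-GLM}, apply Theorem~\ref{thm:glms-sever}, and pay for the domain truncation via an analogue of Lemma~\ref{lem:antigood-svm}. The one place you genuinely diverge is in the proof of that truncation lemma (the paper's Lemma~\ref{lem:antigood-logit}). The paper first establishes the uniform pointwise estimate $L(w',(x,y)) \leq L(\wstar,(x,y)) + O(e^{-|w'\cdot x|})$ via the two-sided approximation $|t| \leq -\ln(\phi(t)\phi(-t)) \leq |t| + 3e^{-|t|}$ (its Claim~\ref{clm:logreg-clm}), and only afterwards splits on the magnitude $|w'\cdot x| \lessgtr T$. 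You instead split first on the sign of $y(w'\cdot x)$: when $y(w'\cdot x) \leq 0$ you invoke monotonicity of $s\mapsto\ln(1+e^{-s})$ together with $\alpha\in[0,1]$ to get $L(w')\leq L(\wstar)$ outright, and when it is positive you bound the gap by $L(w')$ alone, using $\ln(1+e^{-T})\leq e^{-T}$ in the large-margin regime and $\ln(1+e^{-s})\leq\ln 2$ together with anticoncentration in the small-margin regime. Your route is a bit more elementary — it dispenses with the logarithmic approximation claim entirely — at the cost of one extra case; both give the same $O(\eps^{1/4}\sqrt{\log(1/\eps)})$ bound with the same forced choice $T=\Theta(\log(1/\eps))$, $r=\eps^{-1/4}\sqrt{\log(1/\eps)}$. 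One small point worth making explicit (the paper also glosses over it here, though it handles it in the hinge-loss Lemma~\ref{lem:antigood-svm}): when $\|\wstar\|_2\leq r$ you have $w'=\wstar$ and the lemma is trivial; it is only when $\|w'\|_2=r$ exactly that the anticoncentration step applies with the stated scale $T/r$.
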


The approach is very similar to that of Theorem~\ref{thm:result-svm}, which we repeat here for clarity.
First, we will restrict our search over $w$ to $\dom$, a ball of radius $r=\eps^{-1/4}\sqrt{\log(1/\eps)}$.
As we argue in Lemma~\ref{lem:antigood-logit}, this restriction comes at a cost of at most $O(\eps^{1/4}\sqrt{\log(1/\eps)})$ in our algorithm's loss.
With this restriction, we will argue that the problem satisfies the conditions of Proposition~\ref{prop:sample-GLM}.
This allows us to argue that, with a polynomial number of samples, we can obtain a set of $f_i$'s satisfying the conditions of Assumption~\ref{ass:one-good-set-glm}.
This will allow us to apply Theorem~\ref{thm:glms-sever}, concluding the proof.

We start by showing that, due to anticoncentration of $D$, there is a $w' \in \dom$ with loss close to $\wstar$:
\begin{lemma} \label{lem:antigood-logit}
Let $w'$ be a rescaling of $\wstar$, such that $\|w'\|_2 \leq \eps^{-1/4} \sqrt{\ln(1/\eps)}$ (i.e. $w' = \min\{1, \eps^{-1/4}\sqrt{\ln(1/\eps)}/\|\wstar\|_2\} \wstar$). Then $\E_{(X,Y) \sim D_{xy}} [ L(w',(X,Y)) ] \leq \E_{(X,Y) \sim D_{xy}}[ L(w^{\ast},(X,Y)) ] + O(\eps^{1/4}\sqrt{\ln(1/\eps)})$.
\end{lemma}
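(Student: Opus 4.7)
The plan is to mimic the proof of Lemma~\ref{lem:antigood-svm}, adapting the case analysis to the smoothness of the logistic loss. First I would dispose of the trivial case: if $\|\wstar\|_2 \leq \eps^{-1/4}\sqrt{\ln(1/\eps)}$, then $w' = \wstar$ and the claim is immediate. Otherwise $w' = \alpha \wstar$ for some $\alpha \in (0,1)$ with $\|w'\|_2 = \eps^{-1/4}\sqrt{\ln(1/\eps)}$. I would rewrite the loss as $L(w,(x,y)) = \ln(1+\exp(-y(w\cdot x)))$ and introduce the shorthand $s^{\ast} = y(\wstar\cdot x)$ and $s' = y(w'\cdot x) = \alpha s^{\ast}$, so the pointwise excess loss becomes $\ln(1+\exp(-\alpha s^{\ast})) - \ln(1+\exp(-s^{\ast}))$.

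Next I would exploit the fact that $u \mapsto \ln(1+\exp(-u))$ is monotonically decreasing. For $s^{\ast} \leq 0$ one has $s' = \alpha s^{\ast} \geq s^{\ast}$ (since $\alpha \in (0,1]$), so $L(w',(x,y)) \leq L(\wstar,(x,y))$ and such samples contribute nothing to the expected excess loss. For $s^{\ast} > 0$ I would introduce a threshold $T > 0$ and split further. When $s' > T$, the pointwise bound $\ln(1+\exp(-s')) \leq \exp(-T)$ gives a uniform contribution of at most $\exp(-T)$. When $0 < s' \leq T$, the crude bound $L(w',(x,y)) \leq \ln 2$ combined with the fact that $\{|w'\cdot x| \leq T\}$ corresponds exactly to $x$ lying within Euclidean distance $T/\|w'\|_2$ of the hyperplane perpendicular to $w'$ lets me invoke the anticoncentration hypothesis of Assumption~\ref{ass:logreg}.

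The natural choice is $T = \ln(1/\eps)$, for which $T/\|w'\|_2 = \eps^{1/4}\sqrt{\ln(1/\eps)}$, matching the anticoncentration parameter exactly and yielding $\Pr[|w'\cdot X| \leq T] = O(\eps^{1/4}\sqrt{\ln(1/\eps)})$. Combining these bounds, the expected excess loss is at most $\exp(-T) + (\ln 2)\cdot \Pr[|w'\cdot X| \leq T] = O(\eps) + O(\eps^{1/4}\sqrt{\ln(1/\eps)}) = O(\eps^{1/4}\sqrt{\ln(1/\eps)})$, as claimed. I expect the main obstacle to be calibrating $T$ so that the exponential tail $\exp(-T)$ from the well-separated regime and the anticoncentrated mass near the decision boundary come in at the same order; this is exactly what forces the $\sqrt{\ln(1/\eps)}$ factor in both the anticoncentration hypothesis and the final error, and explains why the logistic-loss bound is worse than the hinge-loss bound of Lemma~\ref{lem:antigood-svm} by precisely this factor.
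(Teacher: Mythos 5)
Your proposal is correct, and it reaches the bound by a cleaner, more direct route than the paper. The paper first rewrites the loss in the symmetrized form $L(w,(x,y)) = \tfrac{1}{2}\bigl(-\ln(\phi(w\cdot x)\phi(-w\cdot x)) - y(w\cdot x)\bigr)$ and proves an auxiliary claim (Claim~\ref{clm:logreg-clm}) that $|t| \leq -\ln(\phi(t)\phi(-t)) \leq |t| + 3\exp(-|t|)$; this reduces the comparison between $L(w',\cdot)$ and $L(\wstar,\cdot)$ to a comparison between the hinge-like quantities $\tfrac{1}{2}(|w'\cdot x| - y(w'\cdot x))$ and $\tfrac{1}{2}(|\wstar\cdot x| - y(\wstar\cdot x))$, which is handled by a sign case analysis exactly as in Lemma~\ref{lem:antigood-svm}. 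The residual $O(\exp(-|w'\cdot x|))$ term is then bounded by anticoncentration near the hyperplane and is $O(\eps)$ away from it. Your version skips the auxiliary claim entirely: you observe that $u \mapsto \ln(1+\exp(-u))$ is decreasing, so shrinking $\wstar$ to $w'$ can only increase the loss on correctly-classified points ($y(\wstar\cdot x) > 0$), and on those points you bound the excess directly via the exponential tail (far from the hyperplane) or by the trivial bound $\ln 2$ plus anticoncentration (near it). The substance is the same --- both arguments rest on the fact that far from the hyperplane the logistic loss decays exponentially and near it anticoncentration limits the mass --- but your route avoids the symmetrized rewriting and the two-sided surrogate bound, which makes it a bit shorter and easier to verify. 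One small remark on your closing sentence: with $T = \ln(1/\eps)$ you get $\exp(-T) = \eps$, which is much smaller than the anticoncentration term $\eps^{1/4}\sqrt{\ln(1/\eps)}$, so the two contributions do not actually balance; rather, the anticoncentration requirement $T/\|w'\|_2 \leq \eps^{1/4}\sqrt{\ln(1/\eps)}$ pins down $T$, and the exponential tail is then negligible. That does not affect correctness.
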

\begin{proof}
We need the following claim:
\begin{claim}
\label{clm:logreg-clm}
$$|t| \leq -\ln(\phi(t)\phi(-t)) \leq |t| + 3 \exp(-|t|)$$
\end{claim}
\begin{proof}
Recalling that $\phi=1/(1+\exp(-t))$, we have that $-\ln(\phi(t)\phi(-t)) = \ln(\exp(t)+\exp(-t)+2)$.
Since $\exp(t)+\exp(-t)+2 \geq \exp(|t|)$, we have $|t| \leq -\ln(\phi(t)\phi(-t))$. 
On the other hand,
$\ln(\exp(t)+\exp(-t)+2) = |t|+\ln(1+2\exp(-|t|)+\exp(-2|t|)) \leq |t| + \ln(1+3\exp(-|t|)) \leq |t| + 3 \exp(-|t|)$.
\end{proof}

For any $x \in \R^d$, we have that:
\begin{align*}
-\ln(\phi(w' \cdot x)\phi(-w' \cdot x)) - y(w' \cdot x) -3\exp(-3|w' \cdot x|) 
&\leq |w' \cdot x| - y(w' \cdot x) \\
&\leq |\wstar \cdot x| - y(\wstar \cdot x) \\
&\leq -\ln(\phi(\wstar \cdot x)\phi(-\wstar \cdot x)) - y(\wstar \cdot x)
\end{align*}
The first and last inequality hold by Claim~\ref{clm:logreg-clm}.
For the second inequality, we do a case analysis on $y$.
When $y = \mathrm{sign}(w' \cdot x) = \mathrm{sign}(\wstar \cdot x)$, then both sides of the inequality are $0$.
When $y = -\mathrm{sign}(w' \cdot x) = -\mathrm{sign}(\wstar \cdot x)$, then the inequality becomes $2|w' \cdot x| \leq 2|\wstar \cdot x|$, which holds since $\|w'\|_2 \leq \|\wstar\|_2$.
We thus have that for any $y \in \{\pm 1\}$, $L(w',(x,y)) \leq L(\wstar,(x,y))+ \frac{3}{2}\exp(-3|w' \cdot x|)$.
If $|w' \cdot x| \leq \frac{1}{3}\ln(1/\eps)$, then  $L(w',(x,y)) \leq L(\wstar,(x,y))+ \frac{3}{2}$.
If  $|w '\cdot x| \geq \frac{1}{3}\ln(1/\eps)$, then $L(w',(x,y)) \leq L(\wstar,(x,y))+ \frac{3}{2}\eps$.
Since $\|w'\|_2 \leq \eps^{-1/4} \sqrt{\ln(1/\eps)}$ and $D_x$ is $\eps^{1/4}  \sqrt{\ln(1/\eps)}$-anticoncentrated, we have that $\Pr_{D_x}[|w' \cdot x| \leq \frac{1}{3}\ln(1/\eps)] \leq O(\eps^{1/4} \sqrt{\ln(1/\eps)})$.
Thus, $\E_{(X,Y) \sim D_{xy}} [ L(w',(X,Y)) ] \leq \E_{(X,Y) \sim D_{xy}}[ L(w^{\ast},(X,Y)) ] + O(\eps^{1/4}\sqrt{\ln(1/\eps)})$, as desired.
\end{proof}

With this in hand, we can conclude with the proof of Theorem~\ref{thm:logreg}.

\begin{proof}[Proof of Theorem~\ref{thm:logreg}]
We first show that this problem fits into the GLM framework, in particular, satisfying the conditions of Proposition~\ref{prop:sample-GLM}.
The link function is $\sigma_y(t) = \frac{1}{2}(-\ln(\phi(t)\phi(-t)) -yt)$, giving us the loss function $L(w,(x,y)) = \sigma_y(w \cdot x)$.
We let $\dom$ be the set $\|w\|_2 \leq \eps^{-1/4}\sqrt{\ln(1/\eps)}$, giving us the parameter $r = \eps^{-1/4}\sqrt{\ln(1/\eps)}$. 
Condition 1 is satisfied by Assumption~\ref{ass:logreg}.
For $y \in \{-1,1\}$, $\sigma'_y(t)=\frac{1}{2}(\phi(t)-\phi(-t)-y)$, which gives that $|\sigma'_y(t)| \leq 1$ for all $t$ and $y$, satisfying Condition 2.
Finally, $\sigma_y(0) = \ln 2 < 1$ for all $y$, satisfying Condition 3.
Thus we can apply Proposition \ref{prop:sample-GLM}: if we take $O(d\log(dr/\eps)/\eps)$ $\eps$-corrupted samples, then they satisfy  Assumption \ref{ass:one-good-set-glm} on $\mathcal{H}$ with $\sigma_0=2$, $\sigma_1=0$ and $\sigma_2=1+\eps^{-1/4}\sqrt{\ln(1/\eps)}$, with probability $9/10$.

Now we can apply the algorithm of Theorem~\ref{thm:glms-sever}.
Since the loss is convex, we get a vector $\what$ with
$\Ef(\what) - \Ef({\wstar}') = O((\sigma_0\radius + \sigma_1\radius^2+\sigma_2)\sqrt{\badfrac})
=O((2 \eps^{-1/4}\sqrt{\ln(1/\eps)} +\eps^{-1/4}\sqrt{\ln(1/\eps)}) \sqrt{\eps}) = O(\eps^{1/4}\sqrt{\ln(1/\eps)})$ where ${\wstar}'$ is the minimizer of $\Ef$ on $\dom$.

We thus have that $\Ef(\hat w) \leq \Ef({\wstar}') + O(\eps^{1/4}\sqrt{\ln(1/\eps)}) \leq \Ef(w')+O(\eps^{1/4}\sqrt{\ln(1/\eps)})  \leq \Ef(\wstar)+O(\eps^{1/4}\sqrt{\ln(1/\eps)})$.
The second inequality follows because ${\wstar}'$ is the minimizer of $\Ef$ on $\dom$, and the third inequality follows from Lemma~\ref{lem:antigood-logit}.
\end{proof}

\section{Additional Experimental Results}
\label{sec:experiments-app}
In this section, we provide additional plots of our experimental results, comparing with all baselines considered.

\begin{figure}[h!]
\centering
\begin{tikzpicture}

\begin{axis}[errplottriple,name=linreg_synth_app, align=center, title={Regression: Synthetic data}, xticklabel style={/pgf/number format/.cd, fixed, fixed zerofill, precision=2,/tikz/.cd}, legend columns= 4, legend style={anchor=north west, xshift=-0.15 \plotwidth, yshift=-0.8\plotheight}]
\addplot[teal] table[x=eps, y=err] {figures/linreg_synth/uncorrupted.txt};
\addplot[red] table[x=eps, y=err] {figures/linreg_synth/corrupted.txt};
\addplot[cyan] table[x=eps, y=err] {figures/linreg_synth/l2.txt};
\addplot[violet] table[x=eps, y=err] {figures/linreg_synth/loss.txt};
\addplot[magenta] table[x=eps, y=err] {figures/linreg_synth/gradient.txt};
\addplot[gray] table[x=eps, y=err] {figures/linreg_synth/ransac.txt};
\addplot[black] table[x=eps, y=err] {figures/linreg_synth/sever.txt};
\legend{uncorrupted, \noDef{}, \xCen{}, \Loss{}, \gCen{}, \ransac{}, \sever{}}
\end{axis}

\begin{axis}[errplottriple,name=linreg_qsar_app, align=center, at=(linreg_synth_app.north east),anchor=north west, xticklabel style={/pgf/number format/.cd, fixed, fixed zerofill, precision=2,/tikz/.cd}, xshift=\plotxspacing,ignore legend, title={Regression: Drug discovery data}, ymin = 1, ymax = 2]
\addplot[teal] table[x=eps, y=err] {figures/linreg_qsar/uncorrupted.txt};
\addplot[red] table[x=eps, y=err] {figures/linreg_qsar/corrupted.txt};
\addplot[cyan] table[x=eps, y=err] {figures/linreg_qsar/l2.txt};
\addplot[violet] table[x=eps, y=err] {figures/linreg_qsar/loss.txt};
\addplot[magenta] table[x=eps, y=err] {figures/linreg_qsar/gradient.txt};
\addplot[gray] table[x=eps, y=err] {figures/linreg_qsar/ransac.txt};
\addplot[black] table[x=eps, y=err] {figures/linreg_qsar/sever.txt};
\end{axis}

\begin{axis}[errplottriple,name=linreg_qsar_worst_app, align=center, at=(linreg_qsar_app.north east),anchor=north west, xticklabel style={/pgf/number format/.cd, fixed, fixed zerofill, precision=2,/tikz/.cd}, xshift=\plotxspacing,ignore legend, title={Regression: Drug discovery data, \\ attack targeted against \sever{}}, ymin = 1, ymax = 2]
\addplot[teal] table[x=eps, y=err] {figures/linreg_qsar_worst/uncorrupted.txt};
\addplot[red] table[x=eps, y=err] {figures/linreg_qsar_worst/corrupted.txt};
\addplot[cyan] table[x=eps, y=err] {figures/linreg_qsar_worst/l2.txt};
\addplot[violet] table[x=eps, y=err] {figures/linreg_qsar_worst/loss.txt};
\addplot[magenta] table[x=eps, y=err] {figures/linreg_qsar_worst/gradient.txt};
\addplot[gray] table[x=eps, y=err] {figures/linreg_qsar_worst/ransac.txt};
\addplot[black] table[x=eps, y=err] {figures/linreg_qsar_worst/sever.txt};
\end{axis}

\end{tikzpicture}
\caption{$\epsilon$ vs test error for baselines and \sever{} on synthetic data and the drug discovery dataset. The left and middle figures show that \sever{} continues to maintain statistical accuracy against our attacks which are able to defeat previous baselines. The right figure shows an attack with parameters chosen to increase the test error \sever{} on the drug discovery dataset as much as possible. Despite this, \sever{} still has relatively small test error.} 
\label{label:acc-vs-eps-linreg-app} 
\end{figure}
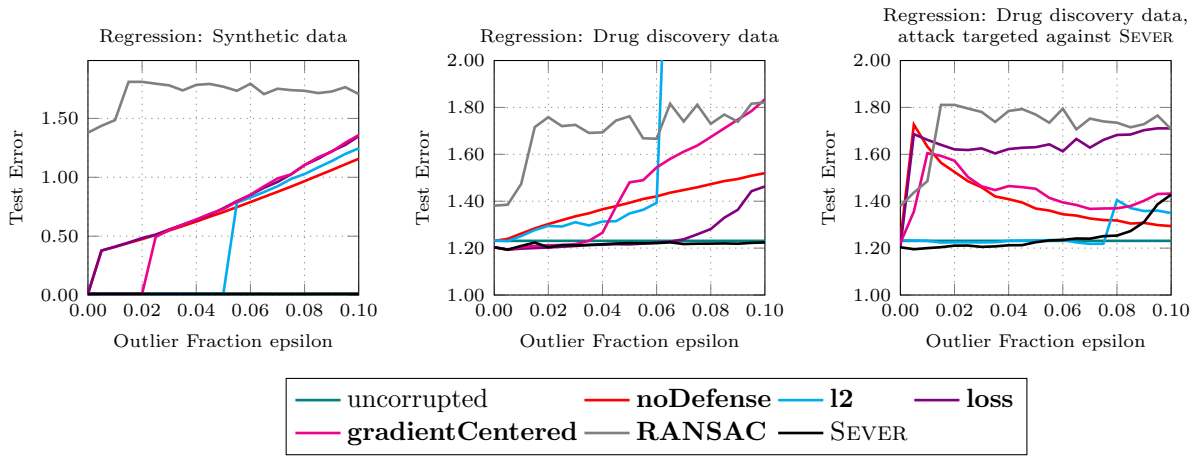

\begin{figure}[h!]
\centering
\begin{tikzpicture}

\begin{axis}[errplot,name=svm_synth_loss_app, legend style={anchor=north west, xshift=-0.7 \plotwidth, yshift=-1.2\plotheight}, legend columns=4, title={SVM: Strongest attacks against \Loss{} on synthetic data}]
\addplot[teal, mark=|] table[x=eps, y=err] {figures/svm_synth_loss/uncorrupted.txt};
\addplot[red, mark=|] table[x=eps, y=err] {figures/svm_synth_loss/corrupted.txt};
\addplot[cyan, mark=|] table[x=eps, y=err] {figures/svm_synth_loss/l2.txt};
\addplot[violet, mark=|] table[x=eps, y=err] {figures/svm_synth_loss/loss.txt};
\addplot[magenta, mark=|] table[x=eps, y=err] {figures/svm_synth_loss/gradient.txt};
\addplot[orange, mark=|] table[x=eps, y=err] {figures/svm_synth_loss/gradientCentered.txt};
\addplot[black, mark=|] table[x=eps, y=err] {figures/svm_synth_loss/sever.txt};
\legend{uncorrupted, \noDef{}, \xCen{}, \Loss{}, \gUnc{}, \gCen{}, \sever{}}
\end{axis}

\begin{axis}[errplot,name=svm_synth_sever_app, at=(svm_synth_loss.north east),anchor=north west, xshift=\plotxspacing,ignore legend, title={SVM: Strongest attacks against \sever{} on synthetic data}]
\addplot[teal, mark=|] table[x=eps, y=err] {figures/svm_synth_sever/uncorrupted.txt};
\addplot[red, mark=|] table[x=eps, y=err] {figures/svm_synth_sever/corrupted.txt};
\addplot[cyan, mark=|] table[x=eps, y=err] {figures/svm_synth_sever/l2.txt};
\addplot[violet, mark=|] table[x=eps, y=err] {figures/svm_synth_sever/loss.txt};
\addplot[magenta, mark=|] table[x=eps, y=err] {figures/svm_synth_sever/gradient.txt};
\addplot[orange, mark=|] table[x=eps, y=err] {figures/svm_synth_sever/gradientCentered.txt};
\addplot[black, mark=|] table[x=eps, y=err] {figures/svm_synth_sever/sever.txt};
\end{axis}

\end{tikzpicture}
\caption{$\epsilon$ vs test error for baselines and \sever{} on synthetic data. The left figure demonstrates that \sever{} is accurate when outliers manage to defeat previous baselines.
The right figure shows the result of attacks which increased the test error the most against \sever{}. Even in this case, \sever{} performs much better than the baselines.}
\label{fig:svm-synthetic-app}
\end{figure}

\begin{figure}[h!]
\centering
\begin{tikzpicture}

\begin{axis}[errplottriple,name=svm_enron_gradientCentered-app, align=center, title={SVM: Strongest attacks against \\ \gCen{} on Enron}, legend columns= 4, legend style={anchor=north west, xshift=-0.2 \plotwidth, yshift=-0.8\plotheight}]
\addplot[teal, mark=|] table[x=eps, y=err] {figures/svm_enron_gradientCentered/uncorrupted.txt};
\addplot[red, mark=|] table[x=eps, y=err] {figures/svm_enron_gradientCentered/corrupted.txt};
\addplot[cyan, mark=|] table[x=eps, y=err] {figures/svm_enron_gradientCentered/l2.txt};
\addplot[violet, mark=|] table[x=eps, y=err] {figures/svm_enron_gradientCentered/loss.txt};
\addplot[magenta, mark=|] table[x=eps, y=err] {figures/svm_enron_gradientCentered/gradient.txt};
\addplot[orange, mark=|] table[x=eps, y=err] {figures/svm_enron_gradientCentered/gradientCentered.txt};
\addplot[black, mark=|] table[x=eps, y=err] {figures/svm_enron_gradientCentered/sever.txt};
\legend{uncorrupted, \noDef{}, \xCen{}, \Loss{}, \gUnc{}, \gCen{}, \sever{}}
\end{axis}

\begin{axis}[errplottriple,name=svm_enron_loss-app, align=center, at=(svm_enron_gradientCentered-app.north east),anchor=north west, xshift=\plotxspacing,ignore legend, title={SVM: Strongest attacks \\ against \Loss{} on Enron}]
\addplot[teal, mark=|] table[x=eps, y=err] {figures/svm_enron_loss/uncorrupted.txt};
\addplot[red, mark=|] table[x=eps, y=err] {figures/svm_enron_loss/corrupted.txt};
\addplot[cyan, mark=|] table[x=eps, y=err] {figures/svm_enron_loss/l2.txt};
\addplot[violet, mark=|] table[x=eps, y=err] {figures/svm_enron_loss/loss.txt};
\addplot[magenta, mark=|] table[x=eps, y=err] {figures/svm_enron_loss/gradient.txt};
\addplot[orange, mark=|] table[x=eps, y=err] {figures/svm_enron_loss/gradientCentered.txt};
\addplot[black, mark=|] table[x=eps, y=err] {figures/svm_enron_loss/sever.txt};
\end{axis}

\begin{axis}[errplottriple,name=svm_enron_sever-app, align=center, at=(svm_enron_loss-app.north east),anchor=north west, xshift=\plotxspacing,ignore legend, title={SVM: Strongest attacks \\ against \sever{} on Enron}]
\addplot[teal, mark=|] table[x=eps, y=err] {figures/svm_enron_sever/uncorrupted.txt};
\addplot[red, mark=|] table[x=eps, y=err] {figures/svm_enron_sever/corrupted.txt};
\addplot[cyan, mark=|] table[x=eps, y=err] {figures/svm_enron_sever/l2.txt};
\addplot[violet, mark=|] table[x=eps, y=err] {figures/svm_enron_sever/loss.txt};
\addplot[magenta, mark=|] table[x=eps, y=err] {figures/svm_enron_sever/gradient.txt};
\addplot[orange, mark=|] table[x=eps, y=err] {figures/svm_enron_sever/gradientCentered.txt};
\addplot[black, mark=|] table[x=eps, y=err] {figures/svm_enron_sever/sever.txt};
\end{axis}

\end{tikzpicture}
\caption{$\epsilon$ versus test error for baselines and \sever{} on the Enron spam corpus. 
The left and middle figures are the attacks which perform best against two baselines, while the right figure performs best against \sever{}. 
Though other baselines may perform well in certain cases, only \sever{} is consistently accurate. 
The exception is for certain attacks at $\epsilon = 0.03$, which, as shown in Figure~\ref{fig:spam-histogram}, require three rounds of outlier removal for any method to obtain reasonable test error -- in these plots, our defenses perform only two rounds.}
\label{fig:spam-results-app}
\end{figure}

\end{document}